\documentclass[]{fairmeta}

\usepackage{amsmath}
\usepackage{amssymb}
\usepackage{mathtools}
\usepackage{amsthm}
\usepackage{colortbl}
\usepackage{wrapfig}
\usepackage[most]{tcolorbox}
\definecolor{recgreen}{RGB}{220,245,220}
\newtcolorbox{recommendation}{colback=recgreen,colframe=recgreen!80!black,fonttitle=\bfseries,title=Takeaway,boxrule=0.5pt,arc=2pt,left=4pt,right=4pt,top=2pt,bottom=2pt}
\usepackage{enumitem}
\usepackage{booktabs}

\usepackage{algorithm}
\usepackage{algorithmic}

\newcommand{\mS}{\mathbf{m}_S}
\newcommand{\mF}{\mathbf{m}_F}

\theoremstyle{plain}
\newtheorem{theorem}{Theorem}[section]
\newtheorem{proposition}[theorem]{Proposition}

\newtheorem{corollary}[theorem]{Corollary}
\theoremstyle{definition}

\theoremstyle{remark}

\title{Don't Let Gains FADE: Breaking Down Policy Gradient Weights in RL}

\author[1,2]{Juliette Decugis}
\author[*]{Sean O'Brien}
\author[2]{Francis Bach}
\author[1]{Gabriel Synnaeve}
\author[1]{Taco Cohen}

\affiliation[1]{FAIR at Meta}
\affiliation[2]{Inria, Ecole Normale Supérieure}

\contribution[*]{Work done at Meta, now working at the University of California, San Diego}

\abstract{Reinforcement learning post-training dramatically improves LLM reasoning, but suffers from training instability and diversity collapse. Advantage functions offer an appealing fix: they reshape the training objective, reweight which rollouts drive learning, and are trivial to implement. Yet a proliferation of methods makes it unclear which advantage to use and when. We cut through the confusion with a unifying framework that decomposes any advantage into its positive and negative gradient mass ($m_S$, $m_F$) along two orthogonal axes. On the sign axis, imbalanced updates collapse either entropy or weight geometry. On the difficulty axis, hard-problem focus sharpens signal but costs sample size. Both trade-offs shift during training: exploration favors balance and hard focus; exploitation favors suppression and medium focus. This motivates FADE (Focal Advantage with Dynamic Entropy), a self-adapting advantage that reads training dynamics to schedule the gradient weight automatically. FADE reaches peak pass@$1$ $20$k steps earlier than the best static baseline at the 7B scale and $2$k steps earlier at the 32B , while achieving the best accuracy-diversity trade-off across all pass@$k$ on LiveCodeBench and AIME.}

\date{\today}
\correspondence{Juliette Decugis at \email{jdecugis@meta.com}}

\begin{document}

\maketitle

\begin{figure*}[t]
    \centering
    \includegraphics[width=\linewidth]{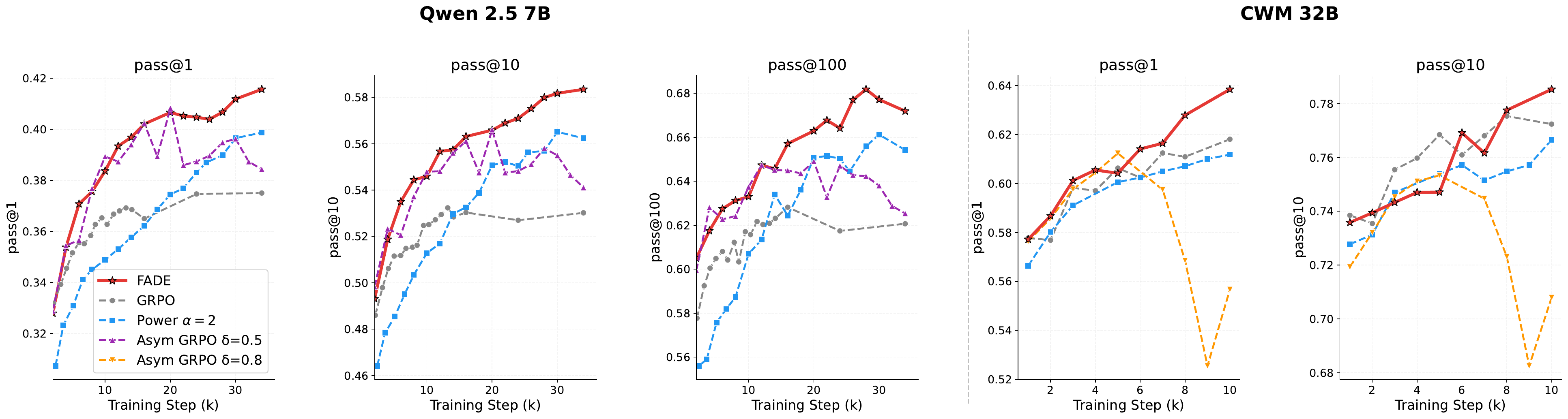}
    \caption{\textbf{FADE learns faster and better for all pass@k on LiveCodeBench v6} when compared to GRPO and the best static advantages power $\alpha$ and Asymmetric GRPO with the best $\delta$ per model.}
    \label{fig:pass_at_k_scheduler}
\end{figure*}

\section{Introduction}

Recent developments in reinforcement learning with verifiable rewards (RLVR) have unlocked rapid gains in language models' capabilities, especially in easily-verifiable domains like code generation and mathematics \citep{openai2024reasoning, Guo_2025,deepseek_grpo,drgrpo}. Although sparse rewards over long sequences make credit assignment challenging \citep{minsky1961steps,sutton1988learning,zhang2026reasoning}, pretrained LLMs provide strong behavioral priors \citep{gan2026neural,yan2025efficient} and fully resettable environments enable parallel rollout collection. These methods therefore follow a common recipe: sample multiple rollouts per problem, score them with a binary verifier, and update the policy via a weighted policy gradient \citep{Williams1992,schulman2015high}. The weights, commonly called ``advantage functions,'' rarely correspond to the classical advantage (value of an action minus the value of the average action in a state); they simply determine how much each rollout contributes to the gradient. To avoid this overloading, we use the term \emph{policy weights} throughout this paper.

Since GRPO \citep{deepseek_grpo}, which uses the mean reward as a baseline for updates, a host of alternative policy weights have appeared: DAPO \citep{yu2025dapo}, DR-GRPO \citep{drgrpo}, pass@$k$-based objectives \citep{tang2025optimizing,chen2025pass}, log-mean-exp weighting \citep{lme}, and more. Each claims improvements, yet comparing them is difficult because they differ along multiple axes simultaneously. Consider for example the pass@$8$ normalization \citep{tang2025optimizing}, which only upweights a correct rollout when it is the sole success in a batch of eight. This simultaneously shifts gradient mass toward hard problems, drops all negative gradient signal since incorrect rollouts receive zero weight, and reduces the overall gradient magnitude because most batches contain either zero or more than one success. Other methods such as Skew-R \citep{thrampoulidis2025advantage} keep the sign balance of GRPO $\mathbb{E}[A] = 0$ but emphasize high variance samples. When these methods under or outperform GRPO, it is unclear which change is responsible.

We argue that the confusion stems from conflating three orthogonal design axes. Similar to \citet{thrampoulidis2025advantage}, we decompose policy weights into positive $m_S$ and negative $m_F$ mass on the gradient (Section~\ref{sec:framework}) which are dependent on a prompt's solve rate $p$. We show policy weights can differe along:
\begin{enumerate}
    \item \textbf{Difficuly Axis}: whether gradient mass peaks on easy prompts (high $p$) or hard ones (low $p$);
    \item \textbf{Sign Axis}: whether the positive and negative masses are equal or not;
    \item \textbf{Scale Axis}: the overall magnitude of the gradient, which implicitely rescales the learning rate.
\end{enumerate}

We identify three trade-offs driven by the representational asymmetry between correct and incorrect trajectories:
\begin{itemize}[nosep]
    \item \textbf{Reinforcing successes collapses entropy.} Because correct solutions cluster tightly, amplifying them concentrates the policy onto a narrow mode, with the drift rate predictable from the sign ratio alone (Section~\ref{sec:entropy_collapse}).
    \item \textbf{Suppressing failures induces rank-1 update collapse.} Because failures are diverse and decorrelated, amplifying them drives the weight update toward a single suppression direction, progressively blocking multi-dimensional learning (Section~\ref{sec:rank1_collapse}).
    \item \textbf{Harder problems trade information for variance.} Focusing gradient mass on low-solve-rate prompts yields more informative updates, but at the cost of  more variance (Section~\ref{sec:shape}).
\end{itemize}
Since a fixed advantage cannot adapt to all three trade-offs during training, we propose \textbf{FADE} (Focal Advantage with Dynamic Entropy) which shapes its gradient weight based on the policy's past entropies and solve rates. It achieves fast early learning with sustained diversity and accuracy across model scales (7B, 32B) (Section~\ref{sec:scheduled-advantages}).

\section{Framework for Policy Weight Analysis}
\label{sec:framework}
We view the LLM as a policy $\pi_\theta$ that generates a trajectory of tokens $\tau := (a_1, \ldots, a_T)$ given a prompt $q$ with $\log \pi_\theta(\tau) = \sum_{t=0}^{T} \log \pi_\theta(a_t | q, a_{<t})$. In LLM post-training, the reward $r(\tau)$ is typically a single scalar assigned at the end of the trajectory (e.g., correctness of the final answer) which we maximize using the policy gradient estimator~\citep{Williams1992}: $ \mathbb{E}_{\tau \sim \pi_\theta}[r(\tau)] = \mathbb{E}_{\tau \sim \pi_\theta}[r(\tau) \nabla_\theta \log \pi_\theta(\tau)]$.

Compared to supervised learning, the model learns from its own generations reweighted by a verifier which can introduce noise. To reduce variance \citep{schulman2015high}, the reward is replaced by a weight function $W(\tau)$, originally the advantage $A(s_t, a_t) = Q(s_t,a_t) - V(s_t)$~\citep{baird1993advantage}, and more recently a variety of alternatives based on the average solve rate per problem (Table~\ref{tab:pg-mass-split-final}). We will show that balancing between positive and negative gradient mass is the key to stable, efficient and diverse RL at scale.

\subsection{Positive and Negative Gradient Mass}

We will assume that rewards are binary, with $r(\tau) \in \{0, 1\}$, and that the policy weight is a function of success/failure only.
In this setting, we can consider the set of successful and unsuccessful trajectories $S = r^{-1}(1)$ and $F = r^{-1}(0)$.
Since $r$ is deterministic and binary-valued, the joint distribution $\mathbb{P}(r = 1, \tau) = \mathbb{P}(r = 1 | \tau) \pi_\theta(\tau) = \mathbb{I}[ \tau \in S ] \pi_\theta(\tau)$, and similarly for $r = 0$. Hence the probability of success is given by $p = \sum_{\tau \in S} \pi_\theta(\tau)$, and failure $q = 1 - p = \sum_{\tau \in F} \pi_\theta(\tau)$ and we write the policy weight as:
\begin{equation}
    W(\tau) = w_S \cdot \mathbb{I}[\tau \in S] - w_F \cdot \mathbb{I}[\tau \in F].
\end{equation}

Adapting the notation from previous work \citep{thrampoulidis2025advantage}, we can write the policy gradient as
\begin{align}
    \nabla_\theta J = \;&w_S \cdot p \cdot \mathbb{E}\left[\nabla_\theta \log \pi_\theta(\tau) \;\middle|\; \tau \in S\right] - w_F \cdot q \cdot \mathbb{E}\left[\nabla_\theta \log \pi_\theta(\tau) \;\middle|\; \tau \in F\right].
    \label{eq:pos_neg_cond}
\end{align}

In practice, when doing online reinforcement learning we don't have access to the true solve rate $p$ but estimate via $G$ Monte Carlo rollouts from our policy $\pi_\theta$:
\begin{align}
    \nabla_\theta J \approx \;&\underbrace{w_S \cdot \bar{p}}_{\bar{m}_S} \;\underbrace{\frac{1}{|\bar{S}|}\sum_{\tau \in \bar{S}} \nabla_\theta \log \pi_\theta(\tau)}_{\bar{\nabla}_S} - \underbrace{w_F \cdot \bar{q}}_{\bar{m}_F} \;\underbrace{\frac{1}{|\bar{F}|}\sum_{\tau \in \bar{F}} \nabla_\theta \log \pi_\theta(\tau)}_{\bar{\nabla}_{F}},
    \label{eq:general_pos_neg}
\end{align}
where $\bar{S}$ and $\bar{F}$ are the positive and negative trajectories in $G$, $\bar{p} = |\bar{S}| / |G|$ and $\bar{q} = |\bar{F}| / |G|$ are the empirical estimates of the success and failure probability, respectively.
Here $\bar{\nabla}_S$, $\bar{\nabla}_F$ are the average log-probability gradients over successful and failed trajectories in the batch.
For notational convenience, we will often leave out the bar when discussing estimates when it can be inferred from context.

\begin{figure*}
    \centering
    \includegraphics[width=\linewidth]{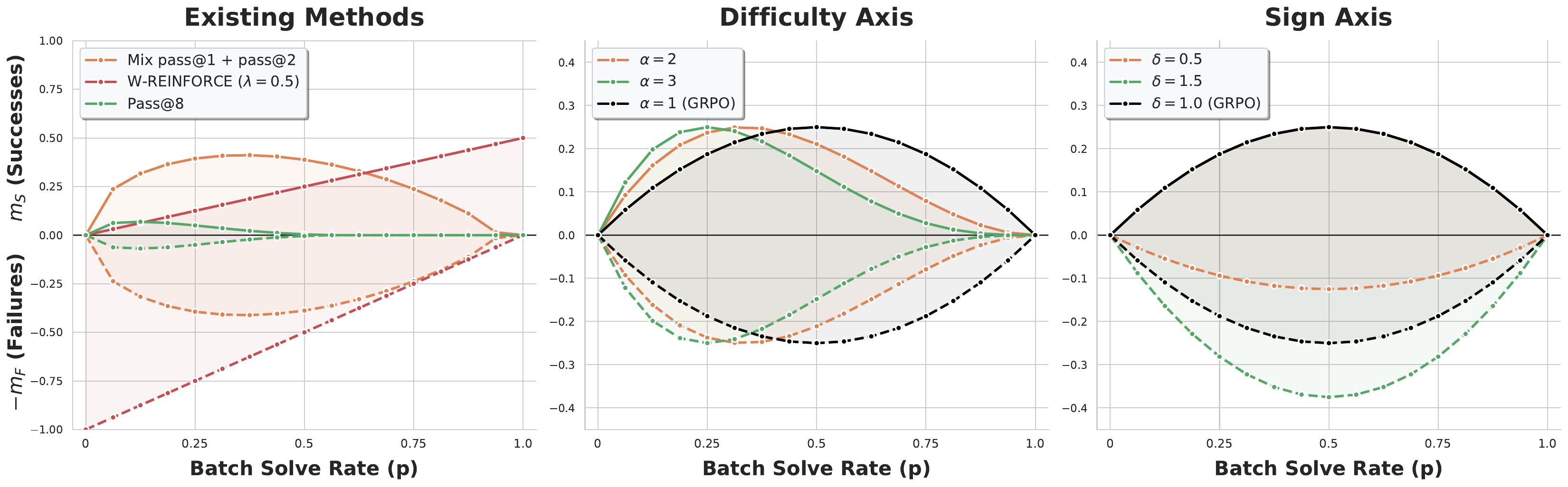}
    \caption{\textbf{Effect of Policy Weights on Batch Level Gradients} based on the estimated solve rate $p$. \textbf{(Left)} Existing methods: mixing pass@$k$ \citet{chen2025pass} and W-REINFORCE \citet{w_reinforce} modify the sign, difficulty focus and scale of advantages. We design weights that isolate \textbf{(Middle)} the difficulty axis with the Power $\alpha$ series $p(1-p)^\alpha$ and \textbf{(Right)} sign axis with the AsymGRPO series where $\frac{p(1-p)}{\delta}$ for failed trajectories.}
    \label{fig:weigths_advantages}
\end{figure*}

\subsection{GRPO as an Example}

Consider a group of $G$ rollouts per prompt, we take as policy weight the reward minus the mean (mean only GRPO by GRPO~\citep{drgrpo}). With $r \in \{0,1\}$, we have $\mathbb{E}[A] = \bar{p}$ and successful trajectories receive $A_s = 1{-}\bar{p}$, failed ones $A_f = 0-\bar{p}$, yielding
{\allowdisplaybreaks
\begin{align}
    \nabla_{\mathrm{GRPO}}
    &= \sum_{\tau \in \bar{S}} A_s \nabla \log \pi_\theta(\tau) + \sum_{\tau \in \bar{F}} A_f \nabla \log \pi_\theta(\tau) \nonumber \\
    &= (1{-}p)\!\sum_{\tau \in \bar{S}} \nabla \log \pi_\theta(\tau)
       - p\!\sum_{\tau \in \bar{F}} \nabla \log \pi_\theta(\tau) \\
    &= |\bar{S}|(1{-}p)\,\underbrace{\tfrac{1}{|\bar{S}|}\!\sum_{\tau \in \bar{S}} \nabla \log \pi_\theta(\tau)}_{\bar{\nabla}_S} - |\bar{F}|\,p\,\underbrace{\tfrac{1}{|\bar{F}|}\!\sum_{\tau \in \bar{F}} \nabla \log \pi_\theta(\tau)}_{\bar{\nabla}_F} \\
    &\approx Gp(1{-}p)\,\bar{\nabla}_S - G(1{-}p)\,p\,\bar{\nabla}_F \nonumber \\
    &= Gp(1{-}p)\bigl[\bar{\nabla}_S - \bar{\nabla}_F\bigr].
    \label{eq:grpo_example}
\end{align}}

Mean based GRPO reweights gradients by the estimated variance of rewards \citep{suk2025optimization}. It focuses on medium difficulty problems since $\arg \max_p p(1-p) = \frac{1}{2}$ (Figure~\ref{fig:weigths_advantages}). Similarly many recent policy weights depend only on the average solve rate per batch so we can define their positive and negative mass as functions of $\bar{p}$ (see examples in Table~\ref{tab:pg-mass-split-final}, complete derivations in Appendix~\ref{sec:weight_derivations}). We distinguish sign-balanced advantages ($m_S = m_F$) and sign-biased methods ($m_S \neq m_F$).

\begin{table*}[ht!]
\centering
\caption{\textbf{Weight functions as positive $m_S$ vs.\ negative mass $m_F$} on our policy gradients where our final update consists of $m_S \cdot \bar{\nabla}_S - m_F \cdot \bar{\nabla}_F$. We assume binary rewards for successful vs. failed trajectories and use $r_i \in \{0, 1\}$ for simplicity (one can generalize to other ranges with a multiplier) so $\mathbb{E}[r] = \hat{p}$. Notation: expected number of correct $p$ and incorrect $q:=1-p$ samples per batch.}
\label{tab:pg-mass-split-final}
\footnotesize
\setlength{\tabcolsep}{4pt}
\renewcommand{\arraystretch}{1.15}
\definecolor{ourscolor}{RGB}{255,247,200}
\resizebox{\textwidth}{!}{%
\begin{tabular}{@{}l l cc@{}}
\toprule
\textbf{Method} & \textbf{Weight} $W(\tau)$ & $m_S$ & $m_F$ \\
\midrule
\rowcolor[gray]{0.92} \multicolumn{4}{@{}l}{\textbf{Sign-balanced} ($m_S = m_F$)} \\
GRPO \citep{deepseek_grpo} & $\frac{r_i - \bar{p}}{\sigma_R}$ & \multicolumn{2}{c}{$\sqrt{pq}$} \\
Dr. GRPO \citep{drgrpo} & $r_i - \bar{p}$ & \multicolumn{2}{c}{$pq$} \\
RLOO \citep{ahmadian2024back} & $r_i - \frac{1}{G-1}\sum_{j\neq i} r_j$ & \multicolumn{2}{c}{$\frac{G}{G-1} pq$} \\
Skew-R \citep{thrampoulidis2025advantage, team2025kimi} & $(r_i - \bar{p})\frac{r_i - \bar{p}}{\sigma_R}$ & \multicolumn{2}{c}{$pq \sqrt{pq}$}\\
Binary Contrastive \citep{greensmith2004variance} & $[\mathbf{1}_{r = 1} {-} \frac{\bar{p}}{1-\bar{p}} \mathbf{1}_{r = 0} ]$ & \multicolumn{2}{c}{$p$}\\
Power Norm \citep{andrychowicz2020matters} & $\frac{r_i - \bar{p}}{[\bar{p}(1-\bar{p})]^\gamma}$ & \multicolumn{2}{c}{$p^{1-\gamma}q^{1-\gamma}$} \\
Softmax \citep{deepseek_grpo} & $\mathrm{softmax}_\beta(r) - \frac{1}{G}$ & \multicolumn{2}{c}{$\frac{(e^{\beta} - 1)pq}{1+p(e^{\beta} - 1)}$} \\
MaxRL \citep{tajwar2026maximumlikelihoodreinforcementlearning} & $\frac{r_i - \bar{p}}{\bar{p}}$ & \multicolumn{2}{c}{$q$} \\
Analytical pass@$k$ \citep{chen2025pass} & $\frac{q^{k-1}(r_i - \bar{p})}{\sigma_k}$ & \multicolumn{2}{c}{$p\sqrt{\frac{q^k}{1 - q^k}}$}\\
Mix pass@1/pass@$k$ \citep{chen2025pass} & $p \hat{A}_{pass@k} + q \hat{A}_{pass@1}$ & \multicolumn{2}{c}{$p \left( p \hat{A}_{pass@k} + q \hat{A}_{pass@1} \right)$}\\
F-GRPO \citep{plyusov2026fgrpodontletpolicy} & $(1-p)^{\alpha} \frac{r_i - \bar{p}}{\sigma_R}$ & \multicolumn{2}{c}{$(1-p)^{\gamma} \sqrt{p(1-p)}$}\\
\addlinespace
T2T \citep{t2t} & $\tfrac{(r_i{-}\bar{p})(1 {-} \alpha p \bar{L}_S {-} \alpha q \bar{L}_F)}{\sigma_R}$ & \multicolumn{2}{c}{$\sqrt{pq}(1 {-} \alpha p \bar{L}_S {-} \alpha q \bar{L}_F)$} \\
\addlinespace
\rowcolor{ourscolor} Power $\alpha$ (ours) & $(r_i{-}\bar{p}) q^{\alpha -1}$ & \multicolumn{2}{c}{$pq^{\alpha}$} \\
\rowcolor{ourscolor} Positive Power $\alpha$ (ours) & $ (r_i - \bar{p}) \cdot \bar{p}^{1 + \mathbf{1}[r < \bar{p}]} \cdot (1 - \bar{p})^{\alpha}$ & \multicolumn{2}{c}{$p^{\alpha}q$} \\
\midrule
\rowcolor[gray]{0.92} \multicolumn{4}{@{}l}{\textbf{Sign-biased} ($m_S \neq m_F$)} \\
REINFORCE \citep{policygradient} & $r_i$ & $p$ & $q$ \\
Constant Baseline \citep{policygradient} & $r_i - C$ & $(1-C)p$ & $Cq$ \\
Symmetric clip \citep{ppo} & $\text{clip}(r_i - \bar{p}, -c, c)$ & $p\min(c, q)$ & $q\min(c, p)$ \\
W-REINFORCE \citep{w_reinforce} & $\min(\lambda r, r)$ & $\lambda p$ & $q$ \\
Pass@$k$ \citep{tang2025optimizing} & $\max_k(r) - \max_k(r_{-i})$ & $kpq^{k-1}$ & $0$ \\
Quantile baseline \citep{dabney2018distributional} & $r_i - \mathbf{1}_{\tau > q}$ & $p\,\mathbf{1}_{\tau \leq q}$ & $q\,\mathbf{1}_{\tau > q}$ \\
Logmeanexp \citep{lme} & $\mathrm{lme}_\beta(r) - \mathrm{lme}_\beta(r_{-i})$ & $\tfrac{p e^\beta}{p e^\beta + q}$ & $\tfrac{q}{p e^\beta + q}$ \\
MC-GRPO \citep{kim2026mc} & $r_i - \mathrm{med}(\mathbf{r})$ & $p\,\mathbf{1}[p \le \tfrac{1}{2}]$ & $q\,\mathbf{1}[p > \tfrac{1}{2}]$ \\
CoRPO \citep{CorPO} & $r_i - \max(r_{\min}, \bar{p})$ & $p(1{-}r_{\min})$ & $qr_{\min}$ \\
AsymRL \citep{arnal2026asymmetric} & $r_i - (\bar{p} + \delta)$ & $p(q {-} \delta)$ & $q(p {+} \delta)$ \\
HA-DW \citep{ha_dw} & $A \cdot \lambda \exp\left(-\operatorname{sgn}(\hat{A})\operatorname{sgn}(\hat{r} - C_t) \left| \hat{r} - C_t \right|\right)$ & $f(C_t, p)qp$ & $qp$ \\
ReLU \citep{srinivasan2018actor} & $\max(0, r_i - \bar{p})$ & $pq$ & $0$ \\
\addlinespace
\rowcolor{ourscolor} Asym.\ power $\alpha$ (ours) & $(r {-} \bar{p}) [ \mathbf{1}_{r \ge \bar{p}} q^{\alpha_s - 1} {+} \mathbf{1}_{r < \bar{p}} p^{\alpha_n - 1} ]$ & $pq^{\alpha_s}$ & $p^{\alpha_f}q$ \\
\rowcolor{ourscolor} Asym.\ GRPO (ours) & $(r {-} \bar{p}) [ \mathbf{1}_{r \ge \bar{p}} {+} \tfrac{1}{\delta} \mathbf{1}_{r < \bar{p}} ]$ & $pq$ & $\tfrac{pq}{\delta}$ \\
\bottomrule
\end{tabular}}
\end{table*}

\section{Experimental Setup}
\label{sec:experiments}

The framework above shows that each advantage function induces a different balance of positive and negative gradient mass, but it does not predict which balance leads to the best policies. To answer this, we compare advantage functions along different weight axes within the PPO clipping framework~\citep{ppo} using two model sizes: the Qwen~2.5~7B~\citep{qwen2025qwen25technicalreport}, and the Code World Model 32B SFT checkpoint~\citep{cwm} (CWM~32B). All evaluations use temperature~$1.0$ and top-p~$1.0$ to promote sampling diversity, at reasoning budgets from 8k to 30k tokens.

\textbf{Reasoning SFT.} The Qwen~2.5 7B model lacks chain-of-thought capability, so we first fine-tune it on a mix of reasoning chains including OpenCodeReasoning-2~\citep{opencodereasoning2} and OpenMathReasoning~\citep{openmathreasoning} generated by DeepSeek-R1~\citep{Guo_2025} using the same \texttt{<think>} tags formatting. CWM~32B is already trained to produce long chain-of-thought responses so we skip the supervised fine-tuning (SFT).

\textbf{RL training.} We train with binary rewards (format and answer correctness) on 25{,}000 competitive programming problems including the CodeContest~\citep{li2022alphacode} and TACO~\citep{li2023tacotopicsalgorithmiccode} training sets. The dataset is fixed throughout training and epoched over: for Qwen~2.5~7B we use the full problem mix (initial solve rate $\approx 0.3$), while for CWM~32B we filter out easy problems to start at a similar difficulty frontier ($\approx 0.5$ solve rate). In this work we do not modify the training data distribution and instead focus on maximizing gradient learning through the policy weight given a fixed dataset. See Appendix~\ref{sec:training_app} for infrastructure details.

Starting from the same SFT checkpoint, we train models with different advantage functions and analyze policies along four complementary axes: accuracy (pass@$1$~\citep{chen2021evaluating}), diversity (pass@$100$), reasoning generalization on AIME~2024/2025 math competitions~\citep{openai2024reasoning} a task unseen during RL training, and learning speed. Full results across all methods, models, and benchmarks are in Tables~\ref{tab:pass_at_k_7b}~and~\ref{tab:pass_at_k_32b} in Appendix~\ref{sec:more_results}.

\section{Where to learn from? Balancing gradient signs and problem difficulty}
\label{sec:upweight}

Should we focus on reinforcing successes or suppressing failures within a batch? And on easy, medium, or hard problems across batches? During online RL, we are simultaneously doing gradient descent to downweight failed trajectories and gradient ascent to upweight successful trajectories. We analyze how to balance reinforcing successes (Section~\ref{sec:entropy_collapse}), suppressing failures (Section~\ref{sec:rank1_collapse}), and adjusting the focus per difficulty (Section~\ref{sec:shape}) to reach $+14\%$ pass@$1$ in $2 \times$ less training steps over the default reward weight (REINFORCE \citet{sutton1988learning}).

\subsection{Reinforcing Successes Collapses Entropy}
\label{sec:entropy_collapse}

\begin{recommendation}
Entropy collapse is proportional to the sign ratio and success rates $\Rightarrow$ Bias toward successes only at low solve rates.
\end{recommendation}

We introduce \textbf{AsymGRPO}, a single-parameter variant of GRPO that keeps the same positive mass $m_S = p(1-p)$ and rescales the negative mass by $\delta$, $m_F = \frac{p(1-p)}{\delta}$. With this $\delta$ knob we can either amplify or downweight failures; with $\delta = 1$ we recover standard mean based GRPO. Because correct solutions are few and similar, amplifying them ($\delta > 1$) rapidly concentrates the policy onto a narrow set of actions. Adapting the analysis of \citet{cui2025entropy} (Appendix~\ref{sec:gradient_ratios}), a first-order Taylor expansion of the entropy after a gradient step with learning rate $\eta$ is:
\begin{equation}
    \Delta \mathcal{H} \approx \eta \left[\underbrace{(m_S - m_F)\,\mathcal{H}}_{\text{entropy drift}} -\operatorname{Cov}(A, \log \pi_\theta)\right] + O(\eta^2).
    \label{eq:entropy_drift}
\end{equation}
The covariance term drives entropy loss for all methods it is the standard mechanism studied in prior work \citep{cui2025entropy}. The drift term, however, is unique to sign-imbalanced advantages where $m_S \neq m_F$ and introduces entropy-proportional feedback. Under AsymGRPO this feedback is controlled entirely by $\delta$:
\begin{itemize}
    \item $\delta = 1$: drift is zero, entropy collapses through the covariance term alone, with no mechanism to recover.
    \item $\delta > 1$: drift is positive ($m_S > m_F$), accelerating collapse beyond what the covariance predicts. Entropy loss feeds into itself since the drift is proportional to $\mathcal{H}$.
    \item $\delta < 1$: drift is negative ($m_S < m_F$), providing a restoring force that slows entropy loss.
\end{itemize}
Empirically we find entropy correlates tightly with the advantage ratio $\frac{A_s}{A_f} := \frac{m_S \times p}{m_F \times q}$ (Figure~\ref{fig:entropy}) so it can be steered by changing the training data (adjusting average solve rates $p$) or by rebalancing gradient mass between reinforcing and suppressing ($m_S$ vs. $m_F$). Since LLMs are prone to entropy collapse in online RL \citep{park2025clip, khatri2025artscalingreinforcementlearning}, $\delta < 1$ delays overfitting and accelerates exploration at the start of training (Figure~\ref{fig:entropy_ratios}). However, as we show next, pushing $\delta$ too far below $1$ trades one pathology for another.

\begin{figure}[t]
    \centering
    \includegraphics[width=\linewidth]{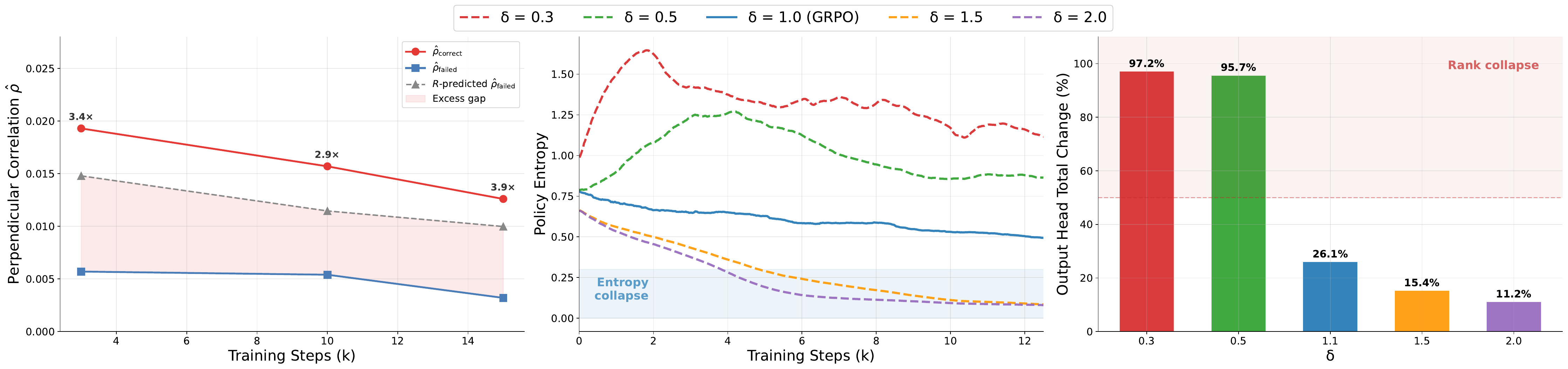}
    \caption{Scaling negative gradients of mean GRPO ($\delta < 1$: up, $\delta > 1$: down) on Qwen 2.5 7B. \textbf{(Left)} For AsymGRPO with $\delta = 0.5$, correct samples are far more correlated than failures, whose pairwise residual correlation $\rho_\perp$ is $2\times$ below the value predicted by the higher-rank residual $R = \mathbb{E}[A_i v_i \otimes h_i^\perp]$ (Appendix~\ref{sec:weight_space_details}). \textbf{(Middle)} Over-reinforcing ($\delta > 1$) causes entropy collapse; \textbf{(Right)} over-suppressing ($\delta < 1$) causes rank-1 update collapse.}
    \label{fig:entropy_ratios}
\end{figure}

\begin{figure}[t]
    \centering
    \includegraphics[width=\linewidth]{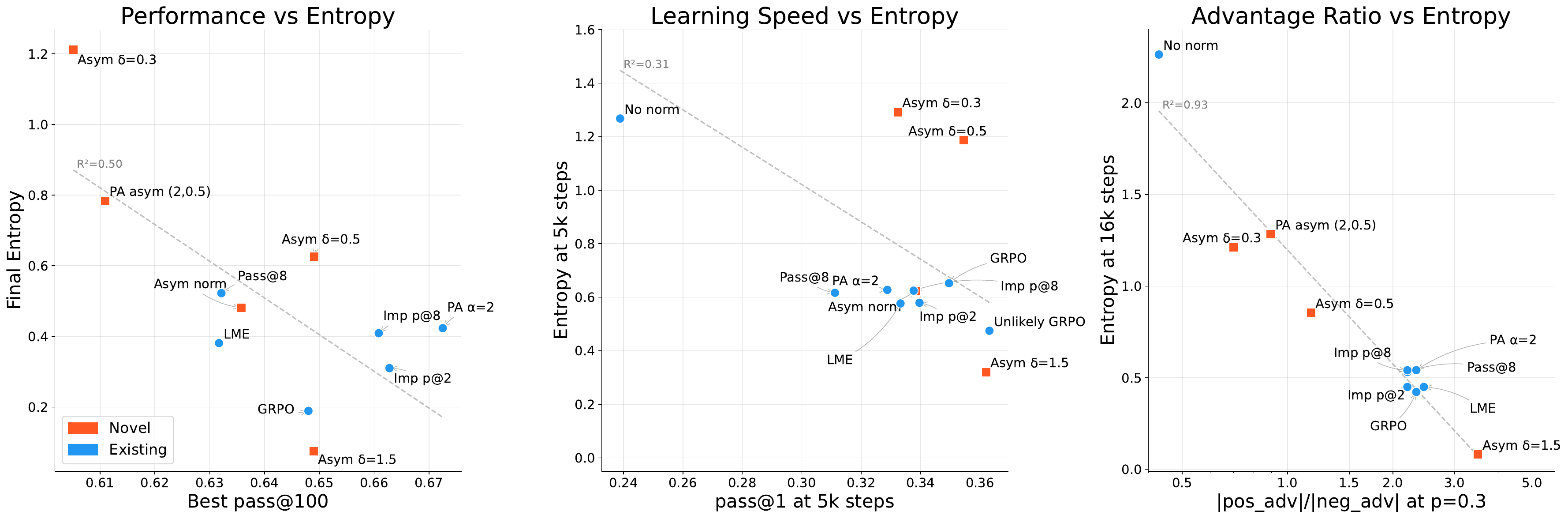}
    \caption{The policy's entropy is not correlated with pass@$100$ or learning speed; instead, it is proportional to the advantage sign: $m_S \cdot p \,/\, m_F \cdot (1-p)$ with $m_S$, $m_F$ the mass of successful and failed trajectories respectively, $p$ our solve rate.}
    \label{fig:entropy}
\end{figure}

\subsection{Suppressing failures induces rank-1 update collapse}
\label{sec:rank1_collapse}

\begin{recommendation}
Failure bias learns fast but collapses the update to rank-1. $\Rightarrow$ Reserve failure bias for late-stage exploitation.
\end{recommendation}

Failure-biased methods (our own AsymGRPO with $\delta < 1$, Asymmetric Power $\alpha$ and existing methods such as AsymNorm \citet{arnal2026asymmetric}) maintain high entropy and show fast early reward and pass@1. Yet the gains are fragile: reward eventually degrades, answer diversity (pass@$100$) drops, and GRPO catches up (Tables~\ref{tab:pass_at_k_7b} and \ref{tab:pass_at_k_32b}). What makes learning from failures unreliable?

Analyzing the weight change $W_{\Delta} = W_\mathrm{rl} - W_\mathrm{sft}$ throughout training (Appendix~\ref{sec:weight_space_details}), we find that all methods start rank-1 dominant in the output weight change (using SVD analysis on $W_{\Delta}$). Sign-balanced and success-biased methods gradually escape (Figure~\ref{fig:svd_evolution}), while failure-biased methods ($\delta < 1$) remain locked in rank-1, with RL change concentrating almost entirely in the output head (up to 90\% of $\|W_\Delta\|_2$ at 7B, Table~\ref{tab:rank1_main}, Figure~\ref{fig:entropy_ratios}). We call this the \emph{rank-1 funnel}: it enables fast early exploitation but progressively blocks further learning as the model can only update along one axis, ultimately reducing diversity (pass@$100$) and out-of-distribution generalization on AIME~2024/2025 (Tables~\ref{tab:pass_at_k_7b} and \ref{tab:pass_at_k_32b}).

What causes this rank-1 funnel? We formalize this by decomposing the output-head of the RL change $W_\Delta$ gradient into a rank-1 signal and a higher-rank residual (see details in Appendix~\ref{sec:proof_rank1}):
\begin{equation}\label{eq:signal_noise}
  W_\Delta
  \;=\; \sum_{i=1}^{N} A_i\, v_i \otimes h_i
  \;=\;
  \underbrace{\Bigl(\sum_{i=1}^N A_i \alpha_i\, v_i\Bigr) \otimes u_1}_{
    M_1\;\text{(rank 1)}}
  \;+\;
  \underbrace{\sum_{i=1}^N A_i\, v_i \otimes h_i^\perp}_{
    M_2\;\text{(higher rank)}}.
\end{equation}
We measure collapse via $r_1 = \frac{\sigma_1^2(W_\Delta)}{\|W_\Delta\|_F^2}$: the fraction of the update's energy in its leading singular direction, which arises from two conditions:
\begin{enumerate}
    \item \textbf{Per-step:} We project each hidden state $h_i$ onto the leading shared direction $u_1$ and measure how correlated the residuals $h_i^\perp = h_i - (h_i^\top u_1) u_1$ are across samples. The average pairwise correlation of these residuals, $\rho_{\perp}$, controls the higher-rank term $M_2$: when $\rho_{\perp} \to 0$ the residuals are mutually uncorrelated and their weighted sum cancels out, so $M_2$ vanishes and $r_1 \to 1$ (Appendix~\ref{sec:proof_rank1}). Empirically, failure hidden states are far more diverse than correct ones ($\rho_{\perp,\text{fail}} \ll \rho_{\perp,\text{correct}}$, Appendix~\ref{sec:weight_space_details}): summing over many uncorrelated failure residuals leaves only the shared ``suppress non-code tokens'' direction $u_1$. Setting $\delta < 1$ amplifies this noisy failure mass while downweighting the correlated successes, pushing $M_2$ toward zero and $r_1$ toward~$1$ (Figure~\ref{fig:entropy_ratios}).
    \item \textbf{Across steps:} if each per-step gradient aligns in the same direction, the accumulated $W_\Delta$ is itself rank-1. Empirically, cosine similarity between successive batch gradients converges to $\approx 1$ from step $600$ on for negatively biased methods, turning per-step dominance into cumulative collapse. In contrast, sign-balanced methods have lower alignment or switch rank-1 direction (Appendix~\ref{sec:dyn_grpo}).
\end{enumerate}

\begin{figure}[ht!]
    \centering
    \includegraphics[width=0.5\columnwidth]{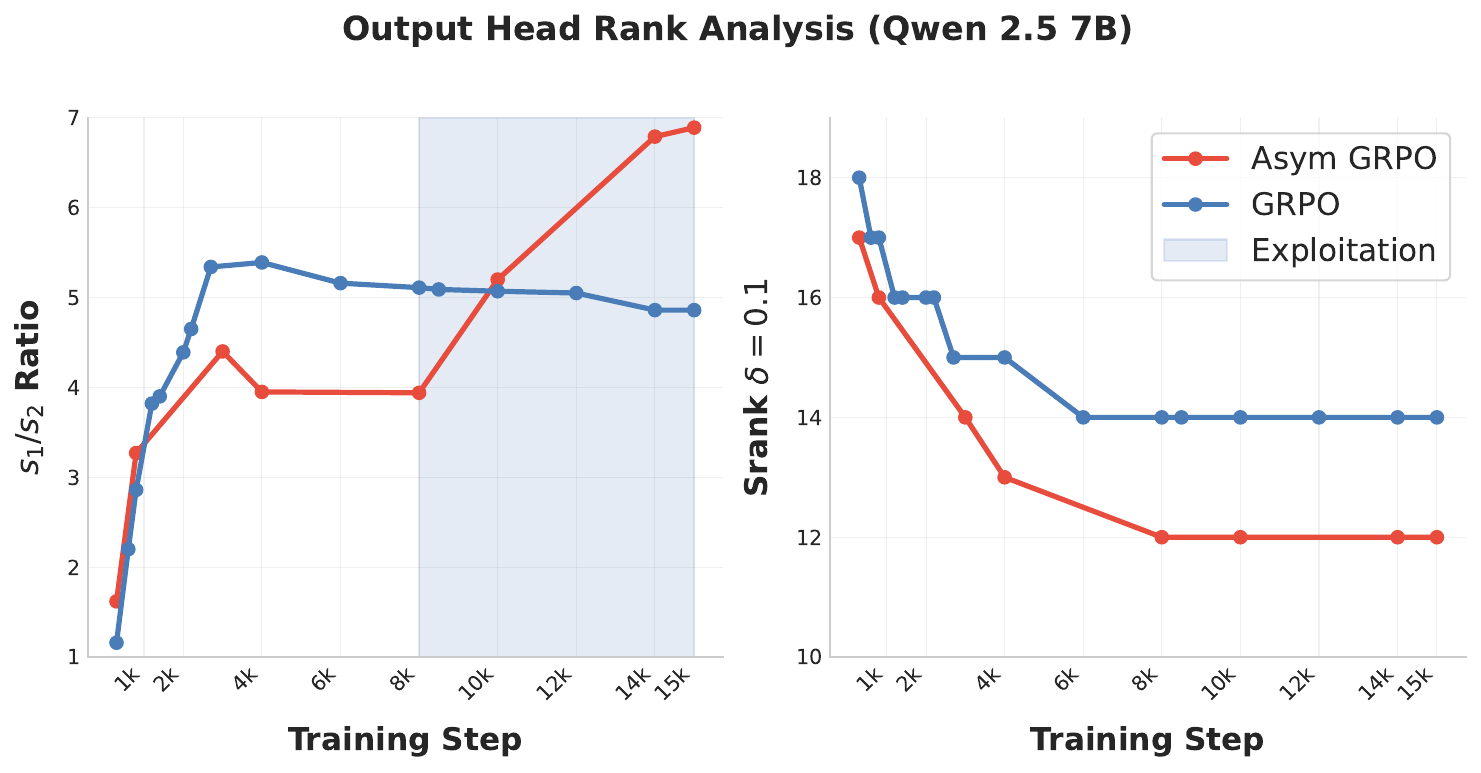}
    \includegraphics[width=0.43\columnwidth]{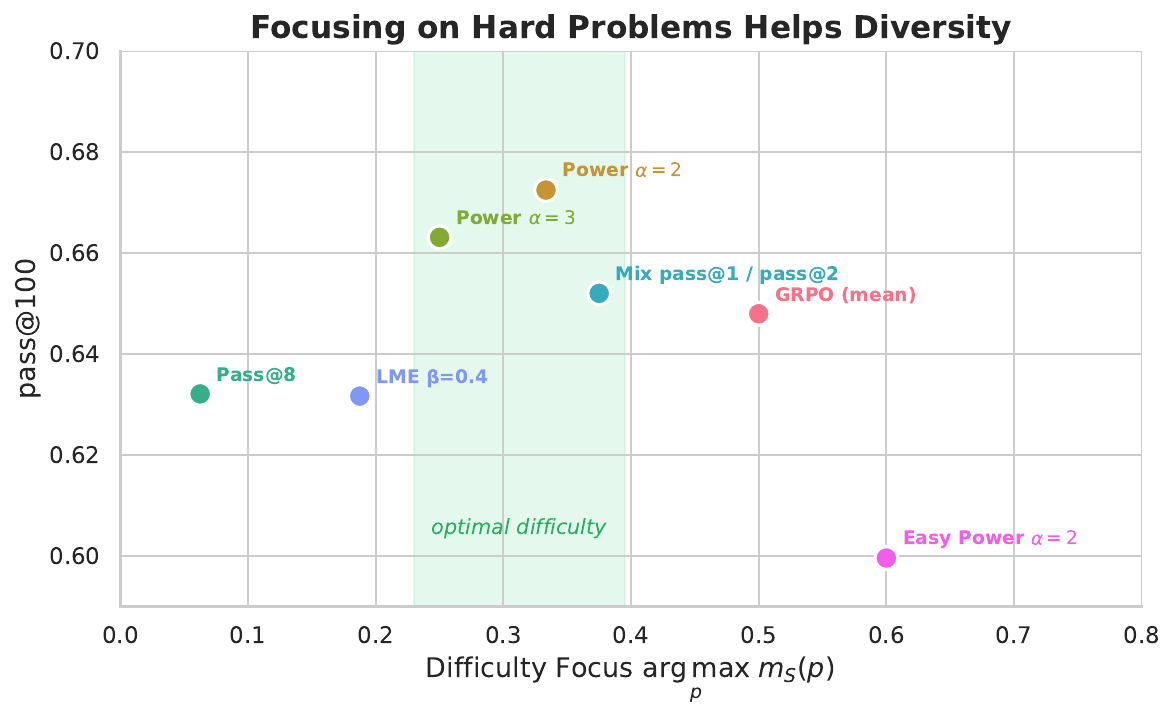}
    \caption{\textbf{(Left)} Failure-biased methods converge to a lower-rank weight update than sign-balanced methods as shown by the singular-value ratio $s_1/s_2$ of the ouput head of $W_\Delta = W_{sft} - W_{rl}$, and its $\mathrm{srank}_{\delta} = \min \{ k : \sum_{i=1}^{k} s_i \,/\, \sum_{i=1}^{d} s_i \geq 1 - \delta \}$ \citep{kumarimplicit}. \textbf{(Right)} Pass@100 on LCB v6 as a function of the difficulty focus in policy weights.}
    \label{fig:svd_evolution}
\end{figure}

\subsection{Harder problems trade information for variance}
\label{sec:shape}

\begin{recommendation}
Hard problems yield more informative gradients but fewer usable batches $\Rightarrow$ Focus on hard problems early, relax toward medium difficulty as solve rates rise.
\end{recommendation}

We showed biasing batch updates towards reinforcing or supressing trajectories can lead to overexploitation either in the policy or weight space. For now, we only reweighted successes and failures based on the weight sign, but in practice some batches deserve more gradient attention than others. For example, our policy should explore new ways of reasoning to solve harder problems rather than overfit to the easy ones. Approximating the $k$ vs.\ pass@$k$ curve as $\text{pass@}k = \exp(-a(k+c)^{-b})$ \citep[see also Appendix~\ref{sec:proof_theorem}]{brown2024largelanguagemonkeysscaling, schaeffer25a}, we also notice the difficulty axis is relevant to preserve diversity. All methods lose scaling slope $b$ during training except pass@$k$-driven weights (evaluations done on LCB v6, Figure~\ref{fig:fitted_coefs}, Figure~\ref{fig:scaling_laws_rl}), which also achieve the highest final pass@$100$ for competitive programming and math at both model scales (Table~\ref{tab:rl-results-454}, \ref{tab:pass_at_k_32b}). To understand how this difficulty axis shapes learning, we introduce \textbf{Power $\alpha$}: $m_S = m_F = C \cdot pq^{\alpha}$ ($C$ normalizes the peak to match the GRPO scale), a smooth approximation of pass@$k$ advantages \citep{chen2025pass} (Figure~\ref{fig:weigths_advantages}) inspired by the focal loss of \citet{lin2017focal}. Higher $\alpha$ biases the gradient towards harder problems. Its easy counterpart uses $C \cdot p^{\alpha}q$.

\begin{figure*}
    \centering
    \includegraphics[width=0.8\linewidth]{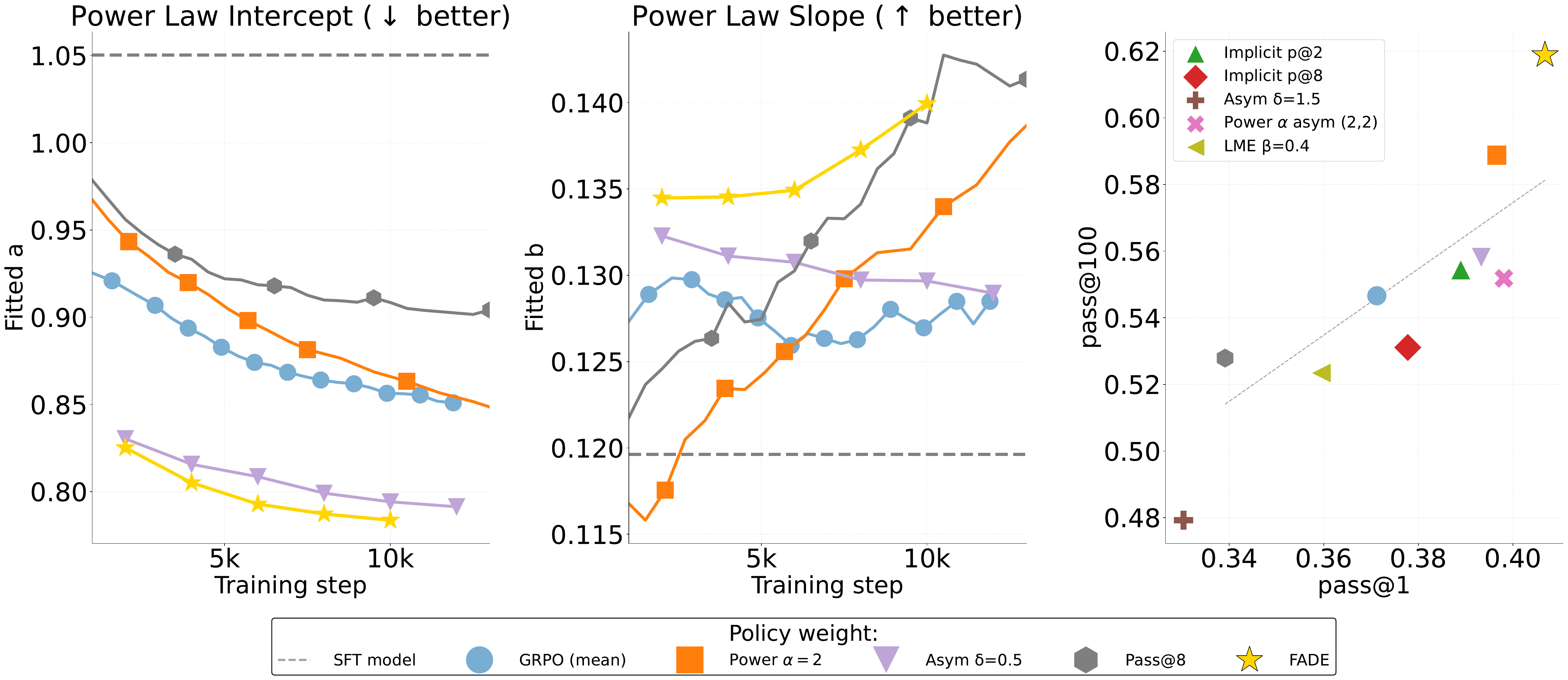}    \caption{\textbf{Accuracy vs. Diversity per Policy Weight} We estimate the $k$ vs.\ pass@$k$ curve by a shifted power law $G(k) = \exp(-a(k+c)^{-b})$ where $a$ controls the uniform level and $b$ controls the steepness. \textbf{(Left)} Evolution during training of these coefficients where diverse RL should increase $b$ or at least maintain $b$ and lower $a$. \textbf{(Right)} pass@100 vs. pass@1 where diverse accurate advantages maximize both.}
    \label{fig:fitted_coefs}
\end{figure*}

\begin{wrapfigure}{r}{0.4\textwidth}
  \centering
  \vspace{-10pt}
  \includegraphics[width=0.28\textwidth]{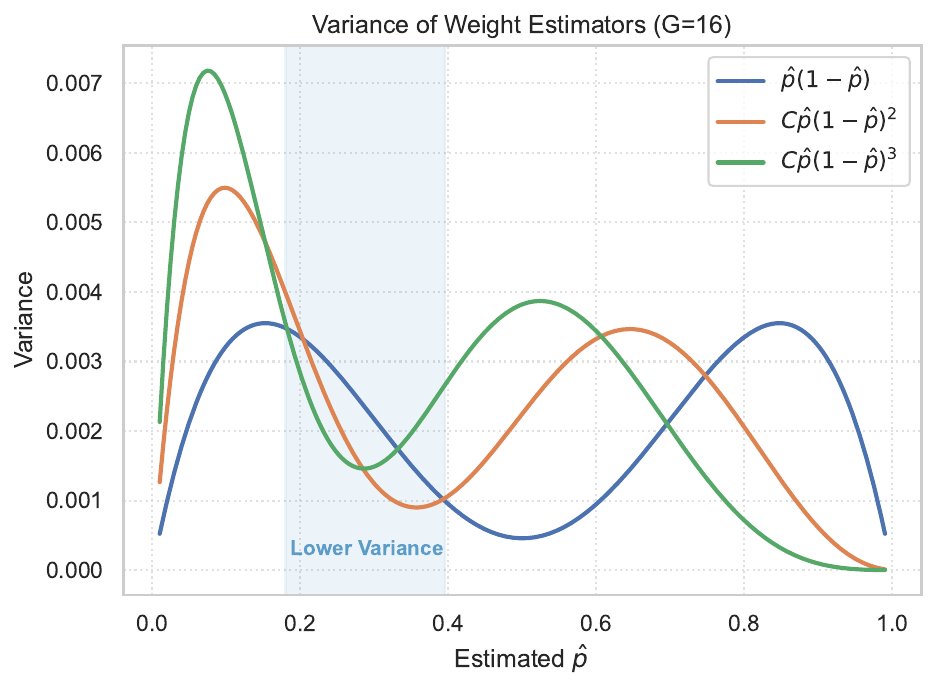}
  \caption{Variance of the gradient weight $p(1-p)^\alpha$ at different solve rates for $16$ MC rollouts.}
  \label{fig:weight_variance}
  \vspace{-10pt}
\end{wrapfigure}

Sweeping $\alpha$ values, we find focusing on solve rates $\hat{p} \in [0.3, 0.5]$ (Figure~\ref{fig:svd_evolution}) maximizes both pass@$1$ and pass@$100$. We analyze why this optimal difficulty exists. Across batches, the most valuable gradients at a solve rate $p$ are the ones with high signal $s(p)$, low variance $\mathrm{Var}(w(\hat{p})\hat{g})$, and high frequency. Since we estimate the empirical success rate $\bar{p}$ via $G$ Monte Carlo rollouts, the variance of the weight $\bar{w} = w(\hat{p})$ is a random variable. Using the Delta method, we can estimate the total per-update noise as:
\begin{equation}
    \mathrm{Var}(w(\hat{p})\hat{g}) \approx w(p)^2 v_0(p) + s(p)^2 \mathrm{Var}(w(\hat{p}))
\end{equation}
where $v_0(p) \propto \frac{1}{G\,p(1{-}p)}$ is the raw gradient variance (which blows up when successes or failures are rare see Figure~\ref{fig:weight_variance}), and $\mathrm{Var}(w(\hat{p})) \approx [w'(p)]^2 \frac{p(1-p)}{G}$ is the variance introduced by weight estimation (Appendix \ref{sec:weight_variance}). Treating updates as independent, we define the per-prompt learning quality $q(p,w) = s(p)^2 / {Var}(w(\hat{p}))$, the signal-to-noise ratio of the weighted gradient at solve rate $p$. The total update quality over the difficulty distribution $f(p)$ is:
\begin{equation}\label{eq:snr_decomp}
  \mathcal{Q}(w) = \underbrace{N_{\mathrm{eff}}}_{\text{effective count}} \times \underbrace{\mathbb{E}_{p \sim f}\!\left[q(p,w)\right]}_{\text{per-sample quality}} = N_{\mathrm{eff}} \times \mathbb{E}_{p \sim f} \!\left[\frac{s(p)}{\mathrm{Var}(w(\hat{p})\hat{g})}\right]
\end{equation}
where $N_\mathrm{eff} = M\,(\mathbb{E}[w])^2/\mathbb{E}[w^2] \le M$ is Kish's effective sample size. Increasing $\alpha$ selectively filters out easy prompts, which can raise $q(p,w)$ at the mode but reduces $N_\mathrm{eff}$.

Since $w$ must serve all difficulties simultaneously, we seek $w^\star = \arg\max_w \mathcal{Q}(w)$ over the current distribution $f(p)$, which may shift over the course of training. In practice, our gradient signal $s(p)$ is unknown, so we consider two hypotheses. If gradient updates carry the same signal regardless of difficulty ($s(p) \equiv s$), the only lever is variance reduction: the optimal strategy assigns weight proportional to the inverse noise, $w^\star(p)\propto 1/v_0(p) \propto p(1{-}p)$, concentrating on medium-difficulty prompts where both successes and failures are frequent enough to yield low-variance gradient estimates. This corresponds to $\alpha = 1$ (mean GRPO) and is the case for larger models (CWM 32B) (Figure~\ref{fig:pass_at_k_scheduler}).

If instead harder problems carry a stronger learning signal ($s'(p)<0$), the optimal filter tilts toward low $p$ ($\alpha > 1$), placing the weight mode at $\frac{1}{1{+}\alpha}$ (see Figure~\ref{fig:weight_variance}). At a solve rate of $0.5$, $\alpha = 3$ is justified only if hard problems carry at least $2.25\times$ more signal than the average prompt (Table~\ref{tab:signal_ratio}). This matches the 7B regime, where Power $\alpha{=}2$ gives $+5\%$ pass@$k$ over GRPO (Tables~\ref{tab:rl-results-454},~\ref{tab:pass_at_k_7b}), suggesting smaller models have more to learn from hard problems. However as the policy improves, problems that once provided signal become easy, and the pool of hard problems shrinks, reducing $N_\mathrm{eff}$ and degrading update quality for large $\alpha$. Methods like pass@$k$ advantages that target tail-end difficulties are especially vulnerable, waiting for signal on unsolvable problems while ignoring steady progress on medium-difficulty prompts (Figure~\ref{fig:svd_evolution}).

\section{The FADE Scheduler}
\label{sec:scheduled-advantages}

The previous sections revealed two trade-offs that a fixed policy weight cannot satisfy simultaneously:
\begin{itemize}[nosep]
  \item \textbf{Sign axis} (\S\ref{sec:entropy_collapse}, \S\ref{sec:rank1_collapse}): failure-biased methods ($\delta < 1$) learn faster but induce rank-1 update collapse; sign-balanced methods ($\delta = 1$) preserve multi-dimensional learning but converge slowly.
  \item \textbf{Difficulty axis} (\S\ref{sec:shape}): focusing on hard problems maximizes gradient informativeness but at the cost of higher variance and less relevant batches.
\end{itemize}
To maximize gradient efficiency, we propose to adapt the policy weight to online learning dynamics. Combining Asym GRPO and Power $\alpha$, we propose \textbf{Focal Advantage with Dynamic Entropy} (FADE) which adapts the policy weight to the moving average of solve rate $\hat{p}$ and entropy $\hat{H}$:
\begin{equation}
  A_i = (1-\bar{r})^{\alpha-1} \cdot
  \begin{cases}
    r_i - \bar{r} & \text{if } r_i \geq \bar{r}, \\[4pt]
    \dfrac{r_i - \bar{r}}{\delta} & \text{if } r_i < \bar{r},
  \end{cases}
  \quad
  \alpha = \operatorname{clip}\Bigl(\frac{3(1 - \hat{p})}{2\hat{p}},\, 1,\, \alpha_{\max}\Bigr),
  \quad
  \delta = \operatorname{clip}\bigl(1 + \hat{H} - H^*,\, 0.3,\, 1\bigr).
  \label{eq:fade_advantage}
\end{equation}

Algorithm~\ref{alg:fade} shows how FADE can be used in practice. We found $\alpha_{\max} = 3$ and a target entropy $H^*$ of half the initial entropy worked best (Appendix~\ref{sec:appendix_fade}). FADE can be seen as delayed exploitation. In a typical training run (Figure~\ref{fig:fade-dynamics}), $\alpha$ starts high making the policy focus on frontier problems and develop diverse reasoning strategies. When the entropy drops below $H^*$ we force exploitation by making the gradients failure-biased with $\delta < 1$. The initial sign-balanced phase ($\delta = 1$) prevents premature rank-1 collapse in the update weights. On the other hand, the decaying $\alpha$ power avoids the gradient allocation trap where we overfit to hard problems. Figure~\ref{fig:pass_at_k_scheduler} shows that FADE achieves the best performance across all pass@$k$ metrics while reaching peak pass@$1$ 20k steps earlier than the best static baseline (Power $\alpha{=}2$) at the 7B scale and 2k steps earlier at the 32B scale.

We ablate each component independently (Figure~\ref{fig:ablations_fade}): removing $\delta$ causes entropy collapse while removing $\alpha$ loses diversity. We also test two simplifications: (1)~replacing both signals with a deterministic logarithmic schedule $f(t) \propto \log(1 + t/\tau)$, and (2)~driving both $\alpha$ and $\delta$ from entropy alone. Both degrade diversity (pass@10, pass@100), confirming that $\alpha$ and $\delta$ must respond to distinct signals to balance exploration and exploitation effectively.

FADE's entropy target provides a single interpretable knob controlling the exploration-exploitation in the learned update weights $W_\Delta$ (see Figure \ref{fig:fitted_coefs}). We track the SVD of the output-head weight change $W_\Delta$ (singular-value ratio $s_1/s_2$ and rank-1 fraction) alongside its $L_2$ norm across three FADE runs with $H^* \in \{0.5, 1.0, 1.3\}$ on Qwen 2.5 7B (Table~\ref{tab:fade_oh}). The result is a clean, continuous transition: $H^*{=}0.5$ keeps $\|W_\Delta\|$ small and the update full-rank; $H^*{=}1.0$ grows $\|W_\Delta\|$ to 67\% with moderate rank concentration; $H^*{=}1.3$ reproduces the rank-1 collapse seen in static failure-biased methods ($s_1/s_2 > 5$, rank-1 fraction 91\%).

\begin{figure*}[t]
    \centering
    \begin{minipage}[c]{0.55\linewidth}
        \centering
        \includegraphics[width=\linewidth]{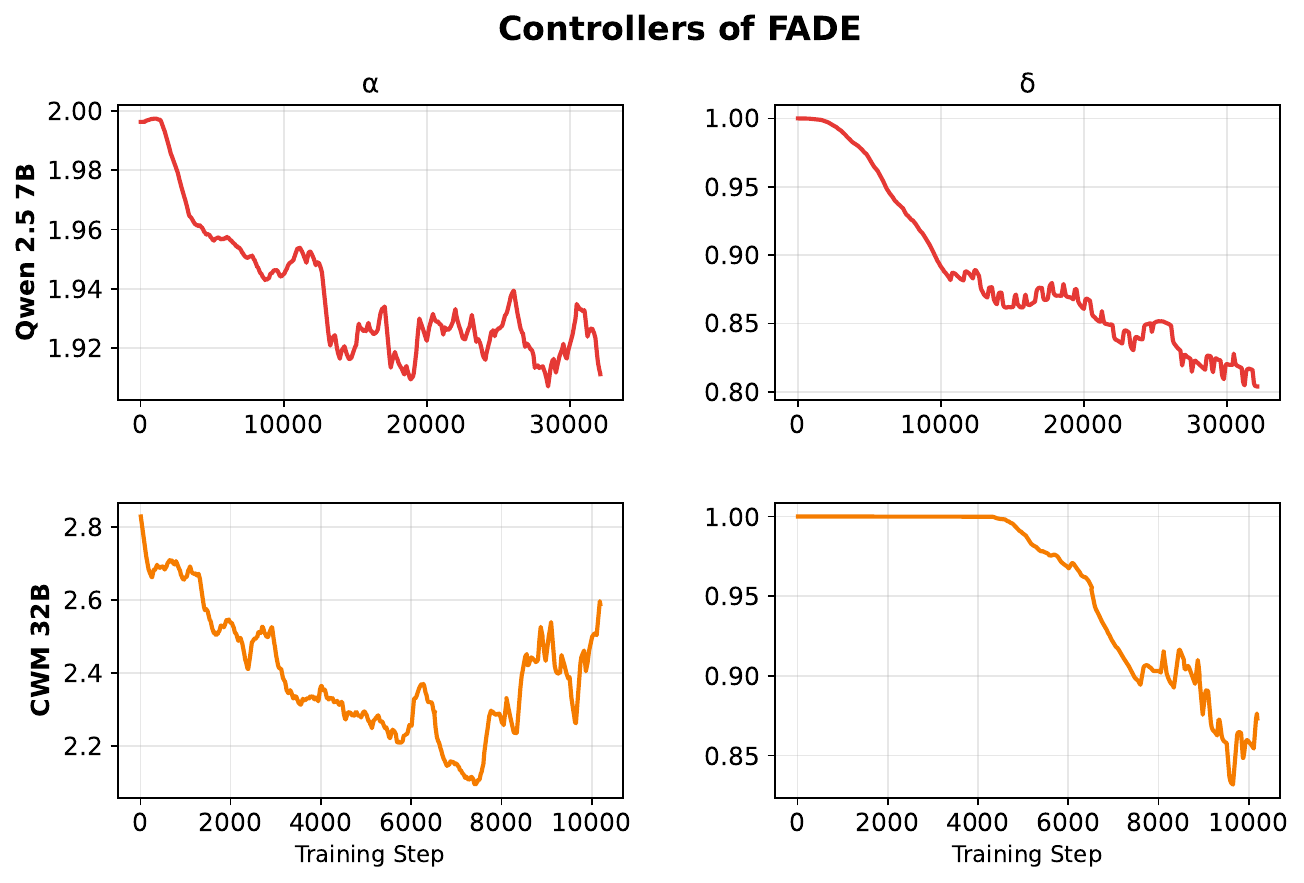}
    \end{minipage}%
    \hfill
    \begin{minipage}[c]{0.42\linewidth}
        \centering
        \footnotesize
        \setlength{\tabcolsep}{2pt}
        \renewcommand{\arraystretch}{1.4}
        \begin{tabular}{@{}llcccc@{}}
        \toprule
        $H^*$ & Step & OH $L_2$ & $s_1/s_2$ & rank-1\% & Inner \\
        \midrule
        0.5 & 10k  & 6.1\%  & 4.10 & 38.7\% & 3.32\% \\
        0.5 & 33k  & 6.3\%  & 3.84 & 54.4\% & 3.32\% \\
        \midrule
        1.0 & 10k  & 13.2\% & 3.51 & 55.9\% & 3.07\% \\
        1.0 & 40k  & 66.9\% & 5.34 & 88.2\% & 1.17\% \\
        \midrule
        1.3 & 10k  & 23.9\% & 3.25 & 68.3\% & 2.69\% \\
        1.3 & 30k  & 72.7\% & 5.77 & 91.2\% & 0.96\% \\
        \midrule
        \multicolumn{2}{@{}l}{\emph{Asym.}} & ${\sim}$90\% & 6.96--8.51 & 78--96\% & ${\sim}$0.3\% \\
        \multicolumn{2}{@{}l}{\emph{Sym.}}  & $<$10\% & 3.61--3.98 & 61--62\% & ${\sim}$3.2\% \\
        \bottomrule
        \end{tabular}
    \end{minipage}
    \caption{\textbf{(Left)} FADE controller dynamics: $\alpha$ and $\delta$ adapt the policy gradient weight $p(1{-}p)^\alpha/\delta$; $\delta$ decays once entropy hits $H^*$ and $\alpha$ decays as solve rates rise. \textbf{(Right)} Output-head SVD metrics at different entropy targets confirm that $H^*$ continuously controls the sign-balanced-to-failure-biased transition (Table~\ref{tab:fade_oh}).}
    \label{fig:fade-dynamics}
    \label{tab:fade_oh}
\end{figure*}

\section{Related Works}

\textbf{Test-time scaling and diversity collapse.}
Test-time compute scaling, sampling $k$ candidate solutions and selecting the best, has become a standard strategy for improving LLM reasoning, through self-consistency \citep{wangself}, multi-turn verification \citep{lightman2024lets}, or simply majority voting. \citet{brown2024largelanguagemonkeysscaling} and \citet{schaeffer25a} characterized the resulting pass@$k$ curves as a power law governed by the distribution of per-problem solve rates. Training-time scaling laws for RL have been studied along complementary axes: environment interactions \citep{hilton2023scaling}, entropy budgets \citep{cui2025entropy}, and algorithmic choices \citep{khatri2025artscalingreinforcementlearning, kaplan2020scaling, tan2025scaling}. At the intersection of training and inference, several works observed that RL fine-tuning ``sharpens'' the policy: pass@$1$ improves while pass@$k$ degrades as the model collapses onto a narrow set of solutions \citep{huang2024self, karan2025reasoning, levi2026learningshrinkshardtail}. \citet{barakat2026pass} formalized this as a gradient-level trade-off, showing that pass@$1$ and pass@$k$ optimization apply opposing forces on the policy. Focusing on GRPO, \citet{cheng2026cancellation} showed failed trajectories serve as credit assignment for positive tokens.

\textbf{Policy weights and advantage functions.}
The advantage function \citep{baird1993advantage} separates the effect of an action from the quality of a state and can be estimated via batch-average baselines \citep{marbach2001simulation, kool2019buy, pmlr-v48-mnihb16}, learned value networks \citep{wang2016dueling}, or GAE \citep{schulman2015high}. Using Monte Carlo rollouts to estimate the value function was popularized by GRPO \citep{deepseek_grpo, Guo_2025}. To preserve test-time diversity, \citep{amini2024variational, chow2024inference, walder2025pass, tang2025optimizing} propose pass@$k$ reweighting. Other works design policy weights based on likelihood \citep{tajwar2026maximumlikelihoodreinforcementlearning}, sign reweighting \citep{w_reinforce, he2025rewarding}, or more complex proxies such as the top-$k$ redistribution \citep{peng2025simko}, or in-context guidance \citep{qu2026pope}. Beyond fixed policy weights, \citet{mai_thinking_1} adjust the upper clip bound to maintain a target policy entropy similar our $\delta$ scheduler in FADE but at the PPO clipping level. \citet{zhao2026aem} directly rescale the GRPO weight based on the average entropy. Our framework is inspired by \citet{thrampoulidis2025advantage}, \citet{chen2025pass} and \citet{davis2025objectivereasoningreinforcementlearning}, who analyze the reward shaping of advantage functions and their gradient weights. We extend this line of work to a broader range of advantages identifying general structural properties validated via large scale RL experiments.

\textbf{Weight-space analysis of RL training.}
\citet{kumarimplicit} studied rank collapse in Q-learning with gradient descent, attributing it to bootstrapping. In contrast, we're interested in the rank of the update weight since in RL with LLMs we have relatively few changes in the model weights. \citet{cai2025predictability} observed rank-1 dominance of the parameter update matrix, more pronounced in RL than in SFT or distillation. \citet{ye2026implicit} showed that non-rank-1 directions encode out-of-domain abilities beyond reasoning, framing rank-1 collapse as a form of overfitting. \citet{moalla2024no} showed update weights become lower rank in high-entropy regimes correlating with our observations for $\delta > 1$.

\section{Conclusion}
By decomposing policy weights into their positive and negative gradient masses ($m_S$, $m_F$), we revealed two orthogonal axes that jointly govern training dynamics. Balancing the sign balance of $m_S$ and $m_F$ can push the policy to entropy collapse if $m_S \gg m_F$ or rank-1 update collapse if $m_F \gg m_S$, accelerating training at the cost of less diverse features. On the other hand, looking at $m_S(p), m_F(p)$ so the policy gradient mass as functions of the batch's solve rate $p$, we showed another trade-off between the signal to noise per batch. Combining these two directions, we notice two paradigms: the explorative mode where we should have a balanced sign ratio and focus on hard problems; and the exploitative mode where we want fast learning and no variance.

Building on this analysis, we introduced FADE, which adapts both axes to online training dynamics. It reaches peak pass@$1$ 20k steps earlier than the best static baseline at 7B and 2k steps earlier at 32B, while maintaining the best diversity-accuracy trade-off across models and benchmarks (LiveCodeBench, AIME).

Several directions remain open. Our framework assumes binary, terminal rewards; extending the $m_S/m_F$ decomposition to process reward models with per-step credit assignment would clarify whether the sign and difficulty trade-offs persist at finer granularity. Similarly, when only a few Monte Carlo rollouts per prompt are available, the estimate $\bar{p}$ becomes noisy. Understanding which policy weights are robust to this estimation error, and how the optimal choice shifts in the low-rollout regime, deserves further study. Finally, all experiments use single-turn code and math generation; multi-turn and agentic settings, where intermediate feedback is available, may shift the exploration-exploitation balance in ways our trajectory-level analysis does not capture.

\newpage
\bibliographystyle{assets/plainnat}
\bibliography{paper}

\clearpage
\newpage
\beginappendix

\section{Deriving Positive and Negative Weights}
\label{sec:weight_derivations}
Throughout, $\tau_S$ and $\tau_F$ denote successful and failed trajectories (positive and negative weights), $\bar{r} := \frac{1}{B}\sum_i r_i$ the batch reward mean, $\sigma_r = \sqrt{\frac{1}{B}\sum_i(r_i - \bar{r})^2}$ the reward standard deviation, and all gradients are estimated over $B$ independent rollouts from~$\pi_{\theta}$.

\subsection{GRPO}
\label{sec:grpo_derivation}
The GRPO advantage is $A_i = (r_i - \bar{r})/\sigma_r$. For binary rewards $r_i \in \{0,1\}$ with $\bar{r} = \hat{p}$, the empirical standard deviation reduces to the Bernoulli form $\sigma_r = \sqrt{\hat{p}(1-\hat{p})} = \sqrt{pq}$, since:
\[ \sigma_r^2 = \tfrac{1}{B}\textstyle\sum_{i=1}^B (r_i - \hat{p})^2 = \tfrac{1}{B}\bigl[Bp\,(1{-}p)^2 + Bq\, p^2\bigr] = pq(q + p) = pq. \]
The per-trajectory advantages are:
\[ A_s = \frac{q}{\sqrt{pq}} = \sqrt{\frac{q}{p}}, \qquad A_f = \frac{-p}{\sqrt{pq}} = -\sqrt{\frac{p}{q}}.\]
With $|S| = Bp$ and $|F| = Bq$ in expectation:
\begin{align*}
\nabla_{\mathrm{GRPO}} &= \tfrac{q}{\sqrt{pq}} \cdot Bp \cdot \mS - \tfrac{p}{\sqrt{pq}} \cdot Bq \cdot \mF = \tfrac{Bpq}{\sqrt{pq}} \left[\mS - \mF\right] = B\sqrt{pq}\left[\mS - \mF\right].
\end{align*}
Hence $m_S = m_F = Bpq/\sigma_r = B\sqrt{pq}$, confirming sign balance.
Skew-R introduced by \citet{thrampoulidis2025advantage} combines regular and mean based GRPO by taking the product of both giving $m_S = m_F = pq \times \sqrt{pq}$.

\subsection{Multiplier Rescaled GRPO}
The $\frac{1}{\sigma_r}$ multiplier in GRPO is unstable \citep{liu2025understanding} \citep{yu2025dapo}, as it can lead to exploding or noisy gradient updates. Instead, we can keep the mean normalization and introduce other multipliers:
\begin{align*}
    \nabla_{\text{rescale}} &= \sum_{i \in S} C_s(\mathbf{r}, i) \cdot (r_i - \bar{r}) \nabla \log \pi_{\theta}(\tau_i) - \sum_{i \in F} C_f(\mathbf{r}, i) \cdot (\bar{r} - r_i) \nabla \log \pi_{\theta}(\tau_i),
\end{align*}
where $C_s(\mathbf{r}, i)$ and $C_f(\mathbf{r}, i)$ are arbitrary non-negative functions that may depend on the full reward vector $\mathbf{r} = (r_1, \dots, r_B)$ and the sample index $i$. Mirroring the logic behind the focal loss \citep{lin2017focal}, we introduce the power $\alpha$ series (PA) where the functions are powers of $|r_i - \bar{r}|$.

Our proposed methods:
\begin{enumerate}
    \item \textbf{Power $\alpha$ (sign-balanced):} The intuition is to reduce the gradient magnitude when the model already performs well ($\bar{r}$ high): samples that are easy to solve should contribute less to the update. We set $C_s(\mathbf{r}, i) = C_f(\mathbf{r}, i) = (1 - \bar{r})^{\alpha - 1}$ for all $i$, so the advantage becomes:
    \[\hat{A}_i = (r_i - \bar{r})(1 - \bar{r})^{\alpha - 1}.\]
    For binary rewards where $\bar{r} = p$, this simplifies to $C_s = C_f = q^{\alpha - 1}$, giving:
    \[m_S = m_F = Bpq \cdot q^{\alpha - 1} = Bpq^{\alpha}.\]
    As $\alpha$ increases, the masses decay faster with $q$: gradients are suppressed on prompts the model has mostly solved, focusing optimization on harder problems.

    \item \textbf{Asymmetric Power $\alpha$:} We decouple the exponents: $C_s(\mathbf{r}, i) = (1 - \bar{r})^{\alpha_s - 1}$ for $i \in S$ and $C_f(\mathbf{r}, i) = \bar{r}^{\alpha_f - 1}$ for $i \in F$. The positive mass derivation is identical to the sign-balanced case:
    \[ m_S = Bpq^{\alpha_s}.\]
    For the negative side, with binary rewards $\bar{r} = p$:
    \[ m_F = B p^{\alpha_f - 1} \cdot p \cdot q = Bp^{\alpha_f}q.\]
    If $\alpha_s > \alpha_f$ we suppress the positive gradient more than the negative one (or vice versa), giving explicit control over the diversity--performance tradeoff. Note that $\alpha_s = \alpha_f = 1$ recovers GRPO ($m_S = m_F = pq$).

    \item \textbf{Asymmetric GRPO:} keeps the GRPO advantage intact on the positive side and rescales only the negative side by a fixed scalar $\frac{1}{\delta}$: $C_s = 1$ and $C_f = \frac{1}{\delta}$. This gives:
    \[ m_S = Bpq, \quad m_F = \frac{Bpq}{\delta}.\]
    When $\delta < 1$, the negative gradient is amplified relative to the positive one ($m_F > m_S$), pushing the model more aggressively away from failed trajectories. When $\delta > 1$, the negative gradient is suppressed ($m_F < m_S$), preserving diversity by reducing the penalty on incorrect solutions. At $\delta = 1$ we recover standard GRPO. Unlike Power~$\alpha$, this method does not adapt to the difficulty of the prompt ($p$). It modifies only the strength of negative advantages.
\end{enumerate}

\textbf{Thickening to Thinning (T2T)} propose another type of $C_s$ and $C_f$ multiplier for GRPO based on the average normalized lengths of successful and failed responses, $\bar{L}_S$ and $\bar{L}_F$. They define the reward as $R_i = \begin{cases} 1 - \alpha p L_i & \text{if } \mathcal{V}_i = 1 \\ \alpha (1-p) L_i & \text{if } \mathcal{V}_i = 0 \end{cases}$ within a GRPO normalization. The batch mean is:
\begin{align}
\mu &= \tfrac{1}{B} \bigl( \textstyle\sum_{i \in S} (1 - \alpha p L_i) + \textstyle\sum_{j \in F} \alpha (1{-}p) L_j \bigr) = p - \alpha p^2 \bar{L}_S + \alpha (1{-}p)^2 \bar{L}_F.
\end{align}
Defining $C := 1 - \alpha p \bar{L}_S - \alpha (1-p) \bar{L}_F$, the positive and negative advantages are:
\begin{align}
\bar{A}_S &= \tfrac{1}{\sigma_R} \bigl( (1 - \alpha p \bar{L}_S) - \mu \bigr) = \tfrac{(1-p)}{\sigma_R} \bigl[ 1 - \alpha p \bar{L}_S - \alpha (1{-}p) \bar{L}_F \bigr] = \tfrac{(1-p)\,C}{\sigma_R}, \\
\bar{A}_F &= \tfrac{1}{\sigma_R} \bigl( \alpha (1{-}p) \bar{L}_F - \mu \bigr) = \tfrac{-p}{\sigma_R} \bigl[ 1 - \alpha p \bar{L}_S - \alpha (1{-}p) \bar{L}_F \bigr] = \tfrac{-p\,C}{\sigma_R}.
\end{align}
The gradient takes the form:
\[ \nabla_{\text{GRPO-T2T}} \approx \frac{B p (1-p) C}{\sigma_R} \left[ \overline{\nabla}_S - \overline{\nabla}_F \right]\]

\subsection{Maximum Likelihood RL}

As introduced in \citet{tajwar2026maximumlikelihoodreinforcementlearning}, the MaxRL advantage is $A = \frac{r_i - \bar{r}}{\bar{r}}$. With $r \in \{0,1\}$, we have $A_s = \frac{1-p}{p}$ and $A_f = \frac{0-p}{p} = -1$. Similar to the GRPO gradient we can write:
\begin{align*}
\nabla_{\mathrm{MaxRL}} &= \sum_{\tau_s \in S} A_s \nabla \log \pi_{\theta}(\tau_s) + \sum_{\tau_f \in F} A_f \nabla_{\theta} \log \pi_{\theta}(\tau_f) \\
&= \frac{1-p}{p} \sum_{\tau_s \in S} \nabla \log \pi_\theta(\tau_S) - \sum_{\tau_f \in F} \nabla \log \pi_\theta(\tau_F) \\
&= \frac{(1-p)}{p} \cdot Bp \cdot \mS - B(1-p) \cdot \mF = B(1-p) \left[ \mS - \mF \right].
\end{align*}

\subsection{Pass@$k$ Based Methods}

\subsubsection{Pass@$k$ (Tang et al.)}
The leave-one-out advantage of \citet{tang2025optimizing} is $A_i = \max_k(\mathbf{r}) - \max_k(\mathbf{r}_{-i})$, where $\max_k(\mathbf{r}) = \mathbf{1}[\exists\, j : r_j = 1]$ is the pass@$k$ indicator over a group of $k$ samples and $\mathbf{r}_{-i}$ excludes sample $i$.

\paragraph{Failed samples ($r_i = 0$).} Removing a zero does not change the maximum: $\max_k(\mathbf{r}) = \max_k(\mathbf{r}_{-i})$ regardless of the other samples, so $A_i = 0$ for all $i \in F$.

\paragraph{Successful samples ($r_i = 1$).} We have $\max_k(\mathbf{r}) = 1$. The leave-one-out is $\max_k(\mathbf{r}_{-i}) = \mathbf{1}[\exists\, j \neq i : r_j = 1]$, so:
\[ A_i = 1 - \mathbf{1}[\exists\, j \neq i : r_j = 1] = \mathbf{1}[\text{sample } i \text{ is the only success}].\]
Hence $A_i = 1$ when $|S| = 1$ and $i$ is the unique success, and $A_i = 0$ when $|S| \geq 2$.

\paragraph{Gradient.} Since only the unique-success case contributes:
\begin{align*}
\nabla &= \sum_{i=1}^{k} A_i \nabla \log \pi_\theta(\tau_i) = \begin{cases} \nabla \log \pi_\theta(\tau_s) = \mS & \text{if } |S| = 1, \\[4pt] 0 & \text{otherwise.}\end{cases}
\end{align*}
Taking expectations, $\Pr[|S| = 1] = \binom{k}{1} p\, q^{k-1} = k p q^{k-1}$, so:
\[ \mathbb{E}[\nabla] = k p q^{k-1}\, \mS - 0 \cdot \mF,\]
giving $m_S = k p q^{k-1}$ and $m_F = 0$.

Unlike the sign-balanced methods (where the masses are deterministic for any fixed batch), here the gradient is zero in most batches and only fires when exactly one of $k$ samples succeeds.

\subsubsection{Analytical Pass@$k$ (Chen et al.)}
\citet{chen2025pass} define the group pass@$k$ estimator $R_k = 1 - \binom{F}{k}\!/\!\binom{G}{k}$ over $G$ rollouts with $F$ failures and $S = G - F$ successes, with $\sigma_k = \sqrt{R_k(1 - R_k)}$. The per-sample advantages are:
\[ A_s = \frac{1 - R_k}{\sigma_k}, \qquad A_f = \frac{1}{\sigma_k}\!\left(1 - R_k - \frac{\binom{F-1}{k-1}}{\binom{G-1}{k-1}}\right).\]

\paragraph{Simplification.} Define $Q_{k-1}' := \binom{F{-}1}{k{-}1}\!/\!\binom{G{-}1}{k{-}1}$. The identity $\binom{F}{k}\!/\!\binom{G}{k} = (F/G)\,Q_{k-1}'$ gives $1 - R_k = \hat{q}\, Q_{k-1}'$, so:
\[ A_s = \frac{\hat{q}\, Q_{k-1}'}{\sigma_k}, \qquad A_f = \frac{Q_{k-1}'(\hat{q} - 1)}{\sigma_k} = \frac{-\hat{p}\, Q_{k-1}'}{\sigma_k}.\]
Both cases unify into $A_i = Q_{k-1}'(r_i - \hat{p})/\sigma_k$, which is exact for any finite $G$ and has the $C \cdot (r_i - \hat{p})$ structure guaranteeing sign balance. Following the GRPO pattern with $|S| = G\hat{p}$ and $|F| = G\hat{q}$:
\begin{align*}
\nabla &= \frac{\hat{q}\,Q_{k-1}'}{\sigma_k} \cdot S \cdot \mS - \frac{\hat{p}\,Q_{k-1}'}{\sigma_k} \cdot F \cdot \mF = \frac{G\hat{p}\hat{q}\,Q_{k-1}'}{\sigma_k}\left[\mS - \mF\right].
\end{align*}
Substituting $Q_{k-1}' = (1-R_k)/\hat{q}$ and $\sigma_k = \sqrt{R_k(1-R_k)}$:
\[ m_S = m_F = \hat{p}\,\frac{1-R_k}{\sigma_k} = \hat{p}\sqrt{\frac{1-R_k}{R_k}}.\]
At $k = 1$, $R_1 = S/G = \hat{p}$, so $m_S = m_F = \sqrt{\hat{p}\hat{q}}$, recovering GRPO. In the population limit ($G \to \infty$), $R_k \to 1 - q^k$ and $Q_{k-1}' \to q^{k-1}$, giving $A_i \to q^{k-1}(r_i - p)/\sigma_k$ and $m_S = m_F = p\sqrt{q^k/(1-q^k)}$.

\textbf{Remark.} If one were to naively set the baseline to $R_k$ instead of $\hat{p}$, i.e.\ $A_i = (r_i - R_k)/\sigma_k$, one would obtain $m_S = p(1{-}R_k)/\sigma_k \neq m_F = qR_k/\sigma_k$ for $k > 1$. The per-sample baseline must remain at $\hat{p}$ for sign symmetry to hold.

\subsubsection{Mix Pass@1/Pass@$k$ (Chen et al.)}
\citet{chen2025pass} also propose a convex combination of pass@1 and pass@$k$ advantages:
\[ A_i^{mix} = p \cdot \hat{A}_{pass@k} + q \cdot \hat{A}_{pass@1}.\]
Both components have the mean-centered structure $C \cdot (r_i - p)$:
\[ \hat{A}_{pass@1} = \frac{r_i - p}{\sigma_1}, \qquad \hat{A}_{pass@k} = \frac{q^{k-1}(r_i - p)}{\sigma_k}.\]
A linear combination of mean-centered terms remains mean-centered:
\[ A_i^{mix} = \left(\frac{p q^{k-1}}{\sigma_k} + \frac{q}{\sigma_1}\right)(r_i - p) =: C_{mix}(r_i - p).\]
Since this is of the form $C \cdot (r_i - p)$, it is automatically sign-balanced. Following the GRPO pattern:
\begin{align*}
\nabla &= C_{mix}\left[q \sum_{i \in S} \nabla \log \pi_\theta(\tau_i) \right. \left. - p \sum_{i \in F} \nabla \log \pi_\theta(\tau_i)\right] = C_{mix} \cdot B p q \left[\mS - \mF\right].
\end{align*}
So $m_S = m_F = C_{mix} \cdot pq$. Noting that the success-side advantages are $\hat{A}_{s}^{(k)} = C_k q = q^k / \sigma_k$ and $\hat{A}_{s}^{(1)} = q / \sigma_1$:
\begin{align*}
    C_{mix} \cdot pq &= p\!\left(p \cdot \underbrace{C_k q}_{\hat{A}_{s}^{(k)}} + q \cdot \underbrace{C_1 q}_{\hat{A}_{s}^{(1)}}\right) = p\!\left(p\, \hat{A}_{pass@k} + q\, \hat{A}_{pass@1}\right),
\end{align*}
confirming $m_S = m_F = p\!\left(p\, \hat{A}_{pass@k} + q\, \hat{A}_{pass@1}\right)$.

\subsection{Biasing towards Negatives with a Shifted Baseline}
\label{sec:asymrl_derivation}

A series of policy weights propose to shift up or down the positive weights by adding a fixed offset to GRPO \citep{arnal2026asymmetric} or to REINFORCE \citep{sutton1988learning}, or using the minimum reward over the batch as an offset \citep{CorPO}. We'll derive the advantage weight of AsymRL which can be generalized to other types of offsets, the advantage of AsymRL is:
\[ A_i = r_i - (\bar{r} + \delta).\]
With binary rewards $r_i \in \{0,1\}$ and $\bar{r} = p$, the per-trajectory weights are:
\[ A_s = 1 - p - \delta = q - \delta, \qquad A_f = -p - \delta = -(p + \delta).\]
Splitting the gradient over successful and failed trajectories:
\begin{align*}
\nabla_{\mathrm{AsymRL}} &= \sum_{i \in S} (q - \delta)\, \nabla \log \pi_\theta(\tau_i) - \sum_{i \in F} (p + \delta)\, \nabla \log \pi_\theta(\tau_i) \\
&= (q - \delta) \cdot |S| \cdot \mS - (p + \delta) \cdot |F| \cdot \mF \\
&= B p (q - \delta)\, \mS - B q (p + \delta)\, \mF,
\end{align*}
At $\delta = 0$ we recover mean GRPO but in practice, \citet{arnal2026asymmetric} recommend $\delta = 0.01$.

\subsection{Quantile Baseline and MC-GRPO}
\label{sec:quantile_derivation}

The quantile baseline~\citep{dabney2018distributional} replaces the mean baseline with the $\tau$-quantile of the batch rewards: $A_i = r_i - Q_\tau(\mathbf{r})$, where $Q_\tau$ is the $\tau$-th quantile. With binary rewards $r_i \in \{0,1\}$, the quantile is a step function of the solve rate:
\[ Q_\tau(\mathbf{r}) = \begin{cases} 0 & \text{if } p \le \tau, \\ 1 & \text{if } p > \tau. \end{cases} \]

\paragraph{Case 1: $p \le \tau$ (hard problems).} The baseline is $0$, so $A_s = 1$, $A_f = 0$. Only successes contribute:
\[ \nabla = \sum_{i \in S} \nabla \log \pi_\theta(\tau_i) = Bp\, \bar{\nabla}_S. \]
Hence $m_S = p$ and $m_F = 0$.

\paragraph{Case 2: $p > \tau$ (easy problems).} The baseline is $1$, so $A_s = 0$, $A_f = -1$. Only failures contribute:
\[ \nabla = -\sum_{i \in F} \nabla \log \pi_\theta(\tau_i) = -Bq\, \bar{\nabla}_F. \]
Hence $m_S = 0$ and $m_F = q$.

Combining both cases: $m_S = p\,\mathbf{1}[p \le \tau]$ and $m_F = q\,\mathbf{1}[p > \tau]$. The quantile baseline is always sign-biased for binary rewards: it learns only from successes on hard problems and only from failures on easy ones, never both simultaneously.

MC-GRPO~\citep{kim2026mc} uses the median as the baseline, $A_i = r_i - \mathrm{med}(\mathbf{r})$, which is the special case $\tau = \tfrac{1}{2}$. With binary rewards, $\mathrm{med}(\mathbf{r}) = \mathbf{1}[p > \tfrac{1}{2}]$, so the masses reduce to $m_S = p\,\mathbf{1}[p \le \tfrac{1}{2}]$ and $m_F = q\,\mathbf{1}[p > \tfrac{1}{2}]$: the method switches abruptly from pure positive reinforcement to pure negative reinforcement at $p = \tfrac{1}{2}$.

\subsection{Binary Contrastive}
\label{sec:binary_contrastive_derivation}
The binary contrastive weight~\citep{greensmith2004variance} is $W(\tau) = \mathbf{1}_{r=1} - \frac{\bar{p}}{1-\bar{p}}\,\mathbf{1}_{r=0}$ so we have $m_S = p \times 1$ and $m_F = (1-p) \times \frac{p}{1-p} = p$.

\subsection{Power Norm}
\label{sec:powernorm_derivation}
The Power Norm advantage~\citep{andrychowicz2020matters} is $A_i = \frac{r_i - \bar{r}}{[\bar{r}(1-\bar{r})]^\gamma}$. With binary rewards and $\bar{r} = p$:
\[ A_s = \frac{q}{(pq)^\gamma}, \qquad A_f = \frac{p}{(pq)^\gamma}.\]
The gradient is:
\begin{align*}
\nabla &= \frac{q}{(pq)^\gamma} \cdot Bp\,\mS - \frac{p}{(pq)^\gamma} \cdot Bq\,\mF = \frac{Bpq}{(pq)^\gamma}\left[\mS - \mF\right].
\end{align*}
Hence $m_S = m_F = pq/(pq)^\gamma = p^{1-\gamma}q^{1-\gamma}$. At $\gamma = 1/2$ this recovers GRPO; at $\gamma = 0$ it recovers Dr.\ GRPO.

\subsection{Function-Based Policy Weights}
\label{sec:softmax_derivation}

Both Softmax and Logmeanexp aggregate rewards through the exponential partition function. With $|S| = Gp$ successes and $|F| = Gq$ failures in a batch of $G$ rollouts, the partition function is:
\[ Z := |S|\,e^\beta + |F| = G(pe^\beta + q) = G\bigl[1 + p(e^\beta - 1)\bigr].\]
The per-sample softmax weights are $\mathrm{softmax}_\beta(1) = e^\beta/Z$ and $\mathrm{softmax}_\beta(0) = 1/Z$.

\paragraph{Softmax~\citep{deepseek_grpo}.} The advantage $A_i = \mathrm{softmax}_\beta(r_i) - 1/G$ subtracts a uniform baseline. Substituting:
\[ A_s = \frac{e^\beta}{Z} - \frac{1}{G} = \frac{(e^\beta - 1)q}{Z}, \qquad A_f = \frac{1}{Z} - \frac{1}{G} = \frac{-p(e^\beta - 1)}{Z}.\]
The gradient is:
\begin{align*}
\nabla &= A_s \cdot Gp\,\mS - |A_f| \cdot Gq\,\mF = \frac{(e^\beta - 1)pq}{pe^\beta + q}\left[\mS - \mF\right].
\end{align*}
Hence $m_S = m_F = \frac{(e^\beta - 1)pq}{1 + p(e^\beta - 1)}$, which is sign-balanced. As $\beta \to 0$, this recovers Dr.\ GRPO; as $\beta \to \infty$, it converges to the pass@$k$ advantage.

\paragraph{Logmeanexp~\citep{lme}.} \label{sec:logmeanexp_derivation} The advantage $A_i = \mathrm{lme}_\beta(\mathbf{r}) - \mathrm{lme}_\beta(\mathbf{r}_{-i})$ uses the same partition function through a leave-one-out log-difference, where $\mathrm{lme}_\beta(\mathbf{r}) = \frac{1}{\beta}\log(Z/G)$. Removing a success shifts $Z \to Z - e^\beta$ while removing a failure shifts $Z \to Z - 1$. Taking the log breaks the sign symmetry: removing a high-weight success changes $\log Z$ more than removing a failure. In expectation:
\[ m_S = \frac{pe^\beta}{pe^\beta + q}, \qquad m_F = \frac{q}{pe^\beta + q}.\]
The ratio $m_S/m_F = pe^\beta/q$ is sign-biased: successes are exponentially upweighted. As $\beta \to 0$, $m_S \to p$ and $m_F \to q$ (REINFORCE). As $\beta \to \infty$, $m_S \to 1$ and $m_F \to 0$ (pure positive reinforcement).

\subsection{HA-DW}
\label{sec:hadw_derivation}
HA-DW~\citep{ha_dw} uses a hardness-aware dynamic weight: $W(\tau) = A \cdot \lambda \exp(-\mathrm{sgn}(\hat{A})\,\mathrm{sgn}(\hat{r} - C_t)\,|\hat{r} - C_t|)$, where $C_t$ is a running baseline. With binary rewards, letting $A$ denote the base advantage and the exponential modulate based on whether the reward exceeds the baseline:
\begin{align*}
m_S &= f(C_t, p)\,qp, \qquad m_F = qp,
\end{align*}
where $f(C_t, p)$ depends on the exponential modulation. The negative mass matches Dr.\ GRPO while the positive mass is scaled by a hardness-dependent factor.

\subsection{ReLU Advantage}
\label{sec:relu_derivation}
The ReLU advantage~\citep{srinivasan2018actor} is $A_i = \max(0, r_i - \bar{r})$. With binary rewards: $A_s = \max(0, q) = q, A_f = \max(0, -p) = 0.$ gives a purely positive gradient: $\nabla = q \cdot Bp\,\mS = Bpq\,\mS.$
Hence $m_S = pq$ and $m_F = 0$. ReLU is sign-biased: it reinforces successes with the same magnitude as Dr.\ GRPO but completely ignores failures.

\section{Extension to Multi-Turn and Connection to Resampling}
\label{sec:multiturn}

\subsection{Generalizing the framework to multi-turn rollouts}
\label{sec:multiturn_framework}

The decomposition $\nabla_\theta J = m_S \bar{\nabla}_S - m_F \bar{\nabla}_F$ relies only on linearity of expectation over per-token weights and is therefore agnostic to whether a trajectory is generated in a single turn or interleaved with environment / tool feedback across multiple turns. We make the extension explicit here.

\paragraph{Setup.} A multi-turn rollout decomposes a trajectory as $\tau = (\tau_1, \tau_2, \dots, \tau_T)$, where turn $t$ is generated from a state $s_t$ that depends on $(\tau_1, \dots, \tau_{t-1})$ and any intervening environment observations $o_{<t}$. Tokens generated by the policy carry a per-token weight $w_t$; environment-supplied tokens are masked from the gradient. The policy gradient is:
\begin{equation}
\nabla_\theta J = \mathbb{E}_\tau \!\left[ \sum_{t=1}^{T} \sum_{a \in \tau_t} w_{t,a}\, \nabla_\theta \log \pi_\theta(a \mid s_t) \right].
\end{equation}
Splitting $w_{t,a} = w_{t,a}^+ - w_{t,a}^-$ and summing over turns yields
\begin{equation}
\nabla_\theta J = \sum_{t=1}^{T} \big[\, m_S^{(t)}\, \bar{\nabla}_S^{(t)} \;-\; m_F^{(t)}\, \bar{\nabla}_F^{(t)} \,\big],
\end{equation}
with per-turn masses
\begin{align*}
    m_S^{(t)} &\,=\, \rho_t \cdot \mathbb{E}\!\left[\, w_{t}^{+} \,\big|\, \text{turn } t \text{ reached}\,\right], \\
    m_F^{(t)} &\,=\, \rho_t \cdot \mathbb{E}\!\left[\, w_{t}^{-} \,\big|\, \text{turn } t \text{ reached}\,\right],
\end{align*}
where $\rho_t := \Pr_{\pi_\theta}(\text{turn } t \text{ is reached})$ is the \emph{reachability factor} of turn $t$. Single-turn is the special case $T=1$, $\rho_1 = 1$, recovering Section~\ref{sec:framework} exactly.

\subsection{Resampling failures with fixed context $\equiv$ Power $\alpha = 2$}
\label{sec:resample_power_alpha}

A natural multi-turn schedule is to retry only failed prompts. We show that when the retry uses the same prompt context (no information added between attempts), one round of GRPO-on-failures produces an update with the same gradient mass as Power $\alpha = 2$ (Table~\ref{tab:pg-mass-split-final}).

\paragraph{Setup.} Fix a prompt with success rate $p$ and let $q := 1 - p$. Round 1 samples a batch of $B$ rollouts and applies GRPO. Round 2 only fires for prompts that failed in round 1 (probability $q$); it samples a fresh batch of $B$ rollouts from the same prompt context and applies GRPO again.

For round 1 mass we have the standard GRPO $m_S^{(1)} = m_F^{(1)} = B p q.$. The reachability factor for round 2 is $\rho_2 = q$. Because the resample shares the same context, the within-round success rate is again $p$ and within-round GRPO yields the same closed form $Bpq$. Multiplying by reachability:
$m_S^{(2)} = m_F^{(2)} = \rho_2 \cdot B p q = q \cdot B p q = B p q^2.$

\paragraph{Interpretation.} The $q^{\alpha - 1}$ multiplier in the Power-$\alpha$ family is therefore not arbitrary: for integer $\alpha$, it is exactly the marginal contribution of one extra GRPO retry on the failed prompts. It can be implemented either as a within-batch reweighting (cheap, no extra rollouts) or as an explicit resample loop.

\section{FADE Scheduler Additional Results}
\label{sec:appendix_fade}

Algorithm~\ref{alg:fade} details the full FADE procedure. Two exponential moving averages track the online solve rate $\hat{p}$ and policy entropy $\hat{H}$, which respectively control the difficulty focus $\alpha$ and the sign bias $\delta$. Crucially, $\alpha$ and $\delta$ are updated from the smoothed EMAs, while the per-sample advantage uses the batch-level mean $\bar{r}$. For binary rewards, the resulting per-prompt gradient masses are $m_S = pq^\alpha$ and $m_F = pq^\alpha/\delta$, so the sign ratio is $m_S/m_F = \delta$: when entropy exceeds $H^*$, $\delta = 1$ (sign-balanced); as entropy drops, $\delta$ decreases and the negative mass dominates, triggering exploitation.

\begin{algorithm}[t]
\caption{Power $\alpha$ (static)}
\label{alg:power_alpha}
\begin{algorithmic}[1]
\REQUIRE Policy $\pi_\theta$, prompts $\mathcal{Q}$, unit tests $\mathcal{T}_q$, power $\alpha \geq 1$
\FOR{step $t = 1, \ldots, T$}
    \STATE Sample $G$ rollouts per prompt; score $r_i \leftarrow \textsc{Exec}(\tau_i, \mathcal{T}_q)$
    \STATE $\bar{r} \leftarrow \frac{1}{|\mathcal{B}|}\sum_{i} r_i$
    \FOR{each rollout $i \in \mathcal{B}$}
        \STATE $A_i \leftarrow (1{-}\bar{r})^{\alpha-1}(r_i - \bar{r})$
    \ENDFOR
    \STATE Update $\theta$ via PPO clipping ($\varepsilon{=}0.2$) using advantages $\{A_i\}$
\ENDFOR
\end{algorithmic}
\end{algorithm}

\definecolor{fadehl}{RGB}{255,235,180}

\begin{algorithm}[t]
\caption{FADE: Focal Advantage with Dynamic Entropy \colorbox{fadehl}{(additions over Alg.~\ref{alg:power_alpha})}}
\label{alg:fade}
\begin{algorithmic}[1]
\REQUIRE Policy $\pi_\theta$, prompts $\mathcal{Q}$, unit tests $\mathcal{T}_q$, \colorbox{fadehl}{target entropy $H^*$, max power $\alpha_{\max}$, EMA coefficient $\beta{=}0.02$}
\STATE \colorbox{fadehl}{Initialize $\hat{p} \leftarrow 0.5$, \; $\hat{H} \leftarrow H_0(\pi_\theta)$}
\FOR{step $t = 1, \ldots, T$}
    \STATE Sample $G$ rollouts per prompt; score $r_i \leftarrow \textsc{Exec}(\tau_i, \mathcal{T}_q)$
    \STATE $\bar{r} \leftarrow \frac{1}{|\mathcal{B}|}\sum_{i} r_i$, \quad \colorbox{fadehl}{$H_t \leftarrow \frac{1}{|\mathcal{B}|}\sum_{i} \frac{1}{T_i}\sum_{t} {-}\log \pi_\theta(a_t^{(i)} \mid q, a_{<t}^{(i)})$}
    \STATE \colorbox{fadehl}{$\hat{p} \leftarrow \beta\, \hat{p} + (1{-}\beta)\, \bar{r}$, \quad $\hat{H} \leftarrow \beta\, \hat{H} + (1{-}\beta)\, H_t$}
    \STATE \colorbox{fadehl}{$\alpha \leftarrow \operatorname{clip}\bigl(\tfrac{3(1 - \hat{p})}{2\hat{p}},\; 1,\; \alpha_{\max}\bigr)$, \quad $\delta \leftarrow \operatorname{clip}\!\bigl(1 + \hat{H} - H^*,\; 0.3,\; 1\bigr)$}
    \FOR{each rollout $i \in \mathcal{B}$}
        \STATE $A_i \leftarrow (1{-}\bar{r})^{\alpha-1}(r_i - \bar{r}) \;/\; \colorbox{fadehl}{$\begin{cases} 1 & r_i \geq \bar{r} \\ \delta & r_i < \bar{r} \end{cases}$}$
    \ENDFOR
    \STATE Update $\theta$ via PPO clipping ($\varepsilon{=}0.2$) using advantages $\{A_i\}$
\ENDFOR
\end{algorithmic}
\end{algorithm}

Figure \ref{fig:ablations_fade} summarizes our ablation study on each FADE component.
\begin{align}
    \delta(t) &= \delta_0 - (\delta_0 - \delta_{\min})\,\frac{\log(1 + t/\tau_\delta)}{\log(1 + T/\tau_\delta)}, \nonumber \\
    &\quad \delta_0 = 1.0,\; \delta_{\min} = 0.802,\; \tau_\delta = 12264, \label{eq:log_delta} \\
    \alpha(t) &= \alpha_0 - (\alpha_0 - \alpha_{\min})\,\frac{\log(1 + t/\tau_\alpha)}{\log(1 + T/\tau_\alpha)}, \nonumber \\
    &\quad \alpha_0 = 2.0,\; \alpha_{\min} = 1.917,\; \tau_\alpha = 8500, \label{eq:log_alpha}
\end{align}
where $T$ is the total number of training steps. We fitted a logarithmic decay curve to the Qwen 2.5 7B results $\alpha$, $\delta$ evolution per timestep to estimate these parameters.

\begin{figure}[t]
    \centering
    \includegraphics[width=\linewidth]{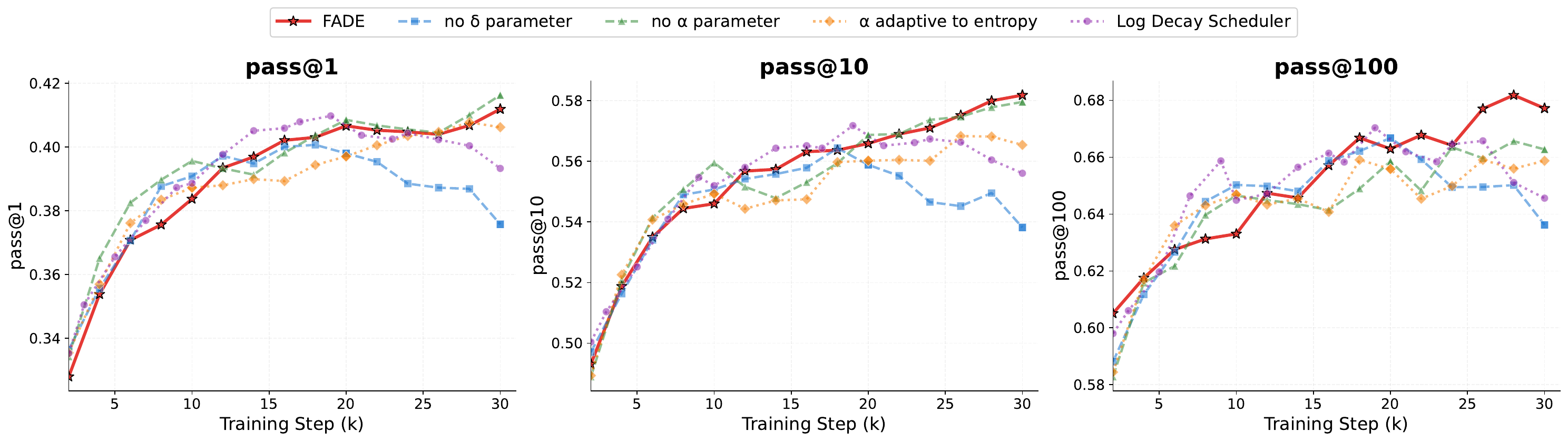}
    \caption{\textbf{Ablation Study on FADE} reveals adaptive $\alpha$ only collapses whereas scheduling both on entropy and ablation $\alpha$ worsens pass@$100$ diversity with Qwen 2.5 7B on LiveCodeBench at 8k reasoning.}
    \label{fig:ablations_fade}
\end{figure}

\section{Training and Evaluation Details}

\subsection{Infrastructure and Hyperparameters}
\label{sec:training_app}

Similar to \citet{RLEF,rlhf_faster}, we use an asynchronous distributed RL framework and a separate CPU cluster for code evaluations. Concretely we have a set of GPUs and CPUs that are divided into:
\begin{itemize}
    \item samplers (H-100): produce code generations and send them to the CPUs,
    \item evaluators (CPU): evaluate the generated code against unit tests,
    \item trainers (H-100): receive the code tokens, their log probabilities and rewards to perform a backward update on our model.
\end{itemize}
The model travels between samplers and trainers so we always have the most updated model for generations. In practice, sampling is much slower than training so we have a trainer/sampler ratio of roughly $0.14$ for Qwen 2.5 7B and $0.375$ for CWM 32B. For Qwen 2.5 7B runs, we use $8$ nodes ($64$ GPUs) with 1 trainer node and 7 sampler nodes. For CWM 32B runs, we use 32 nodes with 12 trainer nodes and 20 sampler nodes. Since we train with reasoning, we have varying sequence length and batch our answers by tokens rather than per sample using max tokens per batch $= 32 768$. Since we generate maximum $8192$ tokens for the Qwen 2.5 7B and $32768$ for the CWM 32B, we have respectively at least $4$ and $1$ sample per batch.

\begin{table}[h]
\centering
\caption{Qwen 2.5 7B: gradient ratio vs.\ baseline and learning rate required to equalize gradient magnitude across normalization schemes (baseline LR $= 1\mathrm{e}{-7}$).}
\label{tab:grad_ratio_7b}
\begin{tabular}{lcc}
\toprule
\textbf{Run / Normalization} & \textbf{Grad Ratio vs Baseline} & \textbf{LR to Equalize} \\
\midrule
\texttt{pass\_at\_(8)}                    & 0.04x  & 2.67e-06 \\
\texttt{logmeanexp(0.4)}                   & 0.07x  & 1.52e-06 \\
\texttt{power\_alpha(3)}                   & 0.20x  & 5.03e-07 \\
\texttt{power\_alpha(2)}                   & 0.27x  & 3.67e-07 \\
\texttt{implicit\_pass\_at(8)}             & 0.61x  & 1.63e-07 \\
\texttt{implicit\_pass\_at(2)}             & 0.66x  & 1.51e-07 \\
\texttt{AsymRL}                              & 0.97x  & $\sim$1e-7 \\
\texttt{GRPO mean}                 & \textbf{1.00x}  & \textbf{1e-7} \\
\texttt{MaxRL}                        & 2.82x  & 3.54e-08 \\
\bottomrule
\end{tabular}
\end{table}

We use all advantages within the PPO clipping framework
\begin{align}
    r_t(\theta) &= \frac{\pi_\theta(a_t \mid s_t)}{\pi_{\theta_{\text{old}}}(a_t \mid s_t)}, \\
    L_t(\theta) &= -\min \Big(r_t(\theta)\,\hat{A}_i, \text{clip}\big(r_t(\theta),\, 1{-}\varepsilon_{\text{low}},\, 1{+}\varepsilon_{\text{high}}\big)\,\hat{A}_i\Big)
\end{align}
with $\varepsilon_{\text{low}} = \varepsilon_{\text{high}} = 0.2$ for Qwen 2.5 7B and $\varepsilon_{\text{low}} = 0.2$, $\varepsilon_{\text{high}} = 0.25$ for CWM 32B. The aggregated loss is $\mathcal{L}(\theta) = \frac{1}{T_{\max}} \sum_{i} \sum_{t \in \mathcal{A}_i} L_t(\theta)$ where $\mathcal{A}_i$ is the set of agent (model-generated) token positions in rollout $i$, and $T_{\max}$ is the maximum token budget per microbatch (8192 $\times$ batch\_size for the 7B model and 32768 $\times$ batch size for the 32B model).

\subsection{Full Evaluation Results}
\label{sec:more_results}

We evaluate models trained with different advantage functions at their peak performance during training (typically around $20 \times 10^3$ RL steps), using the same training setup across all runs and varying only the reward normalization. To reduce variance across checkpoints, we select evaluation points where pass@$k$ has stabilized, meaning at least $1000$ consecutive steps yield similar results. For each advantage their required number of training steps can be found in \cref{tab:rl-results-v5}. For each pass@$k$ metric, we draw $2k$ samples and compute the mean over $400$ groups of size $2k$ to estimate the standard deviation. Results can be found for LiveCodeBench v6 in \cref{tab:rl-results-454}, v5 in \cref{tab:rl-results-v5} and for AIME benchmarks in \cref{tab:pass_at_k_7b} and \cref{tab:pass_at_k_32b}.

\begin{table}[ht]
        \centering
        \footnotesize
        \setlength{\tabcolsep}{3pt}
        \renewcommand{\arraystretch}{1.15}
        \begin{tabular}{l ccc cccc}
        \toprule
        & \multicolumn{3}{c}{\textbf{Qwen 2.5 7B SFT}} & \multicolumn{4}{c}{\textbf{CWM 32B}} \\
        \cmidrule(lr){2-4} \cmidrule(lr){5-8}
        & \multicolumn{3}{c}{8k budget} & \multicolumn{4}{c}{30k budget} \\
        \cmidrule(lr){2-4} \cmidrule(lr){5-8}
        \textbf{Method} & p@1 & p@10 & p@100 & p@1 & p@10 & p@100 & p@200 \\ \midrule
        \rowcolor[gray]{0.92} \multicolumn{8}{@{}l}{\textit{Baselines}} \\
        SFT model & 15.2 & 42.3 & 52.2 & 37.5{\scriptsize$\pm$3.4} & 54.0{\scriptsize$\pm$4.0} & 69.0 & 72.5 \\
        REINFORCE & 29.4{\scriptsize$\pm$0.0} & 45.3{\scriptsize$\pm$5.2} & 55.8{\scriptsize$\pm$3.0} & --- & --- & --- & --- \\
        GRPO & 37.4{\scriptsize$\pm$2.4} & 53.6{\scriptsize$\pm$2.1} & 62.5{\scriptsize$\pm$0.2} & 62.0{\scriptsize$\pm$1.6} & 76.7{\scriptsize$\pm$0.6} & 83.7 & 85.0 \\
        \midrule
        \rowcolor[gray]{0.92} \multicolumn{8}{@{}l}{\textit{Existing methods}} \\
        Implicit p@2 & 39.7{\scriptsize$\pm$1.7} & 56.2{\scriptsize$\pm$0.8} & 65.2{\scriptsize$\pm$0.9} & --- & --- & --- & --- \\
        Implicit p@8 & 38.0{\scriptsize$\pm$2.0} & 54.3{\scriptsize$\pm$1.6} & 64.7{\scriptsize$\pm$0.6} & --- & --- & --- & --- \\
        LME $\beta=0.4$ & 35.9{\scriptsize$\pm$2.8} & 52.5{\scriptsize$\pm$3.1} & 62.0{\scriptsize$\pm$1.0} & --- & --- & --- & --- \\
        Pass@8 & 33.8{\scriptsize$\pm$2.5} & 51.7{\scriptsize$\pm$3.1} & 61.1{\scriptsize$\pm$1.0} & --- & --- & --- & --- \\
        Asym.\ norm & 38.1{\scriptsize$\pm$1.7} & 53.4{\scriptsize$\pm$1.0} & 63.0{\scriptsize$\pm$0.6} & --- & --- & --- & --- \\
        \midrule
        \rowcolor[gray]{0.92} \multicolumn{8}{@{}l}{\textit{New Methods}} \\
        Asym $\delta=0.5$ & 35.7{\scriptsize$\pm$2.0} & 51.8{\scriptsize$\pm$1.4} & 60.5{\scriptsize$\pm$0.1} & --- & --- & --- & --- \\
        Power $\alpha=2$ & 40.3{\scriptsize$\pm$1.2} & 56.6{\scriptsize$\pm$0.7} & 66.0{\scriptsize$\pm$0.9} & 61.2{\scriptsize$\pm$1.4} & 76.7{\scriptsize$\pm$0.6} & 82.9 & 83.9 \\
        Power $\alpha$ asym (2,2) & 40.0{\scriptsize$\pm$1.7} & 56.0{\scriptsize$\pm$1.3} & 64.9{\scriptsize$\pm$0.2} & 62.8{\scriptsize$\pm$1.1} & 76.2{\scriptsize$\pm$0.8} & 83.3 & 84.6 \\
        FADE $h^*=1.0$ & \textbf{40.7}{\scriptsize$\pm$1.0} & \textbf{58.0}{\scriptsize$\pm$0.7} & \textbf{68.2}{\scriptsize$\pm$1.2} & \textbf{63.1}{\scriptsize$\pm$1.2} & \textbf{77.9}{\scriptsize$\pm$0.9} &
  \textbf{85.3} & \textbf{86.3} \\
        \bottomrule
        \end{tabular}
        \caption{Pass@$k$ evaluation results with standard deviations on LiveCodeBench v6 (454 problems, Aug 2024-May 2025).}
        \label{tab:rl-results-454}
  \end{table}

\begin{table}[ht]
      \centering
      \footnotesize
      \setlength{\tabcolsep}{3pt}
      \renewcommand{\arraystretch}{1.15}
      \caption{Pass@k evaluation on AIME benchmarks for Qwen 2.5 7B SFT (14k training budget) (\%).}
      \label{tab:pass_at_k_7b}
      \begin{tabular}{l cccccc}
      \toprule
      & \multicolumn{3}{c}{AIME2024} & \multicolumn{3}{c}{AIME2025} \\
      \cmidrule(lr){2-4} \cmidrule(lr){5-7}
      \textbf{Method} & p@1 & p@10 & p@100 & p@1 & p@10 & p@100 \\
      \midrule
      \rowcolor[gray]{0.92} \multicolumn{7}{@{}l}{\textit{Baselines}} \\
      SFT baseline                   & 32.6 {\tiny$\pm$4.2} & 65.3 {\tiny$\pm$1.2} & 78.1 {\tiny$\pm$4.2} & 21.3 {\tiny$\pm$3.5} & 46.2 {\tiny$\pm$3.0} & 70.8 {\tiny$\pm$5.0} \\
      GRPO                           & 37.9 {\tiny$\pm$5.2} & 70.4 {\tiny$\pm$1.9} & 81.4 {\tiny$\pm$2.7} & 24.6 {\tiny$\pm$3.2} & 50.3 {\tiny$\pm$2.5} & 68.3 {\tiny$\pm$3.3} \\
      \midrule
      \rowcolor[gray]{0.92} \multicolumn{7}{@{}l}{\textit{Existing methods}} \\
      Implicit p@2                   & 36.0 {\tiny$\pm$4.2} & 66.1 {\tiny$\pm$2.8} & 73.3 {\tiny$\pm$0.0} & 24.9 {\tiny$\pm$3.0} & 45.0 {\tiny$\pm$3.3} & 63.7 {\tiny$\pm$4.7} \\
      Implicit p@8                   & 40.3 {\tiny$\pm$4.0} & 69.6 {\tiny$\pm$1.9} & 76.7 {\tiny$\pm$1.7} & 27.5 {\tiny$\pm$3.4} & 52.5 {\tiny$\pm$3.3} & 69.1 {\tiny$\pm$5.0} \\
      Asym.\ norm                    & 40.2 {\tiny$\pm$4.2} & 69.1 {\tiny$\pm$1.7} & 78.3 {\tiny$\pm$4.3} & 27.5 {\tiny$\pm$3.4} & 52.0 {\tiny$\pm$2.4} & 63.3 {\tiny$\pm$3.3} \\
      Pass@8                         & 35.7 {\tiny$\pm$4.6} & 68.0 {\tiny$\pm$2.9} & 82.1 {\tiny$\pm$3.2} & 24.4 {\tiny$\pm$3.0} & 49.3 {\tiny$\pm$2.4} & 70.4 {\tiny$\pm$6.9} \\
      LME $\beta=0.4$                & 39.3 {\tiny$\pm$4.7} & 71.0 {\tiny$\pm$3.4} & 82.4 {\tiny$\pm$1.9} & 25.4 {\tiny$\pm$3.1} & 51.0 {\tiny$\pm$4.0} & \textbf{72.5 {\tiny$\pm$6.9}} \\
      \midrule
      \rowcolor[gray]{0.92} \multicolumn{7}{@{}l}{\textit{New methods}} \\
      Asym.\ GRPO $\delta=0.5$       & 43.1 {\tiny$\pm$4.6} & 69.1 {\tiny$\pm$1.0} & 79.2 {\tiny$\pm$2.7} & 26.6 {\tiny$\pm$3.0} & 48.3 {\tiny$\pm$3.2} & 66.0 {\tiny$\pm$3.2} \\
      Power $\alpha=2$                  & 41.8 {\tiny$\pm$4.4} & 70.4 {\tiny$\pm$1.3} & 77.9 {\tiny$\pm$1.7} & 26.6 {\tiny$\pm$2.8} & 50.0 {\tiny$\pm$3.0} & 64.9 {\tiny$\pm$1.7} \\
      Power $\alpha$ asym (2,2)                  & 41.1 {\tiny$\pm$4.2} & 71.9 {\tiny$\pm$1.9} & \textbf{86.1 {\tiny$\pm$5.0}} & 27.4 {\tiny$\pm$3.4} & 53.3 {\tiny$\pm$2.2} & 66.6 {\tiny$\pm$2.7} \\
      \rowcolor[gray]{0.92} \multicolumn{7}{@{}l}{\textit{FADE (ours)}} \\
      FADE $h^*=0.5$                 & 41.9 {\tiny$\pm$4.2} & 70.9 {\tiny$\pm$1.1} & 79.6 {\tiny$\pm$3.2} & 27.8 {\tiny$\pm$2.7} & 52.9 {\tiny$\pm$2.2} & 67.5 {\tiny$\pm$4.3} \\
      FADE $h^*=1.0$                 & \textbf{44.5 {\tiny$\pm$4.6}} & \textbf{72.3 {\tiny$\pm$2.2}} & 83.7 {\tiny$\pm$5.0} & 28.6 {\tiny$\pm$3.4} & 53.8 {\tiny$\pm$1.8} & 68.3 {\tiny$\pm$1.9} \\
      FADE $h^*=1.3$                 & 44.3 {\tiny$\pm$3.6} & 71.8 {\tiny$\pm$2.0} & 80.8 {\tiny$\pm$0.0} & \textbf{31.3 {\tiny$\pm$3.7}} & \textbf{55.4 {\tiny$\pm$2.9}} & 70.7 {\tiny$\pm$1.9} \\
      \bottomrule
      \end{tabular}
\end{table}

\begin{table}[ht]
      \centering
      \footnotesize
      \setlength{\tabcolsep}{3pt}
      \renewcommand{\arraystretch}{1.15}
      \caption{Pass@k evaluation on AIME benchmarks for CWM 32B (30k training budget) (\%).}
      \label{tab:pass_at_k_32b}
      \begin{tabular}{l cccccc}
      \toprule
      & \multicolumn{3}{c}{AIME2024} & \multicolumn{3}{c}{AIME2025} \\
      \cmidrule(lr){2-4} \cmidrule(lr){5-7}
      \textbf{Method} & p@1 & p@10 & p@100 & p@1 & p@10 & p@100 \\
      \midrule
      \rowcolor[gray]{0.92} \multicolumn{7}{@{}l}{\textit{Baselines}} \\
      GRPO                           & 58.4 {\tiny$\pm$5.0} & 87.4 {\tiny$\pm$0.8} & \textbf{94.9 {\tiny$\pm$1.7}} & 46.1 {\tiny$\pm$4.8} & 77.2 {\tiny$\pm$2.0} & 80.0 {\tiny$\pm$0.0} \\
      \midrule
      \rowcolor[gray]{0.92} \multicolumn{7}{@{}l}{\textit{New methods (ours)}} \\
      Power $\alpha=2$                  & 41.2 {\tiny$\pm$5.3} & 81.9 {\tiny$\pm$2.6} & 91.7 {\tiny$\pm$1.7} & 27.4 {\tiny$\pm$4.6} & 70.4 {\tiny$\pm$2.9} & \textbf{84.6 {\tiny$\pm$3.2}} \\
      Power asym (2,2)                  & \textbf{70.3 {\tiny$\pm$3.5}} & 89.2 {\tiny$\pm$1.5} & 93.3 {\tiny$\pm$1.7} & \textbf{58.7 {\tiny$\pm$4.2}} & 78.1 {\tiny$\pm$1.4} & 83.3 {\tiny$\pm$3.2} \\
      Asym FADE $h^*=1.0$            & 64.6 {\tiny$\pm$4.7} & \textbf{89.5 {\tiny$\pm$2.6}} & 93.3 & 52.7 {\tiny$\pm$5.1} & \textbf{78.5 {\tiny$\pm$1.6}} & 83.1 \\
      \bottomrule
      \end{tabular}
\end{table}

\begin{table}[ht]
    \centering
    \footnotesize
    \setlength{\tabcolsep}{3pt}
    \renewcommand{\arraystretch}{1.15}
    \begin{tabular}{l cccc ccccc}
    \toprule
    & \multicolumn{4}{c}{\textbf{Qwen 2.5 7B SFT}} & \multicolumn{5}{c}{\textbf{CWM 32B}} \\
    \cmidrule(lr){2-5} \cmidrule(lr){6-10}
    & \multicolumn{4}{c}{8k budget} & \multicolumn{5}{c}{30k budget} \\
    \cmidrule(lr){2-5} \cmidrule(lr){6-10}
    \textbf{Method} & Steps & p@1 & p@10 & p@100 & Steps & p@1 & p@10 & p@100 & p@200 \\ \midrule
    \rowcolor[gray]{0.92} \multicolumn{10}{@{}l}{\textit{Baselines}} \\
    SFT model & 0 & 21.9{\scriptsize$\pm$0.8} & 59.7{\scriptsize$\pm$0.4} & 71.2{\scriptsize$\pm$0.2} & 0 & 51.6{\scriptsize$\pm$0.8} & 75.6{\scriptsize$\pm$0.4} & 85.1 & 86.9
\\
    REINFORCE & 17.5k & 40.7{\scriptsize$\pm$0.6} & 60.2{\scriptsize$\pm$0.4} & 69.8{\scriptsize$\pm$0.2} & --- & --- & --- & --- & --- \\
    GRPO & 20k & 52.6{\scriptsize$\pm$0.7} & 71.8{\scriptsize$\pm$0.4} & 80.6{\scriptsize$\pm$0.2} & 13.4k & 77.3{\scriptsize$\pm$0.6} & 89.0{\scriptsize$\pm$0.2} & 92.0 & 92.6
\\
    \midrule
    \rowcolor[gray]{0.92} \multicolumn{10}{@{}l}{\textit{Existing methods}} \\
    Implicit p@2 & 30k & 55.3{\scriptsize$\pm$0.7} & 73.3{\scriptsize$\pm$0.4} & 80.9{\scriptsize$\pm$0.2} & --- & --- & --- & --- & --- \\
    Implicit p@8 & 30k & 55.7{\scriptsize$\pm$0.6} & 72.5{\scriptsize$\pm$0.4} & 80.9{\scriptsize$\pm$0.2} & --- & --- & --- & --- & --- \\
    LME $\beta=0.4$ & 20k & 51.0{\scriptsize$\pm$0.6} & 70.8{\scriptsize$\pm$0.4} & 79.9{\scriptsize$\pm$0.2} & --- & --- & --- & --- & --- \\
    Pass@8 & 30k & 48.3{\scriptsize$\pm$0.7} & 70.5{\scriptsize$\pm$0.4} & 80.0{\scriptsize$\pm$0.2} & --- & --- & --- & --- & --- \\
    Asym.\ norm & 20k & 54.1{\scriptsize$\pm$0.6} & 70.9{\scriptsize$\pm$0.4} & 79.2{\scriptsize$\pm$0.2} & --- & --- & --- & --- & --- \\
    \midrule
    \rowcolor[gray]{0.92} \multicolumn{10}{@{}l}{\textit{New Methods}} \\
    Asym $\delta=0.5$ & 40k & 53.4{\scriptsize$\pm$0.6} & 71.1{\scriptsize$\pm$0.4} & 79.6{\scriptsize$\pm$0.2} & --- & --- & --- & --- & --- \\
    Power $\alpha=2$ & 50k & \textbf{56.3}{\scriptsize$\pm$0.6} & \textbf{73.8}{\scriptsize$\pm$0.4} & \textbf{82.1}{\scriptsize$\pm$0.2} & 17.5k & 77.9{\scriptsize$\pm$0.6} &
\textbf{89.4}{\scriptsize$\pm$0.2} & 92.3 & 92.7 \\
    Power $\alpha$ asym (2,2) & 20k & 56.1{\scriptsize$\pm$0.6} & 73.6{\scriptsize$\pm$0.4} & 81.5{\scriptsize$\pm$0.2} & 15k & 78.7{\scriptsize$\pm$0.5} &
88.9{\scriptsize$\pm$0.2} & 92.0 & 92.6 \\
    FADE $h^*=1.0$ & 30k & 56.1{\scriptsize$\pm$0.6} & 73.6{\scriptsize$\pm$0.3} & 82.0{\scriptsize$\pm$0.2} & 15k & \textbf{78.9}{\scriptsize$\pm$0.5} &
89.3{\scriptsize$\pm$0.2} & \textbf{92.5} & \textbf{93.2} \\
    \bottomrule
    \end{tabular}
    \caption{Pass@$k$ evaluation results with standard deviations on LiveCodeBench v5 (879 problems: 279 easy, 330 medium, 270 hard). 7B at 8k token budget, CWM 32B at 30k. Steps is the RL step selected for evaluation when evaluation results plateau (SFT $=0$, i.e.\ no RL).}
    \label{tab:rl-results-v5}
\end{table}

\subsection{Gradient Ratios in Practice}
\label{sec:appendix_entropy}

In Section \ref{sec:entropy_collapse}, we claim the speed of entropy collapse depends on the positive-to-negative ratio $\rho(p) = \frac{m_S p}{m_F q}$ and show empirically in Figure \ref{fig:entropy} a strong correlation between this ratio and the final policy entropy after a fixed number of training steps. Mid-way through training (at approximately 15k gradient steps), we observed an average solve rate of $0.3$ for Qwen 2.5 7B and $0.5$ for the CWM 32B. Using those values for $\bar{p}$, Table \ref{tab:gradient_ratios} shows the positive to negative mass ratios used per policy weight. All sign-balanced methods share $\rho(p) = \frac{p}{1-p}$ since $m_S = m_F$.

To compare entropy collapse under equalized gradient scales, we adapt the learning rate $\eta$ for each method using their $\eta \cdot \max_i |A_i|$ magnitude. The reference $\eta$ is $1\mathrm{e}{-7}$ for the Qwen 2.5 7B and $1.8\mathrm{e}{-7}$ for CWM 32B (see \cref{tab:grad_ratio_7b}). Using GRPO as a reference, policy weights with smaller maximum values get larger learning rates and vice versa. This ensures differences are due to the policy weight function, not its overall scale. Figure \ref{fig:equalized_lr} shows how non-equalized learning rate can speed up or slow down entropy collapse.

\section{Pass@$k$ Scaling Law: Proof and Fitting Details}
\label{sec:proof_theorem}

We frame diversity as: how fast does pass@$k$ increase with $k$? If we define $\text{pass@}k = \mathbb{E}_p[1 - (1-p)^k] = 1 - \mathbb{E}_p[(1-p)^k]$ where $p$ is the per-problem pass@$1$ rate, the asymptotic behavior depends on the distribution of $p$ near $0$.
\subsection{Theory of Scaling Laws of Pass@k}
\label{sec:proof_scaling_laws}

We adapt the proof from \citet{schaeffer25a}. Let $p_D(p)$ be the density of pass@$1$ rates over a dataset $D$, and $\mathrm{pass}_{D@k} = \mathbb{E}_{p \sim D}[1 - (1 - p)^k]$.

\begin{theorem}[Dichotomy of Scaling Behavior]
The asymptotic behavior of $\mathrm{pass}_{D@k}$ as $k \to \infty$ is determined by $p_D(p)$ near $p = 0$:
\begin{enumerate}[nosep]
\item \textbf{Power-law regime:} If $p_D(p) = C p^{b-1} + O(p^{b-1+\theta})$ as $p \to 0^+$ ($C,b,\theta > 0$), then $\mathrm{pass}_{D@k} = 1 - C\,\Gamma(b)\,k^{-b} + o(k^{-b})$.
\item \textbf{Rapidly decaying regime:} If $p_D(p) \le C \exp(-c / p^\alpha)$ as $p \to 0^+$ ($c,\alpha > 0$), then $1 - \mathrm{pass}_{D@k} = o(k^{-b})$ for all $b > 0$.
\end{enumerate}
\end{theorem}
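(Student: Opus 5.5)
Both regimes start from the identity $1-\mathrm{pass}_{D@k} = \int_0^1 (1-p)^k\, p_D(p)\,dp$, and the task is a Laplace-type analysis of this integral as $k\to\infty$. Fix a small $\varepsilon\in(0,1)$ inside the neighbourhood where the local hypothesis on $p_D$ holds, and split the integral into $[0,\varepsilon]$ and $[\varepsilon,1]$. Since $p_D$ is a probability density, the tail satisfies
\[
\int_\varepsilon^1 (1-p)^k p_D(p)\,dp \le (1-\varepsilon)^k \le e^{-\varepsilon k}.
\]
This is exponentially small, hence $o(k^{-b})$ for every $b$. So in both regimes only the contribution near $p=0$ matters.

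\textbf{Power-law regime.} On $[0,\varepsilon]$, write $p_D(p) = Cp^{b-1} + R(p)$ with $|R(p)|\le C' p^{b-1+\theta}$.

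For the main term, extend the range back to $[0,1]$; the added piece $C\int_\varepsilon^1(1-p)^k p^{b-1}\,dp$ is again exponentially small. What remains is an exact Beta integral:
\[
C\,B(b,k+1) = C\,\frac{\Gamma(b)\Gamma(k+1)}{\Gamma(k+b+1)}.
\]
By Stirling, or Wendel's ratio $\Gamma(k+1)/\Gamma(k+1+b) = k^{-b}(1+O(1/k))$, this equals $C\Gamma(b)k^{-b}(1+O(k^{-1}))$.

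For the remainder, bound $|\int_0^\varepsilon (1-p)^k R(p)\,dp|$ by $C' B(b+\theta,k+1) = O(k^{-b-\theta})$, which is $o(k^{-b})$. Collecting terms gives $\mathrm{pass}_{D@k} = 1 - C\Gamma(b)k^{-b} + o(k^{-b})$.

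\textbf{Rapidly decaying regime.} Use $(1-p)^k\le e^{-kp}$ and bound the integral over $[0,\varepsilon]$ by
\[
C\int_0^\varepsilon \exp\!\bigl(-kp - c\,p^{-\alpha}\bigr)\,dp .
\]
The exponent $kp + cp^{-\alpha}$ is minimized at $p_*\asymp k^{-1/(1+\alpha)}$, where it is of order $k^{\alpha/(1+\alpha)}$. Hence $kp+cp^{-\alpha}\ge c''k^{\alpha/(1+\alpha)}$ for all $p>0$, and the integral is at most $C\varepsilon\exp(-c''k^{\alpha/(1+\alpha)})$. This is a stretched exponential, so it is $o(k^{-b})$ for every $b>0$. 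The explicit minimum comes from AM–GM, or from weighted Young's inequality applied to $kp$ and $cp^{-\alpha}$.

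\textbf{Main obstacle.} The only step that needs care is the uniformity of the remainder in the power-law case. The $O(p^{b-1+\theta})$ hypothesis is only local, which is why the cutoff $\varepsilon$ is needed. The exponentially small tail is then absorbed using only the fact that $p_D$ integrates to $1$, with no integrability assumption beyond that. The Gamma-ratio asymptotics are standard and can be quoted.
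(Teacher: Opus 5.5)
Your proposal is correct and follows the same Laplace-type argument as the paper: localize near $p=0$, discard an exponentially small tail, evaluate a Gamma-type integral in the power-law case, and minimize $kp + c\,p^{-\alpha}$ in the decaying case. The difference is one of rigor rather than route. You keep $(1-p)^k$ exact, evaluate $B(b,k+1)$ via Gamma-ratio asymptotics, and bound the $O(p^{b-1+\theta})$ remainder by $B(b+\theta,k+1)$. The paper instead replaces $(1-p)^k$ by $e^{-kp}$ and integrates over $[0,\infty)$ without controlling either error, so your write-up is a rigorous version of the same proof.
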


\begin{proof}
Since $(1-p)^k \approx e^{-kp}$ for large $k$, we have $1 - \mathrm{pass}_{D@k} \approx \int_0^1 e^{-kp}\,p_D(p)\,dp$, a Laplace-type integral dominated by small~$p$.

\textbf{Case 1.} With $p_D(p) \sim Cp^{b-1}$, the tail $[\epsilon,1]$ is exponentially small in $k$, so
$1 - \mathrm{pass}_{D@k} \approx C\int_0^\infty p^{b-1} e^{-kp}\,dp = C\,k^{-b}\,\Gamma(b)$,
where the last step uses $u = kp$.

\textbf{Case 2.} The integrand $e^{-kp - c/p^\alpha}$ is maximized at $p^* \sim k^{-1/(1+\alpha)}$, giving $\int_0^1 e^{-kp}\,p_D(p)\,dp \le \exp(-\Omega(k^{\alpha/(1+\alpha)}))$, which decays faster than any power of $k$.
\end{proof}

\subsection{From Asymptotic to Empirical Fits}

As shown above, the $k$ vs. pass@$k$ scaling law follows an exponential or polynomial growth as $k \to \infty$ based on the distribution of solve rates near $p$. We analyze the distribution of per-prompt solve rates $\hat{p}$ across the training set at fixed checkpoint steps for the Qwen 2.5 7B (Figures~\ref{fig:distribution_fits}) and CWM 32B models (Figure~\ref{fig:distribution_evolution}). Rather than shifting smoothly from hard to easy, the solve-rate distribution is strongly bimodal at both scales, with most mass concentrated near $p=0$ (unsolved) and $p=1$ (fully solved) and relatively little weight in the intermediate range.

\begin{figure*}[h]
  \centering
  \begin{subfigure}[t]{0.48\textwidth}
    \centering
    \includegraphics[width=\linewidth]{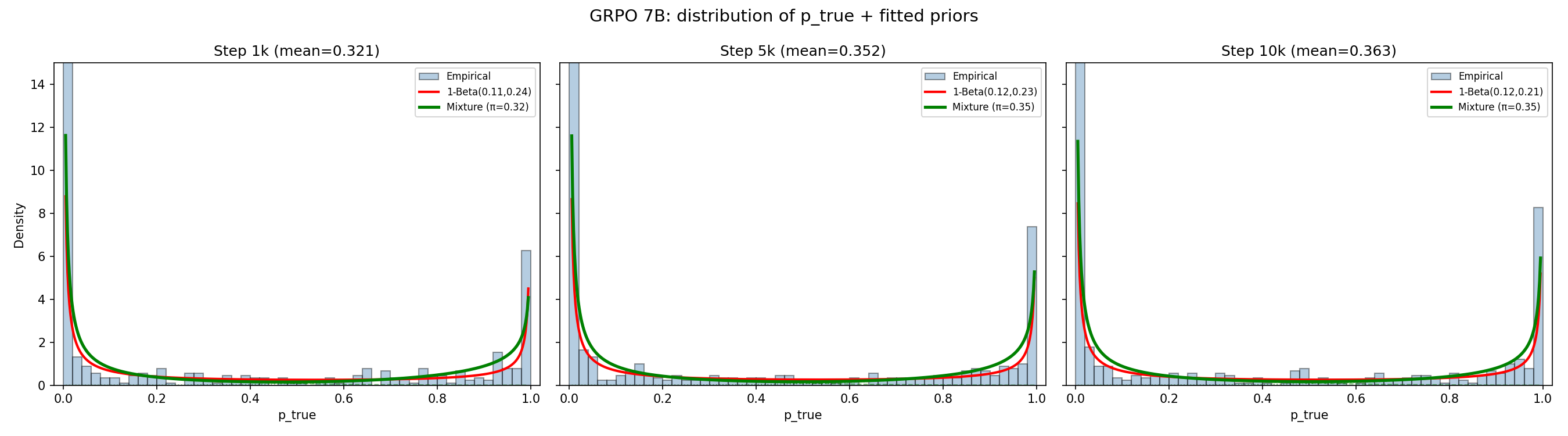}
    \caption{7B}
    \label{fig:distribution_fits}
  \end{subfigure}
  \hfill
  \begin{subfigure}[t]{0.48\textwidth}
    \centering
    \includegraphics[width=\linewidth]{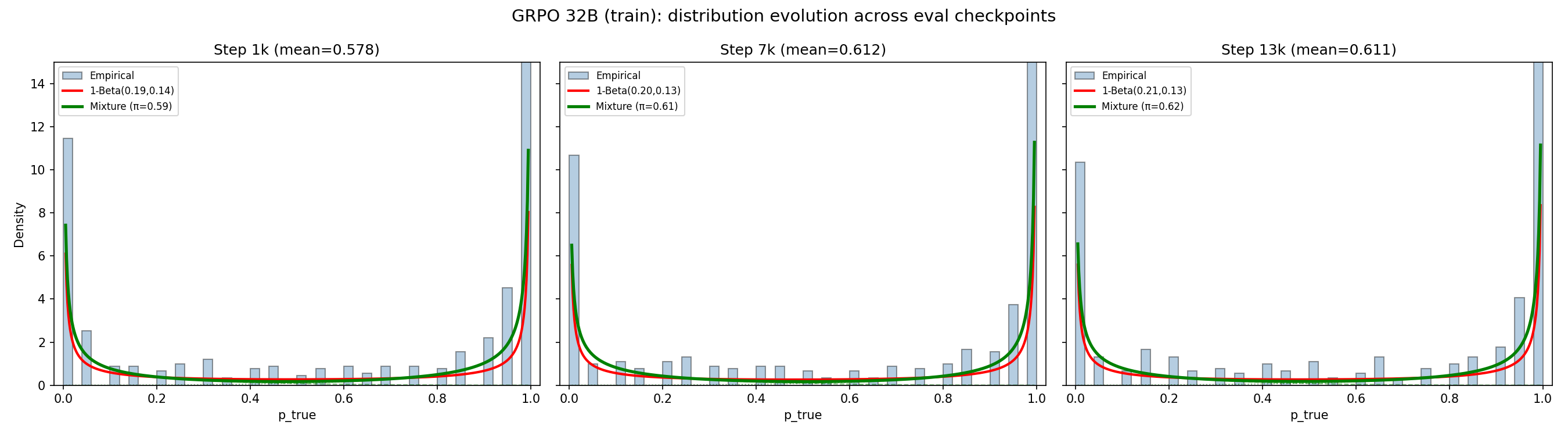}
    \caption{32B}
    \label{fig:distribution_evolution}
  \end{subfigure}
  \caption{\textbf{Solve-rate distribution during training.} Distribution of per-prompt solve rates at fixed checkpoint steps for (a) 7B and (b) 32B. We compare fits minimizing the mean squared error (MSE) with a Beta-Binomial distribution $\mathrm{BetaBin}(G, a, b)$ and a two component Beta mixture $\pi,\mathrm{Beta}(a_1,b_1) + (1-\pi),\mathrm{Beta}(a_2,b_2)$.}
  \label{fig:solve_rate_distribution}
\end{figure*}

We found both the power and exponential laws tended to overestimate pass@$k$ values for lower $k$ so for geometric fitting rather than asymptotic behavior prediction, we opted for a shifted power law.
We use $G(k) = \exp\bigl(-a(k+k_0)^{-b}\bigr)$ with three parameters: $a$ (scale), $b$ (exponent), and $k_0$ (horizontal shift) (see Figure~\ref{fig:fits_budgets}, \ref{fig:difficulty_fits}, \ref{fig:scaling_laws_rl}) which in practice has the lowest MSE across problem difficulty splits.

\begin{figure}[h]
    \centering
    \includegraphics[width=0.48\linewidth]{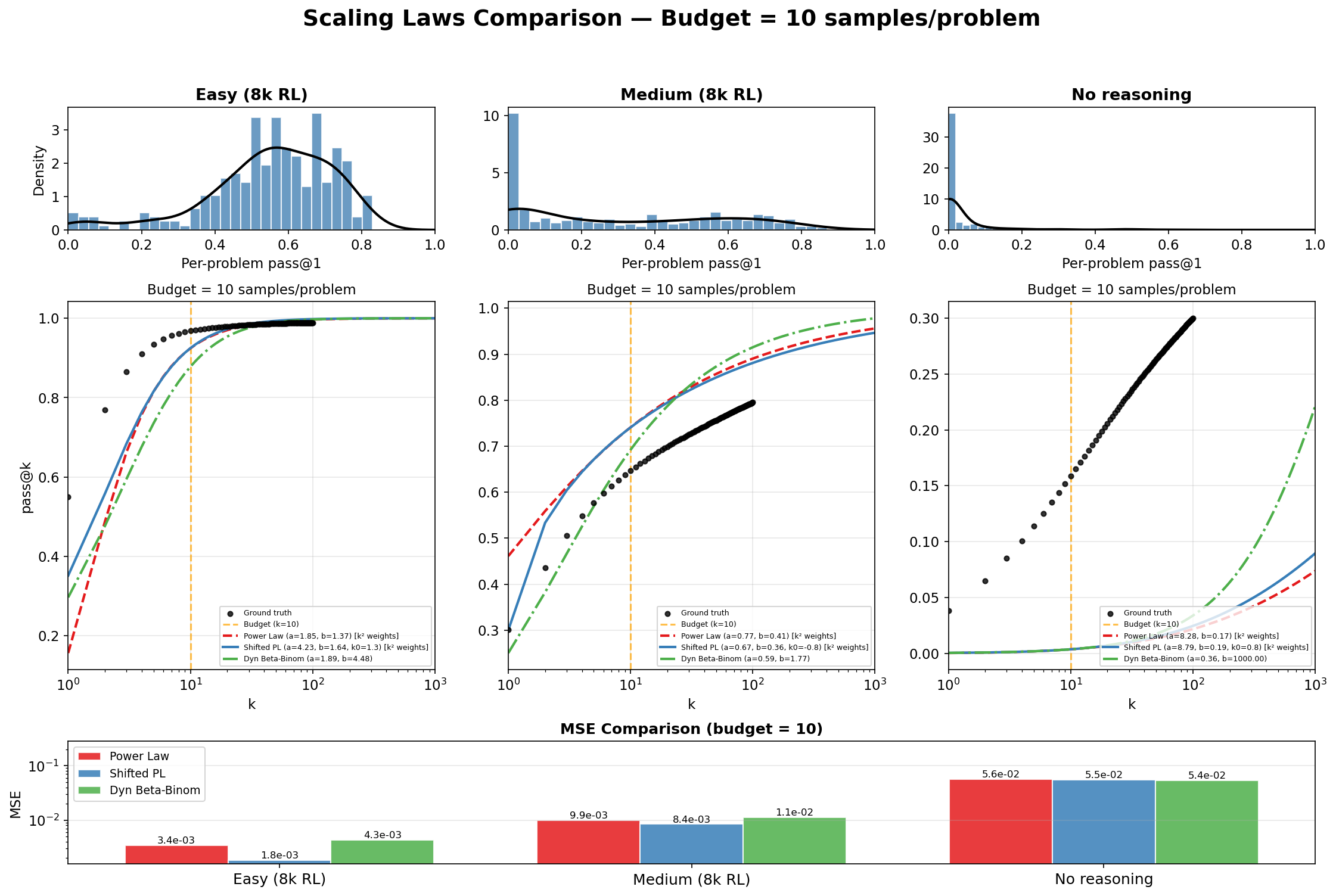}
    \includegraphics[width=0.48\linewidth]{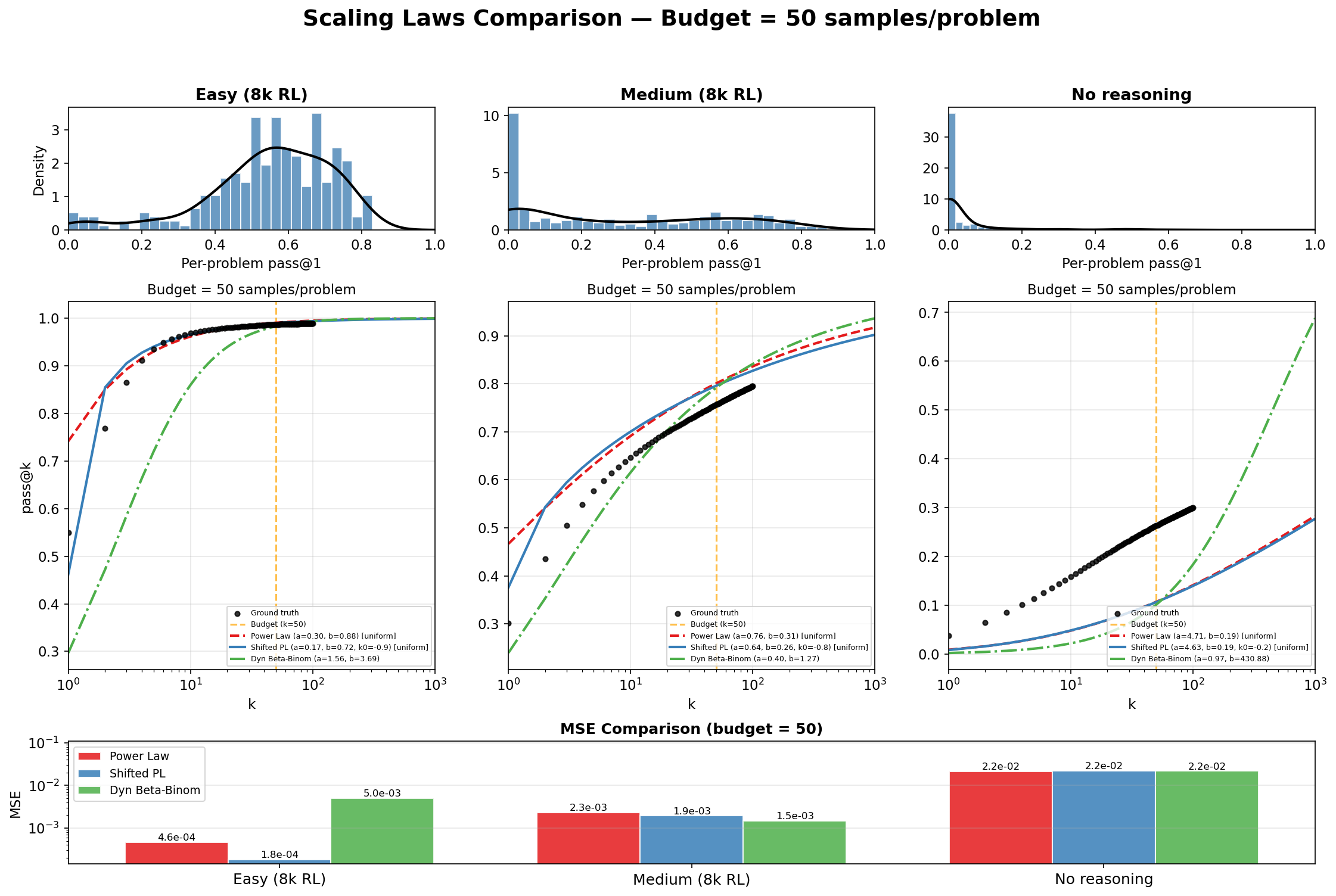}
    \caption{\textbf{Shifted power-law fits across budgets.} Pass@$k$ curves and their fitted shifted power laws at budget~10 (left) and budget~50 (right).}
    \label{fig:fits_budgets}
\end{figure}

\begin{figure*}[h]
\begin{minipage}[t]{0.55\textwidth}
    \vspace{0pt}
    \centering
    \includegraphics[width=\linewidth]{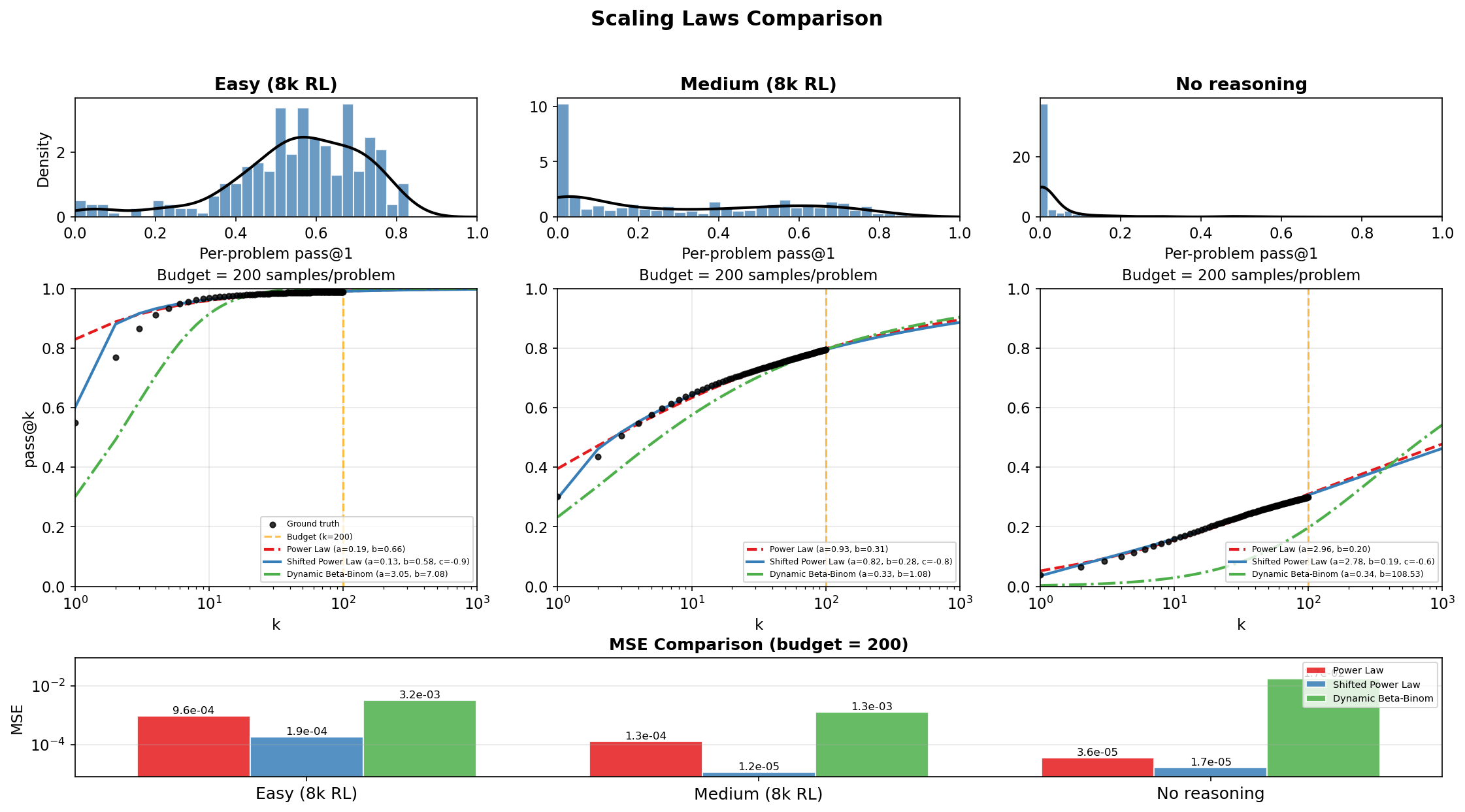}
    \captionof{figure}{\textbf{Shifted Power Law Predicts Pass@k} across different difficulty splits for an estimated pass@$1$ distribution (from 200 samples) with small (left), medium (middle) and high mass near 0.}
    \label{fig:difficulty_fits}
\end{minipage}%
\hfill
\begin{minipage}[t]{0.43\textwidth}
    \vspace{0pt}
    \centering
    \footnotesize
    \begin{tabular}{@{}llcc@{}}
    \toprule
    Method & $\rho(p)$ & $\rho(0.3)$ & $\rho(0.5)$ \\
    \midrule
    \multicolumn{4}{@{}l}{\textit{Sign-balanced ($m_S = m_F$)}} \\
    All (GRPO, MaxRL, T2T, PA) & $\frac{p}{q}$ & 0.43 & 1.00 \\
    \midrule
    \multicolumn{4}{@{}l}{\textit{Sign-biased ($m_S \neq m_F$)}} \\
    REINFORCE & $\frac{p^2}{q^2}$ & 0.18 & 1.00 \\
    W-REINFORCE & $\frac{\lambda p^2}{q^2}$ & $0.18\lambda$ & $\lambda$ \\
    LogMeanExp ($\beta = 0.4$) & $\frac{p^2 e^{\beta}}{q^2}$ & 0.27 & 1.49 \\
    CoRPO ($r_{\min}{=}0.25$) & $\frac{p^2(1{-}r_{\min})}{q^2 r_{\min}}$ & 0.55 & 3.00 \\
    Asym.\ GRPO $\delta = 0.5$ & $\frac{\delta p}{q}$ & $0.21$ & $0.50$ \\
    Asym.\ Power $\alpha$ (2,2) & $p^{2-\alpha_f}\, q^{\alpha_s-2}$ & 1.00 & 1.00 \\
    \bottomrule
    \end{tabular}
    \captionof{table}{Positive-to-negative ratio $\rho(p) = \frac{m_S \cdot p}{m_F \cdot q}$ for each advantage function, evaluated at two training solve rates. Since Pass@$k$ only has positive weights we remove it from the calculation.}
    \label{tab:gradient_ratios}
\end{minipage}
\end{figure*}

\begin{figure}[h]
    \centering
    \includegraphics[width=\linewidth]{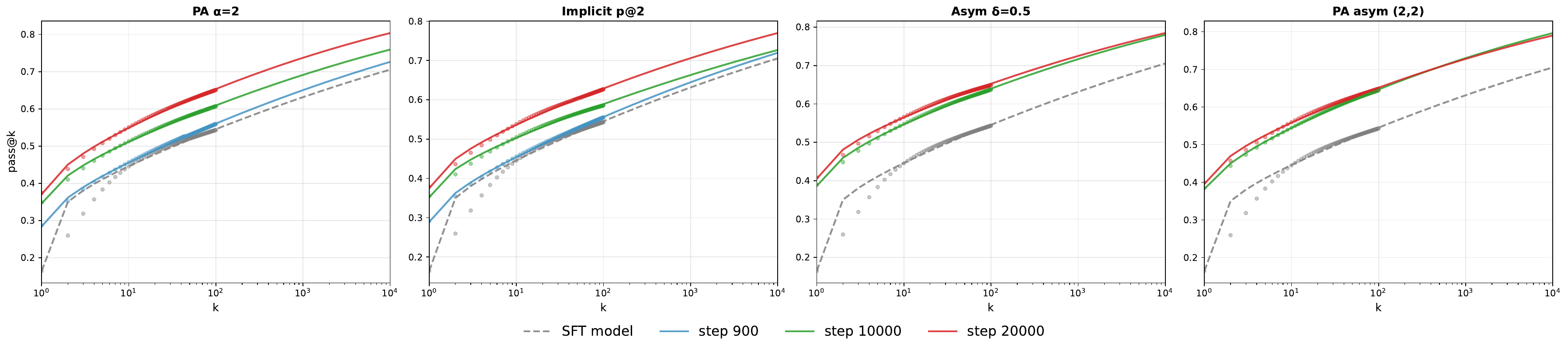}
    \caption{\textbf{Evolution of the k vs.\ pass@k curve during RL} for different advantage functions with similar pass@100 performances. We shorten the power $\alpha$ series to PA.}
    \label{fig:scaling_laws_rl}
\end{figure}

\section{Proof Entropy Change}
\label{sec:gradient_ratios}

We adapt the proof of \citet{cui2025entropy} to the discrete update setting and generalize to advantages with $\mathbb{E}[A] \neq 0$.

\begin{theorem}[Entropy change per update step]
\label{thm:entropy_cov}
Let $\pi_\theta(a|s)$ be a policy parameterized by $\theta$, with entropy $\mathcal{H}(\pi) = -\sum_a \pi(a|s) \log \pi(a|s)$. Suppose the parameters are updated via a policy gradient step $\theta_{t+1} = \theta_t + \eta\, \mathbb{E}_{a \sim \pi}[A \nabla_\theta \log \pi_\theta(a|s)]$ where $A$ is an advantage function with $\mathbb{E}[A] = m_S - m_F$. Then, under a local projection assumption, the per-step entropy change satisfies:
$\Delta\mathcal{H} \approx \eta \left[(m_S - m_F)\,\mathcal{H} - \operatorname{Cov}(A, \log \pi_\theta)\right] + O(\eta^2)$,
where the approximation is a first-order Taylor expansion in the learning rate $\eta$.
\end{theorem}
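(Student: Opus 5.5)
We follow the route of \citet{cui2025entropy}: first expand the entropy change in logit space, then show that the policy-gradient step moves the logits by an amount proportional to $A$. The nonzero mean of $A$ then produces the drift term. Fix a state $s$ and, for this single state, treat the policy as a softmax over logits $z_a$, so $\pi(a) = e^{z_a}/\sum_b e^{z_b}$. We write $\mathcal{H}$ as a function of $z$ and expand to first order in the logit change, $\mathcal{H}(z+\Delta z) = \mathcal{H}(z) + \langle \nabla_z \mathcal{H}, \Delta z\rangle + O(\|\Delta z\|^2)$. The standard softmax computation gives
\[
\partial_{z_a}\mathcal{H} = -\pi(a)\bigl(\log \pi(a) + \mathcal{H}\bigr).
\]
Hence the first-order change is
\[
\Delta\mathcal{H} \approx -\mathbb{E}_{a\sim\pi}\bigl[(\log\pi(a) + \mathcal{H})\,\Delta z_a\bigr] = -\operatorname{Cov}_{a\sim\pi}\bigl(\log \pi(a), \Delta z_a\bigr),
\]
where the last equality holds because $\mathbb{E}_\pi[\log\pi(a)+\mathcal{H}] = 0$.

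The second step is the local projection assumption. We interpret it as the statement that, to first order, the parameter step acts on the logits as a natural-gradient / tabular step, $\Delta z_a = \eta\, A(a) + \eta\, c + O(\eta^2)$, for some constant $c$ that is common to all actions. In the tabular softmax parametrization with a vanilla gradient one instead obtains $\Delta z_a = \eta\,\pi(a)(A(a) - \mathbb{E}_\pi A)$. Cui et al. use the natural-gradient form, and we adopt the same convention. Substituting into the covariance formula, the additive shift contributes nothing to a covariance in the centered form. This leaves
\[
\Delta\mathcal{H} \approx -\eta\operatorname{Cov}(A,\log\pi).
\]
This is the standard covariance term.

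To obtain the drift term, we stop centering and account for the fact that $\mathbb{E}[A] = m_S - m_F \neq 0$. We split $A = (A - \mathbb{E}A) + \mathbb{E}A$. The centered part yields the covariance term as above. For the constant part, the point is that with a sign-imbalanced weight the update is not a pure reallocation of logits. Under the projection assumption we model the mean component as rescaling the log-probabilities, $\Delta \log\pi(a) \approx -\eta\,\mathbb{E}[A]\,\log\pi(a)$ up to normalization. This is an effective temperature change: a positive net mass $m_S > m_F$ sharpens the policy when expressed through the expected score, and inserting it into $-\mathbb{E}[(\log\pi+\mathcal{H})\Delta z]$ produces $\eta\,\mathbb{E}[A]\,\mathcal{H}$. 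I would verify this by computing $\mathbb{E}_\pi[(\log\pi + \mathcal{H})\log\pi] = \operatorname{Var}(\log\pi)$ and then identifying the resulting scalar with $\mathcal{H}$ via the chosen projection. Finally, we collect the two pieces and bound the remainders by $O(\eta^2)$ using bounded logits and $A$.

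The main obstacle is the drift term. In exact softmax calculus a constant added to all logits leaves $\pi$ unchanged, so $(m_S - m_F)\mathcal{H}$ cannot arise from a literal tabular computation. It must instead come from the "local projection assumption", and that assumption has to be stated explicitly. My first attempt will be to posit that the shared component $\mathbb{E}[A]\,\mathbb{E}[\nabla\log\pi]$, which is nonzero for sampled batches, projects onto the logit direction $\log\pi$, i.e. onto the temperature direction. The coefficient is then fixed so that the drift equals $(m_S-m_F)\mathcal{H}$, and the result is stated as holding under that modeling assumption rather than as an exact identity.
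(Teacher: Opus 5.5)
There is a genuine gap, and it is the one you flagged: the drift term. Your centered computation is correct. The logit expansion gives $\Delta\mathcal{H}\approx-\operatorname{Cov}_\pi(\log\pi,\Delta z)$, and with $\Delta z_a=\eta A(a)+\eta c$ this yields only $-\eta\operatorname{Cov}(A,\log\pi)$. Your proposed repair does not produce the stated formula. If the mean component acts as $\Delta z_a=-\eta\,\mathbb{E}[A]\log\pi(a)$, then $-\mathbb{E}_\pi[(\log\pi+\mathcal{H})\Delta z]=\eta\,(m_S-m_F)\operatorname{Var}_\pi(\log\pi)$. That is the varentropy, not $\eta\,(m_S-m_F)\mathcal{H}$, and the two differ in general: for a uniform policy on $n$ actions the varentropy is $0$ while $\mathcal{H}=\log n$. ``Identifying the scalar with $\mathcal{H}$'', or choosing the coefficient so the drift comes out right, assumes the conclusion rather than proving it. Also, scaling logits by $1-\eta\,\mathbb{E}[A]$ with $\mathbb{E}[A]>0$ flattens the policy rather than sharpening it. The formula's drift raises entropy when $m_S>m_F$, so a sharpening mechanism cannot be its source.

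The missing idea is where the paper places the assumption. It stays in parameter space and writes $\Delta\mathcal{H}\approx-\eta\,\mathbb{E}[\log\pi\,\nabla_\theta\log\pi]^\top\mathbb{E}[A\,\nabla_\theta\log\pi]$. Its local projection assumption is that this inner product equals the \emph{uncentered} expectation $\mathbb{E}[A\log\pi]$. The drift then drops out of the mean--covariance split $\mathbb{E}[A\log\pi]=\mathbb{E}[A]\,\mathbb{E}[\log\pi]+\operatorname{Cov}(A,\log\pi)$, using $\mathbb{E}[\log\pi]=-\mathcal{H}$ and $\mathbb{E}[A]=m_S-m_F$; no temperature model is needed.

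Your own observation explains why this identity must be posited rather than derived. Because $\mathbb{E}_\pi[\nabla_\theta\log\pi]=0$, the update is invariant under $A\mapsto A+c$. For a tabular softmax with $F=\operatorname{diag}(\pi)-\pi\pi^\top$, the natural-gradient score kernel is $\nabla\log\pi(a)^\top F^{+}\nabla\log\pi(b)=\delta_{ab}/\pi(a)-1$. That kernel returns $\operatorname{Cov}(A,\log\pi)$, not $\mathbb{E}[A\log\pi]$, and the difference is exactly the drift term $\eta\,(m_S-m_F)\mathcal{H}$. So the paper's remark that the projection ``holds exactly under natural policy gradients'' is not accurate, and the theorem is only as strong as the uncentered identity.

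To prove the statement as written:
\begin{enumerate}
\item State the uncentered identity explicitly as the hypothesis.
\item Conclude $\Delta\mathcal{H}\approx-\eta\,\mathbb{E}[A\log\pi]$.
\item Finish with the mean--covariance decomposition.
\end{enumerate}
Drop the temperature-direction argument.
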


\begin{proof}
\textbf{Step 1: First-order approximation.} The entropy after one update is $\mathcal{H}(\theta_{t+1}) = \mathcal{H}(\theta_t + \eta\, \mathbb{E}[A \nabla_\theta \log \pi])$. Expanding to first order in $\eta$:
\begin{equation}
    \Delta\mathcal{H} := \mathcal{H}(\theta_{t+1}) - \mathcal{H}(\theta_t) = \eta\, \nabla_\theta \mathcal{H}^\top \, \mathbb{E}[A \nabla_\theta \log \pi] + O(\eta^2).
    \label{eq:taylor_entropy}
\end{equation}
This linearization is the source of the approximation: higher-order terms in $\eta$ are neglected, so the result is accurate for small learning rates.

\textbf{Step 2: Entropy gradient.} Applying the product rule to $\nabla_\theta \mathcal{H}$:
\begin{align}
    \nabla_\theta \mathcal{H}(\pi) &= -\sum_a \bigl( \nabla_\theta \pi(a|s) \log \pi(a|s) + \pi(a|s) \nabla_\theta \log \pi(a|s) \bigr). \label{eq:entropy_grad_expand}
\end{align}
Using the log-derivative trick $\nabla_\theta \pi = \pi \nabla_\theta \log \pi$ and the identity $\sum_a \pi(a|s) \nabla_\theta \log \pi(a|s) = \nabla_\theta \sum_a \pi(a|s) = 0$, this simplifies to $\nabla_\theta \mathcal{H}(\pi) = -\mathbb{E}_{a \sim \pi} \left[\log \pi \;\nabla_\theta \log \pi\right]$

\textbf{Step 3: Substitution.} Inserting~\eqref{eq:entropy_grad_expand} into~\eqref{eq:taylor_entropy}:
\begin{equation}
    \Delta\mathcal{H} \approx -\eta\, \mathbb{E}[\log \pi \;\nabla_\theta \log \pi]^\top \mathbb{E}[A \;\nabla_\theta \log \pi]. \label{eq:entropy_inner}
\end{equation}

\textbf{Step 4: Local projection assumption.} When the score vectors $\nabla_\theta \log \pi$ span the relevant variation space, as holds exactly under natural policy gradients with the Fisher information metric, the inner product in parameter space reduces to an expectation in action space:
\begin{equation}
    \mathbb{E}[\log \pi \;\nabla_\theta \log \pi]^\top \mathbb{E}[A \;\nabla_\theta \log \pi] \approx \mathbb{E}[A \log \pi]. \label{eq:local_proj}
\end{equation}
This gives $\Delta\mathcal{H} \approx -\eta\, \mathbb{E}[A \log \pi]$.

\textbf{Step 5: Mean-covariance decomposition.} Decomposing $\mathbb{E}[A \log \pi]$ using $\mathbb{E}[A] = m_S - m_F$ and $\mathbb{E}_{a \sim \pi}[\log \pi] = -\mathcal{H}$:
\begin{align}
    \mathbb{E}[A \log \pi] &= \mathbb{E}[A]\,\mathbb{E}[\log \pi] + \operatorname{Cov}(A, \log \pi) = -(m_S - m_F)\,\mathcal{H} + \operatorname{Cov}(A, \log \pi). \label{eq:decompose}
\end{align}
Substituting:
\begin{equation}
    \Delta\mathcal{H} \approx \eta \left[(m_S - m_F)\,\mathcal{H} - \operatorname{Cov}(A, \log \pi)\right]. \label{eq:entropy_cov_final}
\end{equation}
When $m_S = m_F$ (sign-balanced advantages), the drift term vanishes and entropy dynamics are governed by the covariance alone: $\Delta\mathcal{H} \approx -\eta\,\operatorname{Cov}(A, \log \pi)$.
\end{proof}

\section{Weight-Estimation Variance of Power $\alpha$}
\label{sec:weight_variance}

The Power~$\alpha$ weight is $w(p)=C\,p\,(1{-}p)^{\alpha}$, where $C$ normalizes the peak to match GRPO:
\begin{equation}
  \max_p\;C\,p(1{-}p)^{\alpha}=\max_p\;p(1{-}p)=\tfrac{1}{4}.
\end{equation}
The maximum of $p(1{-}p)^{\alpha}$ is attained at $p^{\star}=1/(1{+}\alpha)$, giving $C=\frac{(1{+}\alpha)^{1+\alpha}}{4\,\alpha^{\alpha}}$.

In practice $p$ is unknown; we estimate it from $G$ independent rollouts as $\hat p=k/G$, where $k\sim\mathrm{Binomial}(G,p)$, so $\mathrm{Var}(\hat p)=p(1{-}p)/G$. Because the weight is a nonlinear function of $\hat p$, it inherits estimation noise. By the delta method ($G$ reasonably large),
\begin{equation}
  \mathrm{Var}\bigl(w(\hat p)\bigr)
  \;\approx\; \bigl[w'(p)\bigr]^{2}\,\mathrm{Var}(\hat p).
\end{equation}
Differentiating $w(p)=C\,p(1{-}p)^{\alpha}$:
\begin{equation}
  w'(p)=C\,(1{-}p)^{\alpha-1}\bigl[1-(1{+}\alpha)\,p\bigr],
\end{equation}
so the absolute variance is
\begin{equation}\label{eq:weight_var_abs}
  \mathrm{Var}\bigl(w(\hat p)\bigr)
  \;=\; \frac{C^{2}\,p\,(1{-}p)^{2\alpha-1}\,\bigl[1-(1{+}\alpha)\,p\bigr]^{2}}{G},
\end{equation}
and the relative variance (coefficient of variation squared) is
\begin{equation}\label{eq:weight_var_rel}
  \frac{\mathrm{Var}(w)}{w^{2}}
  \;=\; \frac{\bigl[1-(1{+}\alpha)\,p\bigr]^{2}}{G\,p\,(1{-}p)}.
\end{equation}
This vanishes at the weight mode $p=1/(1{+}\alpha)$ and grows quadratically in $\alpha$ away from it. Figure~\ref{fig:weight_variance} shows the variance per solve rate $p$ induced by different $\alpha$ powers. Larger $\alpha$ amplifies the weight-estimation noise for prompts away from the mode, adding a penalty $\frac{\mathrm{Var}(w)}{w^{2}}(s^{2}+v)$ to the effective per-prompt noise (see Eq.~\eqref{eq:snr_decomp}). This is a further downward force on the optimal $\alpha$, beyond the $N_\mathrm{eff}$ trade-off discussed in Section~\ref{sec:shape}. The effect scales as $O(1/G)$.

\begin{figure}[h]
    \centering
    \includegraphics[width=\linewidth]{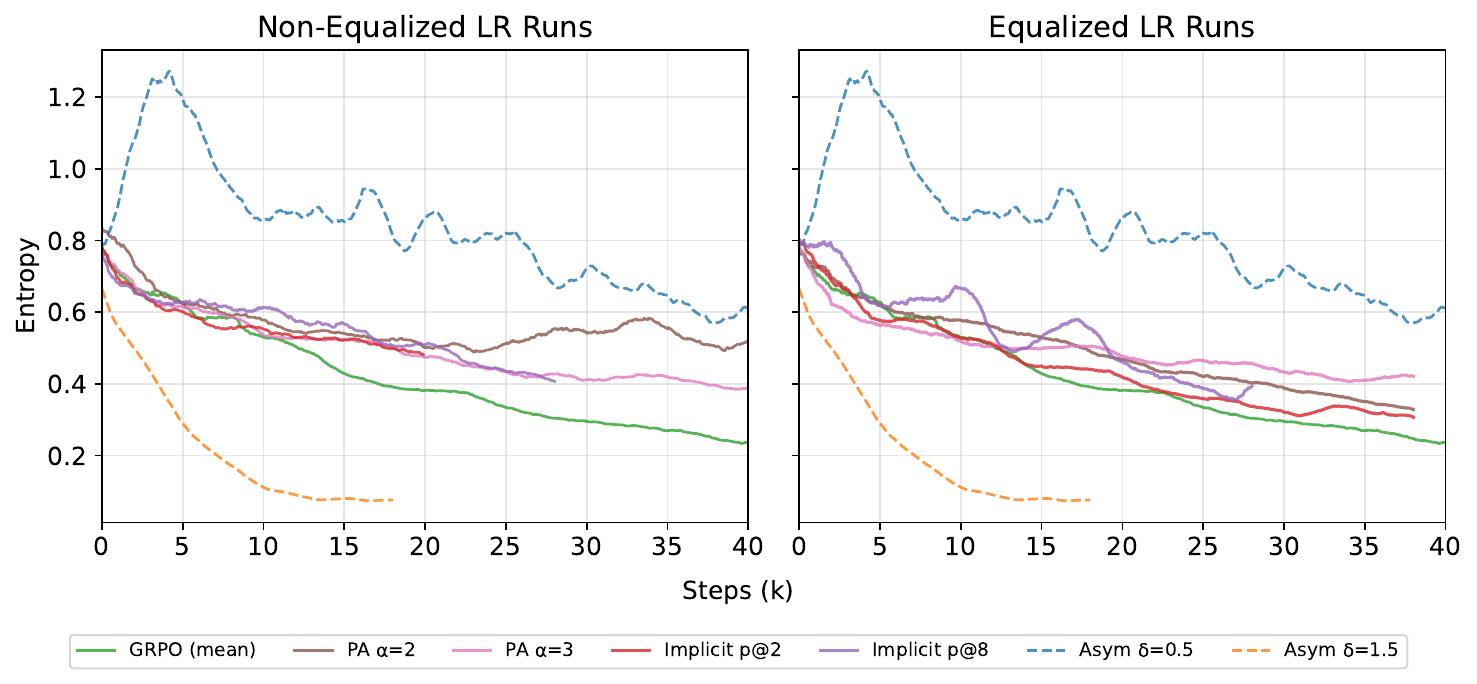}
    \caption{\textbf{Lower scale preserves entropy.} Different advantages at the same learning rate (right) vs.\ at equalized learning rates (left). Higher gradient ratios correlate with faster entropy collapse.}
    \label{fig:equalized_lr}
\end{figure}

At mean solve rate $\bar p$, replacing GRPO ($\alpha{=}1$) with $\alpha{=}3$ is justified only if harder prompts are more informative by $s(p)/s(\bar p)\ge [2(1{-}p)]^{2}$. Table~\ref{tab:signal_ratio} gives the concrete multipliers.

\begin{table}[h]
\centering
\caption{Minimum signal enrichment $s(p)/s(0.5)$ required for $\alpha{=}3$ to outperform GRPO at mean solve rate $\bar p{=}0.5$ (iid noise, $G{=}\infty$). At finite $G$ the multipliers ease by ${\le}30\%$ because the $p{=}0.5$ reference is itself penalized away from $\alpha{=}3$'s weight mode at $0.25$.}
\label{tab:signal_ratio}
\begin{tabular}{lccccc}
\toprule
solve rate $p$ & 0.10 & 0.20 & 0.25 & 0.30 & 0.40\\
\midrule
signal needed vs.\ GRPO & $3.2\times$ & $2.6\times$ & $2.25\times$ & $2.0\times$ & $1.4\times$\\
\bottomrule
\end{tabular}
\end{table}

\subsection{Sweet spot and the $\alpha$--$p$ relationship}
\label{sec:sweetspot}

The most informative difficulty is where the per-prompt ratio peaks,
$\mathrm{SNR}^2(p)=s(p)^2/v(p)\propto s(p)^2\,p(1-p)$. For $s=e^{-cp}$:
\begin{center}
\begin{tabular}{lcccc}
\toprule
$s=e^{-cp}$ & sweet spot $p^\star$ & $\alpha\,(G{=}\infty)$ & $\alpha\,(G{=}16)$ & $\alpha\,(G{=}8)$\\
\midrule
$c=0$ (flat)      & 0.50 & 1.00 & 0.90 & 0.92\\
$c=0.5$ (mild)    & 0.38 & 1.20 & 1.06 & 1.06\\
$c=1$ (moderate)  & 0.29 & 1.44 & 1.26 & 1.22\\
$c=2$ (strong)    & 0.19 & 2.09 & 1.74 & 1.64\\
\bottomrule
\end{tabular}
\end{center}
The flat case peaks at $p=\tfrac12$; a mild-to-moderate slope ($c\approx0.5$--$1$)
moves the sweet spot into $[0.3,0.5]$, with $\alpha\approx1.2$--$1.45$ for known
weights falling to $\approx1.05$-$1.25$ once the weight-estimation correction is
included ($G{=}8$--$16$).

\section{Weight-Space Analysis Details}
\label{sec:weight_space_details}

\subsection{Rank-1 Collapse Theory}
\label{sec:proof_rank1}

The per-token policy gradient with advantage $A_i$ is $g_i = A_i\, v_i \otimes h_i$, where $v_i = e_{y_i} - \pi_i \in \mathbb{R}^V$ and $h_i \in \mathbb{R}^d$ is the hidden state. Decomposing $h_i = \alpha_i u_1 + h_i^\perp$ along the top right singular vector $u_1$ of $G = \mathbb{E}[g_i]$ separates the batch gradient into a rank-1 signal and a higher-rank residual:
\begin{equation}
  W_\Delta
  \;=\; \sum_{i=1}^{N} A_i\, v_i \otimes h_i
  \;=\;
  \underbrace{\Bigl(\sum_{i=1}^N A_i \alpha_i\, v_i\Bigr) \otimes u_1}_{
    M_1\;\text{(rank 1)}}
  \;+\;
  \underbrace{\sum_{i=1}^N A_i\, v_i \otimes h_i^\perp}_{
    M_2\;\text{(higher rank)}}.
\end{equation}
We quantify the degree of rank-1 collapse with $r_1 = \sigma_1^2(W_\Delta) / \|W_\Delta\|_F^2$, the fraction of the update's Frobenius energy captured by its leading singular component: $r_1 = 1$ means $W_\Delta$ is exactly rank-1.

\begin{proposition}\label{prop:rank1}
  Let $\{(A_i, v_i, h_i)\}_{i=1}^N$ be i.i.d.\ with $h_i = \alpha_i u_1 + h_i^\perp$, $h_i^\perp \perp u_1$, $u_1$ the top right singular vector of $G$,
  $\mathbb{E}[A_i^2 \|v_i\|^2 \|h_i\|^2] < \infty$, and $\mathbb{E}[A_i \alpha_i v_i] \neq 0$.
  Then
  \begin{equation}\label{eq:r1_formula}
    r_1 \;\xrightarrow{N \to \infty}\;
    \frac{\|\mathbb{E}[A_i \alpha_i v_i]\|^2}
         {\|\mathbb{E}[A_i \alpha_i v_i]\|^2 + \|R\|_F^2}\,,
  \end{equation}
  where $R = \mathbb{E}[A_i\, v_i \otimes h_i^\perp]$ is the higher-rank residual. In particular, $r_1 \to 1$ iff $R = 0$.
\end{proposition}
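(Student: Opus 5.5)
The plan is to exploit the scale invariance of $r_1$ and pass to the population limit by the law of large numbers. Since $r_1(cW) = r_1(W)$ for any $c \neq 0$, I will work with the normalized batch update $\tfrac1N W_\Delta = \tfrac1N\sum_{i=1}^N g_i$, where $g_i = A_i\, v_i \otimes h_i$. The moment assumption $\mathbb{E}[A_i^2\|v_i\|^2\|h_i\|^2] < \infty$ gives $\mathbb{E}\|g_i\|_F^2 < \infty$, because $\|v\otimes h\|_F = \|v\|\,\|h\|$. Hence $\mathbb{E}\|g_i\|_F < \infty$, and the strong law of large numbers in the finite-dimensional space of $V\times d$ matrices gives $\tfrac1N W_\Delta \to G = \mathbb{E}[g_i]$ almost surely. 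Applying linearity of expectation to the decomposition $h_i = \alpha_i u_1 + h_i^\perp$ then yields
\begin{equation*}
G = m \otimes u_1 + R, \qquad m := \mathbb{E}[A_i \alpha_i v_i], \quad R = \mathbb{E}[A_i\, v_i \otimes h_i^\perp].
\end{equation*}
The first term is the limit of $M_1/N$ and the second is the limit of $M_2/N$.

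Next I compute the two ingredients of $r_1$ for the limit $G$.

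\emph{Frobenius energy.} Every $h_i^\perp$ is orthogonal to $u_1$, so $R u_1 = 0$: the row space of $R$ lies in $u_1^\perp$. The Frobenius inner product $\langle m\otimes u_1, R\rangle_F = m^\top R\, u_1$ therefore vanishes, and Pythagoras gives
\begin{equation*}
\|G\|_F^2 = \|m\|^2\|u_1\|^2 + \|R\|_F^2 = \|m\|^2 + \|R\|_F^2 .
\end{equation*}

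\emph{Top singular value.} This is the step where the hypothesis that $u_1$ is the top right singular vector of $G$ is used; it is the main point to handle carefully. Since $u_1$ is a unit top right singular vector, $\sigma_1(G) = \|G u_1\|$. Moreover $G u_1 = m\,(u_1^\top u_1) + R u_1 = m$, so $\sigma_1(G)^2 = \|m\|^2$. Note that the statement fixes $u_1$ relative to the population matrix $G$, not to each finite batch. This is what makes the decomposition a deterministic one to which the law of large numbers applies term by term.

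Finally, I conclude by continuity. The maps $W \mapsto \sigma_1(W)$ and $W \mapsto \|W\|_F$ are continuous; $\sigma_1$ is even $1$-Lipschitz in operator norm by Weyl's inequality. The denominator $\|G\|_F^2 \ge \|m\|^2$ is strictly positive because $m \neq 0$ by assumption. Hence $r_1(W_\Delta) = r_1(W_\Delta/N) \to \sigma_1(G)^2/\|G\|_F^2$ almost surely, which is exactly \eqref{eq:r1_formula}. For the ``iff'' clause: because $\|m\|^2 > 0$, the limit equals $1$ exactly when $\|R\|_F^2 = 0$, that is, when $R = 0$.
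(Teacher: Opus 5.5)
Your proof is correct and follows essentially the same route as the paper: the SLLN gives $W_\Delta/N \to G$, the relation $R u_1 = 0$ gives the orthogonal split $\|G\|_F^2 = \|m\|^2 + \|R\|_F^2$, the singular-vector hypothesis gives $\sigma_1(G) = \|G u_1\| = \|m\|$, and continuity of $\sigma_1$ (via Weyl) and of $\|\cdot\|_F$ finishes the argument. Your version is slightly leaner than the paper's, which additionally proves finite-$N$ orthogonality of $M_1, M_2$ and the separate convergence of each piece before doing the same population-level computation, and you also state explicitly that $m \neq 0$ keeps the limiting denominator positive.
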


\begin{proof}
  \emph{Orthogonality.}\;
  Since $h_i^\perp \perp u_1$ for all~$i$, we have $M_2\, u_1 = \sum_i A_i v_i (h_i^{\perp\top} u_1) = 0$.
  Therefore $\langle M_1, M_2 \rangle_F = w^\top (M_2\, u_1) = 0$, giving $\|W_\Delta\|_F^2 = \|M_1\|_F^2 + \|M_2\|_F^2$.

  \emph{Signal.}\;
  $M_1 = w \otimes u_1$ where $w = \sum_i A_i \alpha_i v_i$.
  Since $u_1$ is a fixed (non-random) direction, the summands $A_i \alpha_i v_i$ are i.i.d.\ with finite variance ($\mathbb{E}[A_i^2 \alpha_i^2 \|v_i\|^2] \leq \mathbb{E}[A_i^2 \|v_i\|^2 \|h_i\|^2] < \infty$).
  By Kolmogorov's strong law of large numbers, $w/N \to \mathbb{E}[A_i \alpha_i v_i]$ a.s., so $\|M_1\|_F^2 / N^2 = \|w/N\|^2 \to \|\mathbb{E}[A_i \alpha_i v_i]\|^2$.

  \emph{Residual.}\;
  Each entry of $M_2 / N = \frac{1}{N}\sum_i A_i v_i \otimes h_i^\perp$ converges a.s.\ to the corresponding entry of $R = \mathbb{E}[A_i v_i \otimes h_i^\perp]$ by the same SLLN argument (finite variance is guaranteed by the second-moment condition).
  By continuity of $\|\cdot\|_F$, $\|M_2\|_F^2 / N^2 \to \|R\|_F^2$ a.s.

  \emph{Combine.}\;
  By the SLLN, $W_\Delta / N \to G$ entry-wise a.s.
  Since $G\, u_1 = \mathbb{E}[A_i \alpha_i v_i]$ (the residual contributes nothing: $R\, u_1 = 0$), $u_1$ is the top right singular vector of~$G$ with singular value $\sigma_1(G) = \|\mathbb{E}[A_i \alpha_i v_i]\|$.
  By continuity of singular values (Weyl's inequality: $|\sigma_1(W_\Delta/N) - \sigma_1(G)| \leq \|W_\Delta/N - G\|_F \to 0$), $\sigma_1(W_\Delta/N) \to \sigma_1(G)$.
  Therefore $r_1 = \sigma_1^2(W_\Delta/N) / \|W_\Delta/N\|_F^2 \to \sigma_1^2(G) / \|G\|_F^2$, which equals Eq.~\eqref{eq:r1_formula} by the orthogonal decomposition $\|G\|_F^2 = \|\mathbb{E}[A_i \alpha_i v_i]\|^2 + \|R\|_F^2$.
\end{proof}

\begin{corollary}\label{cor:rho_rank1}
  Suppose failures arise from $K$ independent reasoning modes with equal probability $1/K$, and let $\mu_k = \mathbb{E}[h_i^\perp \mid \text{mode}\; k]$ denote the mean perpendicular hidden state of mode $k$.
  Split the residual by outcome: $R = p\, R_S + \frac{q}{\delta}\, R_F$.
  The failure residual decomposes as $R_F = \frac{1}{K}\sum_{k=1}^{K} \mathbb{E}[A_i\, v_i \mid k] \otimes \mu_k$.
  If the $\mu_k$
  If the $\mu_k$ are mutually orthogonal with $\|\mu_k\| = \mu$ and $\|\mathbb{E}[A_i v_i \mid k]\| = c$ for all $k$, then $\|R_F\|_F^2 = c^2 \mu^2 / K$, so $\|R_F\| \sim 1/\sqrt{K}$: more diverse failure modes yield a smaller residual.
  Setting $\delta < 1$ amplifies the weight $q/\delta$ on this shrinking $R_F$ relative to the surviving $R_S$ from correlated successes, accelerating the collapse $r_1 \to 1$.
\end{corollary}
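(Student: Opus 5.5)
The plan is to treat Corollary~\ref{cor:rho_rank1} as a direct specialization of Proposition~\ref{prop:rank1}. The proposition already reduces $r_1$ to the ratio $\|\mathbb{E}[A_i\alpha_i v_i]\|^2/(\|\mathbb{E}[A_i\alpha_i v_i]\|^2+\|R\|_F^2)$. So it suffices to compute $R$ under the mode model and show that $\|R_F\|_F$ shrinks with $K$.

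\textbf{Outcome split.} Under the AsymGRPO weights the advantage is $A_i = q$ on successes (probability $p$) and $-p/\delta$ on failures (probability $q$). Conditioning on $r_i$ and applying the law of total expectation gives
\[
R = \mathbb{E}[A_i v_i\otimes h_i^\perp] = p\,R_S + \tfrac{q}{\delta}\,R_F,
\]
where $R_S$ and $R_F$ are conditional expectations of $v_i\otimes h_i^\perp$ with the outcome-independent scalar factors absorbed. I would make this normalization explicit, since the statement writes the factor as $q/\delta$.

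\textbf{Mode decomposition.} Inside the failure event, condition on the mode $k$, which has probability $1/K$. The assumption needed is that $A_i v_i$ and $h_i^\perp$ are conditionally independent given the mode; this is what "independent reasoning modes" should mean. Under it, $\mathbb{E}[A_i v_i\otimes h_i^\perp\mid k] = \mathbb{E}[A_i v_i\mid k]\otimes\mu_k$, and averaging over modes gives $R_F=\frac1K\sum_k c_k\otimes\mu_k$ with $c_k:=\mathbb{E}[A_i v_i\mid k]$. This conditional-independence step is the one genuine modelling assumption, and I would state it as an interpretation of the hypothesis.

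\textbf{Frobenius norm.} Use $\langle a\otimes b, a'\otimes b'\rangle_F=\langle a,a'\rangle\langle b,b'\rangle$. Because the $\mu_k$ are mutually orthogonal, every cross term vanishes regardless of the $c_k$. Hence
\[
\|R_F\|_F^2=\frac1{K^2}\sum_k\|c_k\|^2\|\mu_k\|^2=\frac{1}{K^2}\cdot K c^2\mu^2=\frac{c^2\mu^2}{K},
\]
so $\|R_F\|_F\sim 1/\sqrt K$.

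\textbf{Effect of $\delta$.} Substitute $\|R\|_F \le p\|R_S\|_F+\frac{q}{\delta}\|R_F\|_F$ into the proposition's formula. The success contribution $R_S$ does not shrink with $K$, because correct answers share a single mode. When $\delta<1$ and the failure mass also enters $\mathbb{E}[A_i\alpha_i v_i]$, the $\frac{q}{\delta}$ scaling grows the signal linearly while the residual's failure part is damped by $1/\sqrt K$. The ratio $\|R\|_F^2/\|\mathbb{E}[A_i\alpha_i v_i]\|^2$ therefore decreases and $r_1$ increases.

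\textbf{Main obstacle.} The last claim is only qualitative, so this is where rigor will be hardest. Making it precise needs an assumption that the failure contribution to the shared-direction signal $\mathbb{E}[A_i\alpha_i v_i\mid F]$ does not cancel across modes, for example that all modes share the common suppression direction along $u_1$. Given that, I would show that $r_1$ is monotone decreasing in $\delta$ for fixed $K$ and tends to $1$ as $K\to\infty$ with $\delta$ small. The step itself is simple calculus on a ratio of the form $(a+b/\delta)^2/\bigl((a+b/\delta)^2+(x+y/(\delta\sqrt K))^2\bigr)$.
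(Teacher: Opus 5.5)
Your derivation of the quantitative part is correct and matches the paper's own reasoning. The paper gives no argument beyond the computation embedded in the statement: factor $\mathbb{E}[A_i v_i\otimes h_i^\perp\mid k]=\mathbb{E}[A_iv_i\mid k]\otimes\mu_k$, then let orthogonality of the $\mu_k$ kill the cross terms, which gives $\|R_F\|_F^2=c^2\mu^2/K$. Flagging within-mode uncorrelatedness of $A_iv_i$ and $h_i^\perp$ is a real improvement. Note that it is an idealization rather than automatic, since $v_i=e_{y_i}-\pi_i$ depends on $h_i$ through the softmax; it is harmless if $h_i^\perp$ has small spread around $\mu_k$ within each mode.

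The step that would fail is your plan for the last sentence, for two reasons.

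First, $u_1$ is the top right singular vector of $G=pG_S+\tfrac{q}{\delta}G_F$, so it moves with $\delta$. With it move $h_i^\perp$, the $\mu_k$, $R_S$, $R_F$ and the signal, so calculus on a ratio with fixed coefficients is not directly licensed. Instead, fix a unit direction $u$ (e.g.\ the shared suppression direction) and use $\sigma_1(G)\ge\|Gu\|$. This gives $r_1\ge\|Gu\|^2/(\|Gu\|^2+\|G(I-uu^\top)\|_F^2)$, whose outcome blocks are $\delta$-independent. Under your non-cancellation assumption this bound tends to $1$ as $\delta\to0$ and $K\to\infty$, which is the rigorous content of ``accelerating the collapse''. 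It is only a lower bound, however (as is your triangle-inequality substitution), so it cannot give monotonicity of $r_1$.

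Second, monotonicity is false in general even for your own ratio. With $t=1/\delta$, the derivative $\frac{d}{dt}\frac{x+yt/\sqrt K}{a+bt}$ has the sign of $ay/\sqrt K-bx$. So lowering $\delta$ raises $r_1$ only when the failure block's residual-to-signal ratio $y/(b\sqrt K)$ is below the success block's $x/a$, i.e.\ $K>(ay/(bx))^2$; with fewer modes, $\delta<1$ \emph{lowers} it. If the success and failure residuals are Frobenius-orthogonal, the exact ratio $(x^2+y^2t^2/K)/(a+bt)^2$ is minimized at $t^*=bx^2K/(ay^2)$. Then $r_1$ peaks at $\delta^*=ay^2/(bx^2K)$ and falls again for $\delta<\delta^*$, so it is never monotone decreasing on all of $(0,1)$.

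The defensible version of the paper's qualitative claim is therefore conditional. Setting $\delta<1$ increases $r_1$ once $K$ is large relative to these ratios, which is exactly what the $1/\sqrt K$ scaling of $\|R_F\|_F$ buys, and $r_1\to1$ in the joint limit. You should drop ``monotone in $\delta$ for fixed $K$''.
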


Table \ref{tab:rank1_main} highlights the rank 1 funnel effect.

\begin{table}[t]
\centering
\footnotesize
\caption{Rank-1 dominance of the output head $\Delta W$. \textbf{(a)}~At 500 steps, all methods are rank-1 dominant. \textbf{(b)}~Over longer training, sign-biased methods maintain rank-1 while sign-balanced methods lose it; the effect weakens with model scale.}
\label{tab:rank1_main}
\begin{minipage}[t]{0.28\textwidth}
\centering
\textbf{(a) 500 steps (7B)}\\[4pt]
\begin{tabular}{@{}lc@{}}
\toprule
Method & Rank-1\% \\
\midrule
REINFORCE & 95.4\% \\
GRPO & 91.4\% \\
AsymGRPO & 92.4\% \\
Power $\alpha{=}2$ & 83.9\% \\
\bottomrule
\end{tabular}
\end{minipage}%
\hfill
\begin{minipage}[t]{0.70\textwidth}
\centering
\textbf{(b) Converged}\\[4pt]
\begin{tabular}{@{}llcccc@{}}
\toprule
Scale & Method & Sym & OH frac & $s_1/s_2$ & Rank-1\% \\
\midrule
7B 8k  & AsymGRPO($\delta{=}0.3$) & A & $\sim$97\% & 7.07 & $\sim$78\% \\
7B 16k & AsymGRPO($\delta{=}0.5$) & A & 88.5\% & 6.96 & 96\% \\
7B 8k  & REINFORCE                & A & ---        & 8.51 & $\sim$81\% \\
\midrule
7B 8k  & GRPO(mean)               & S & $<$10\%   & 3.98 & 62\% \\
7B 8k  & PA($\alpha{=}2$)         & S & $<$10\%   & 3.61 & 61\% \\
14B    & GRPO(mean)               & S & 3.2\%     & 4.52 & 11\% \\
\midrule
32B    & AsymGRPO($\delta{=}0.8$) & A & 3.96\%    & 8.87 & 45.6\% \\
32B    & GRPO(mean)               & S & 0.69\%    & 2.24 & 2.2\% \\
\bottomrule
\end{tabular}
\end{minipage}
\end{table}

We analyze how RL training with different advantage functions modifies the weights across three model scales:
\begin{itemize}[nosep]
    \item Qwen~2.5~7B: $d{=}3584$, 28~layers, 28~heads, 4~KV heads, vocab${=}152{,}064$. 26 runs at 8k context, 8 runs at 16k.
    \item Qwen~2.5~14B: $d{=}5120$, 48~layers, 40~heads, 8~KV heads, vocab${=}152{,}064$. 1 run (GRPO mean), 5 checkpoints.
    \item CWM~32B: $d{=}6144$, 64~layers, 48~heads, vocab${=}128{,}256$. 6 runs.
\end{itemize}

\paragraph{Analysis techniques.}
For each run we compute $\Delta = W_\mathrm{rl} - W_\mathrm{sft}$ and measure global $L_2$ distance, per-layer $L_2$ decomposition, pairwise cosine similarity of weight-change vectors with hierarchical clustering, full SVD of the output head delta, rank-1 decomposition $\Delta \approx s_1\, u\, v^\top$ with cross-run alignment of $u/v$ vectors, and per-layer SVD across all weight matrices.

\paragraph{Hidden-state analysis.}
To measure the geometric structure of correct vs.\ failed trajectories, we extract the last-layer hidden states $h_i \in \mathbb{R}^d$ for all tokens across a batch and project out the top right singular vector $u_1$ of the batch gradient, yielding the perpendicular component $h_i^\perp = h_i - (h_i^\top u_1) u_1$. We then compute the mean pairwise cosine similarity $\hat\rho$ of these $h^\perp$ vectors separately for correct and failed trajectory groups. This measures how much hidden-state structure survives beyond the dominant rank-1 direction. We also compute group-conditional residual norms $\|R_\text{group}\|_F = \|\frac{1}{N}\sum_{i \in \text{group}} v_i \otimes h_i^\perp\|_F$, where $v_i = e_{y_i} - \pi_i$ is the softmax error vector.\footnote{These norms are unweighted by advantages ($\mathbb{E}[v \otimes h^\perp \mid \text{group}]$ rather than $\mathbb{E}[Av \otimes h^\perp \mid \text{group}]$), since advantages vary per-problem and across advantage functions.  The projection direction is the top right singular vector of~$H$, which may differ from~$u_1$ in Proposition~\ref{prop:rank1}.}

The rank-1 left singular vector $u \in \mathbb{R}^{|\mathcal{V}|}$ of the output head delta reveals a universal ``suppress non-code'' direction shared across sign-biased runs: 99.93--99.97\% of tokens have negative $u$ values (Chinese, Russian, Thai, Arabic text; non-code identifiers), $u$ is not sparse (50\% of energy in ${\sim}$22\% of vocab), and $u$ vectors are nearly identical across sign-biased runs ($\cos = 0.93$--$0.99$) and context lengths ($\cos = 0.82$--$0.96$).

\subsection{Empirical Verification of Rank-1 Collapse}

Correct trajectories are $3$-$4{\times}$ more correlated than failed ones in the perpendicular subspace, consistent across methods and training steps (Table~\ref{tab:rho}).

\begin{table}[h]
\centering
\caption{Perpendicular correlation $\hat\rho$ by trajectory outcome.}
\label{tab:rho}
\begin{tabular}{llcccc}
\toprule
Method & Step & $\hat\rho_\text{correct}$ &
  $\hat\rho_\text{failed}$ & Ratio &
  $r_\text{eff}^\perp$ (C / F) \\
\midrule
AsymGRPO & 300   & 0.0177 & 0.0046 & 3.8$\times$ & 614 / 727 \\
AsymGRPO & 3000  & 0.0193 & 0.0057 & 3.4$\times$ & 644 / 765 \\
AsymGRPO & 10000 & 0.0157 & 0.0054 & 2.9$\times$ & 674 / 797 \\
AsymGRPO & 15000 & 0.0126 & 0.0032 & 3.9$\times$ & 699 / 824 \\
GRPO     & 8000  & 0.0121 & 0.0050 & 2.4$\times$ & 706 / 746 \\
GRPO     & 60000 & 0.0294 & 0.0067 & 4.4$\times$ & 913 / 948 \\
\bottomrule
\end{tabular}
\end{table}

\paragraph{Residual norms and CLT baseline.}
\label{sec:clt_residual}
For each group $g \in \{\text{success}, \text{failure}\}$ with $N_g$ tokens, we compute the group-conditional residual
$R_g = \frac{1}{N_g}\sum_{i \in g} v_i \otimes h_i^\perp,$
where $v_i = e_{y_i} - \pi_i$ and $h_i^\perp$ is the hidden state with the top SVD direction projected out.

Under the null hypothesis that both groups have the same per-token covariance structure and differ only in sample size, each entry of $R_g$ is an average of $N_g$ i.i.d.\ terms with common variance $\sigma^2$.  By the CLT, $\|R_g\|_F^2 \approx Vd \cdot \sigma^2 / N_g$ (where $V$ is the vocabulary size and $d$ the hidden dimension), so the ratio of residual norms scales as
$\frac{\|R_F\|}{\|R_S\|} \approx \sqrt{\frac{N_S}{N_F}},$
With $N_S = 8{,}968$ and $N_F = 54{,}700$ in our batches during the AsymGRPO $\delta=0.5$ training with Qwen 2.5 7B, this gives $\sqrt{N_S/N_F} \approx 0.40$.  The measured ratio is $\|R_F\|/\|R_S\| = 0.73$--$0.79$ (Table~\ref{tab:residual}), nearly twice the CLT prediction.  This confirms that the gap is not an artifact of having more failure tokens: failures are genuinely more decorrelated in the perpendicular subspace, producing a smaller residual per token than successes do.

\begin{table}[h]
\centering
\caption{Group-conditional residual norms for the AsymGRPO $\delta=0.5$ run where $S$, $F$ are the set of correct and incorrect trajectories at a given timestep over a subset of samples from the training set.}
\label{tab:residual}
\begin{tabular}{lcccc}
\toprule
Step & $\|R_S\|_F$ & $\|R_F\|_F$ & $\|R_F\|/\|R_S\|$ & $|S|$ / $|F|$ \\
\midrule
3000  & 1.97 & 1.51 & 0.77 & 8{,}968 / 54{,}700 \\
10000 & 2.00 & 1.46 & 0.73 & 8{,}968 / 54{,}700 \\
15000 & 2.02 & 1.60 & 0.79 & 8{,}968 / 54{,}700 \\
\bottomrule
\end{tabular}
\end{table}

\paragraph{Causal verification.}
Since the rank-1 signal originates at the output head and propagates backward (Figure~\ref{fig:svd_evolution}), Proposition~\ref{prop:rank1} predicts that asymmetric learning should be concentrated in the last few layers.  Periodically resetting the last $4$ layers toward SFT weights erases most of AsymGRPO's gains (with $\delta = 0.5$), with pass@$1$ dropping back to the starting policy performance, while symmetric GRPO, with more distributed learning, benefits from the reset.  This trade-off is scale-dependent: at 32B the rank-1 fraction drops from 78--96\% to 45.6\% for AsymGRPO (Table~\ref{tab:rank1_main}), and performance gaps between symmetric and asymmetric methods shrink accordingly (Tables~\ref{tab:pass_at_k_7b}~and~\ref{tab:pass_at_k_32b}).

\begin{corollary}[Cross-step accumulation]\label{cor:cross_step}
  If at each optimization step~$t$ the batch gradient at the output head satisfies $g_t \approx w_t \otimes u$ for a shared direction~$u \in \mathbb{R}^d$, then the accumulated weight update $W_\Delta = \sum_t \eta_t g_t \approx (\sum_t \eta_t w_t) \otimes u$ is exactly rank-1.
\end{corollary}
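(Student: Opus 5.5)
The argument is linearity of summation followed by the outer-product identity; we then check that the resulting matrix has rank at most one. We start from the hypothesis that at each step $t$ the output-head batch gradient factors as $g_t = w_t \otimes u + E_t$. Here $w_t \in \mathbb{R}^V$ is a step-dependent left vector, $u \in \mathbb{R}^d$ is a direction shared across steps, and $E_t$ is the residual that the ``$\approx$'' discards. By Proposition~\ref{prop:rank1} applied per step, $E_t$ is exactly the higher-rank term $M_2$ at step $t$, and $\|E_t\|_F$ is small relative to $\|g_t\|_F$ whenever the per-step $r_1$ is close to $1$. Writing the accumulated update as $W_\Delta = \sum_t \eta_t g_t$ (plain gradient steps from $W_{\mathrm{sft}}$, as in the statement), we first treat the exact case $E_t = 0$.

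In the exact case, the map $w \mapsto w \otimes u = w u^\top$ is linear in $w$ for fixed $u$. Hence $\sum_t \eta_t (w_t \otimes u) = \bigl(\sum_t \eta_t w_t\bigr) \otimes u$. Every column of this matrix is a multiple of $\bar w := \sum_t \eta_t w_t$, so its rank is at most $1$. It equals $1$ unless $\bar w = 0$, so we state the nondegeneracy $\bar w \neq 0$ explicitly; it is what makes ``exactly rank-1'' rather than rank $0$. In this case $\sigma_1(W_\Delta) = \|\bar w\|\,\|u\|$ and $r_1 = 1$.

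For the approximate version, we write $W_\Delta = \bar w \otimes u + \bar E$ with $\bar E = \sum_t \eta_t E_t$. The triangle inequality gives $\|\bar E\|_F \le \sum_t \eta_t \|E_t\|_F$. Weyl's inequality, used as in the proof of Proposition~\ref{prop:rank1}, then gives $|\sigma_1(W_\Delta) - \|\bar w\|\|u\|| \le \|\bar E\|_F$ and $\sigma_k(W_\Delta) \le \|\bar E\|_F$ for $k \ge 2$. From this, $r_1 \ge 1 - O\bigl(\|\bar E\|_F^2 / \|\bar w\|^2\bigr)$.

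The main obstacle is cancellation. The shared direction $u$ makes the signal add coherently, but $\|\bar w\|$ could still be small if the $w_t$ point in opposing directions. Meanwhile the residuals $E_t$ may either cancel (if they are decorrelated across steps, giving $\|\bar E\|_F = O(\sqrt{T})$) or accumulate. I would therefore add the assumption, supported empirically in Section~\ref{sec:rank1_collapse}, that successive $w_t$ have positive cosine similarity. With $\langle w_s, w_t\rangle \ge c\|w_s\|\|w_t\|$ we get $\|\bar w\| \ge \sqrt{c}\sum_t \eta_t\|w_t\|$. The signal then grows linearly in the accumulated step size while the residual bound grows at most linearly with a small constant, so the rank-1 conclusion survives accumulation.
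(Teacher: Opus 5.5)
Your exact-case argument is the paper's own reasoning; the paper treats the corollary as immediate from linearity of $w \mapsto w \otimes u$, i.e.\ $\sum_t \eta_t (w_t \otimes u) = \bigl(\sum_t \eta_t w_t\bigr) \otimes u$, and your explicit nondegeneracy condition $\bar w \neq 0$ correctly sharpens ``exactly rank-1.'' Your optional perturbative extension needs two small fixes: the per-index Weyl bounds $\sigma_k(W_\Delta) \le \|\bar E\|_F$ give the claimed $1 - r_1 = O(\|\bar E\|_F^2/\|\bar w\|^2)$ only up to a dimension factor, so use Eckart--Young (or Mirsky), $\sum_{k \ge 2} \sigma_k^2(W_\Delta) \le \|W_\Delta - \bar w \otimes u\|_F^2 = \|\bar E\|_F^2$, instead; and your bound $\|\bar w\| \ge \sqrt{c}\sum_t \eta_t \|w_t\|$ requires the cosine condition for all pairs $s,t$, not only for successive steps.
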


\noindent The corollary gives a sufficient condition: if gradient directions align across steps, then rank-1 structure compounds. Empirically, cosine similarity between successive batch-gradient directions at the output head converges to $|\cos| \approx 1.0$ from step~${\sim}600$ onward for asymmetric methods, confirming the condition during the exploitation phase. For symmetric methods the alignment remains lower, consistent with the higher effective rank of their accumulated updates.

\subsection{Alignment Dynamics in GRPO}
\label{sec:dyn_grpo}

Tracking a GRPO run at Qwen~2.5~7B across 16 checkpoints (step 300 to 15{,}000), cross-checkpoint direction alignment (Table~\ref{tab:grpo_transition}) reveals a clear exploration-to-exploitation transition. The output head direction converges by step 1{,}200, while inner layers remain exploratory until step ${\sim}6{,}000$. After that, the model scales up the same directions with diminishing magnitude. All values are $|\cos|$ of the leading singular vector at each checkpoint's $W_{\Delta} = W_{sft} - W_{rl}$ versus the final (step 15{,}000) checkpoint: $u$ (left/output-token direction) for the output head, and the layer-averaged $|\cos(u)|$ for the transformer blocks grouped into early/mid/late thirds.

\begin{table}[t]
\centering
\caption{Cross-checkpoint direction alignment for a GRPO run (Qwen~2.5~7B, no-KL, 8k).
Each entry is $|\cos|$ between the leading singular vector of $\Delta W=W_t-W_{\text{SFT}}$
at step $t$ and at the final step ($15{,}000$). The output-head direction converges by
step~$1{,}200$ ($|\cos(u)|=0.97$), while inner layers remain exploratory
($|\cos(u)|<0.25$) until step~${\sim}6{,}000$, after which the model rescales the same
directions with diminishing magnitude.}
\label{tab:grpo_transition}
\begin{tabular}{r cccc}
\toprule
& Output head & \multicolumn{3}{c}{Inner layers (mean $|\cos(u)|$)} \\
\cmidrule(lr){2-2}\cmidrule(lr){3-5}
Step & $|\cos(u)|$ & L0--9 & L10--19 & L20--27 \\
\midrule
300    & 0.82 & 0.03 & 0.02 & 0.03 \\
600    & 0.90 & 0.08 & 0.06 & 0.07 \\
800    & 0.95 & 0.13 & 0.12 & 0.10 \\
1{,}200  & \textbf{0.97} & 0.21 & 0.18 & 0.22 \\
2{,}000  & 0.98 & 0.34 & 0.31 & 0.38 \\
2{,}700  & 0.98 & 0.42 & 0.38 & 0.44 \\
4{,}000  & 0.99 & 0.51 & 0.50 & 0.54 \\
6{,}000  & 0.99 & 0.64 & 0.64 & 0.65 \\
8{,}000  & 0.99 & 0.75 & 0.75 & 0.76 \\
10{,}000 & 1.00 & 0.83 & 0.83 & 0.85 \\
12{,}000 & 1.00 & 0.89 & 0.91 & 0.91 \\
14{,}000 & 1.00 & \textbf{0.97} & \textbf{0.97} & \textbf{0.97} \\
\bottomrule
\end{tabular}
\end{table}

\end{document}